\documentclass{article} %
\usepackage{iclr2025_conference,times}

\usepackage[utf8]{inputenc} %
\usepackage[T1]{fontenc}    %
\usepackage{hyperref}       %
\usepackage{url}            %
\usepackage{booktabs}       %
\usepackage{amsfonts}       %
\usepackage{microtype}      %
\usepackage{enumitem}
\usepackage{graphicx}
\usepackage{natbib}
\usepackage{doi}
\usepackage{wrapfig}
\usepackage{caption}
\usepackage{subcaption}
\usepackage{float}
\usepackage{multicol}
\usepackage{multirow}

\usepackage{amsmath,amssymb,amsthm,mathabx,mathtools} %
\usepackage{bbold} %
\usepackage{nicefrac}       %
\usepackage{thmtools}       %
\usepackage{thm-restate}    %
\usepackage[capitalise,noabbrev]{cleveref}    %
\usepackage[ruled,linesnumbered]{algorithm2e} %

\usepackage{amsmath,amsfonts,bm}

\def\eqref#1{equation~\ref{#1}}

\def\1{\bm{1}}

\def\eps{{\epsilon}}

\def\vtheta{{\bm{\theta}}}

\def\ve{{\bm{e}}}

\def\vh{{\bm{h}}}

\def\vx{{\bm{x}}}

\def\mI{{\bm{I}}}

\DeclareMathAlphabet{\mathsfit}{\encodingdefault}{\sfdefault}{m}{sl}
\SetMathAlphabet{\mathsfit}{bold}{\encodingdefault}{\sfdefault}{bx}{n}

\DeclareMathOperator*{\argmin}{arg\,min}

\usepackage{amssymb,amsthm,mathabx,mathtools}
\usepackage{bbold} %
\usepackage{nicefrac}       %
\usepackage{thmtools}       %
\usepackage{thm-restate}    %

\newcommand{\RR}{\mathbb{R}}
\newcommand{\CC}{\mathbb{C}}
\newcommand{\KK}{\mathbb{K}}
\newcommand{\NN}{\mathbb{N}}

\newcommand{\cC}{\mathcal{C}}
\newcommand{\cG}{\mathcal{G}}

\newcommand{\cF}{\mathcal{F}}
\newcommand{\cH}{\mathcal{H}}
\newcommand{\cL}{\mathcal{L}}
\newcommand{\cM}{\mathcal{M}}

\newcommand{\cT}{\mathcal{T}}

\newcommand{\dL}{\textrm{L}}
\newcommand{\dH}{\textrm{H}}
\newcommand{\dd}{\textrm{d}}

\newcommand{\braket}[2]{\left\langle #1\,,\, #2\right\rangle}
\newcommand{\norm}[1][\cdot]{\left\|#1\right\|}

\DeclareMathOperator*\Span{Span}
\DeclareMathOperator\Ker{Ker}
\DeclareMathOperator\Ima{Im}

\def\forallt{\textrm{for all }}

\declaretheorem[name=Definition,style=definition]{Def}
\declaretheorem[name=Proposition,style=plain]{Prop}

\declaretheorem[name=Corollary,style=plain]{Cor}
\declaretheorem[name=Remark,style=remark]{Rk}

\usepackage{environ}
\newcommand\literallabel[2]{%
  \label{#1}
  \expandafter\gdef\csname literallabel_#1\endcsname{#2}
  #2
}
\NewEnviron{literaleq}[1]{%
  \begin{equation}
  \label{#1}
  \expandafter\gdef\csname literallabel_#1\expandafter\endcsname\expandafter{\BODY}
  \BODY
  \end{equation}%
}
\newcommand\literalref[1]{\csname literallabel_#1\endcsname}

\NewEnviron{literalign}[1]{%
  \begin{align}
  \label{#1}
  \expandafter\gdef\csname literallabel_#1\expandafter\endcsname\expandafter{\BODY}
  \BODY
  \end{align}
}
\newcommand\literalignref[1]{\csname literallabel_#1\endcsname}

\newcommand{\ie}{\textit{i.e.}\ }
\newcommand{\eg}{\textit{e.g.}\ }
\newcommand{\cf}{\textit{cf.}\ }

\newcommand{\apriori}{\textit{a priori}\,}
\newcommand{\adhoc}{\textit{ad hoc}\,}
\newcommand{\via}{\textit{via}~}
\newcommand{\interalia}{\textit{inter alia}~}

\title{ANaGRAM: A Natural Gradient Relative to Adapted Model for efficient PINNs learning}

\author{Nilo~Schwencke\\
Laboratoire Interdisciplinaire des Sciences du Numérique (LISN)\\
Paris-Saclay University\\
91190, Gif-sur-Yvette, France\\
\texttt{nilo.schwencke@protonmail.com} \\
\AND
Cyril Furtlehner \\
INRIA-Saclay, LISN \\
Paris-Saclay University\\
91190, Gif-sur-Yvette, France\\
\texttt{cyril.furtlehner@inria.fr}
}

\iclrfinalcopy %
\begin{document}

\maketitle

\begin{abstract}
In the recent years, Physics Informed Neural Networks (PINNs) have received strong interest as a method to solve PDE driven systems, in particular for data assimilation purpose. This method is still in its infancy, with many shortcomings and failures that remain not properly understood.
In this paper we propose a natural gradient approach to PINNs which contributes to speed-up and improve the accuracy of the training.
Based on an  in depth analysis of the differential geometric structures of the problem, we come up with two distinct contributions:
(i) a new natural gradient algorithm that scales as $\min(P^2S, S^2P)$, where $P$ is the number of parameters, and $S$ the batch size;
(ii) a mathematically principled reformulation of the PINNs problem that allows the extension of natural gradient to it, with proved connections to Green's function theory.
\end{abstract}

\section{Introduction}
Following the spectacular success of neural networks for over a decade \citep{lecunDeepLearning2015b}, intensive work has been carried out to apply these methods to numerical analysis \citep{cuomoScientificMachineLearning2022a}. In particular, following the pioneering work of \citet{dissanayakeNeuralnetworkbasedApproximationsSolving1994} and \citet{lagarisArtificialNeuralNetworks1998}, \citet{raissiPhysicsinformedNeuralNetworks2019} have introduced Physics Informed Neural Networks (PINNs), a method designed to approximate solutions of partial differential equations (PDEs), using deep neural networks. Theoretically based on the universal approximation theorem of neural networks \citep{leshnoMultilayerFeedforwardNetworks1993a}, and put into practice by automatic differentiation \citep{baydinAutomaticDifferentiationMachine2018} for the computation of differential operators, this method has enjoyed a number of successes in fields as diverse as fluid mechanics \citep{raissiDeepLearningVortexinduced2019,raissiDeepLearningTurbulent2019a,sunSurrogateModelingFluid2020a,raissiHiddenFluidMechanics2020,jinNSFnetsNavierStokesFlow2021,dewolffAssessingPhysicsInformed2021}, bio-engineering \citep{sahlicostabalPhysicsInformedNeuralNetworks2020,kissasMachineLearningCardiovascular2020} or free boundary problems \citep{wangDeepLearningFree2021}.
Nevertheless, many limitations have been pointed out, notably the inability of these methods in their current formulation to obtain high-precision approximations when no additional data is provided \citep{krishnapriyanCharacterizingPossibleFailure2021a,wangUnderstandingMitigatingGradient2021,karnakovSolvingInverseProblems2022,zengCompetitivePhysicsInformed2022}. Recent work by \citet{mullerAchievingHighAccuracy2023}, however, has substantially altered this state of affairs, proposing an algorithm similar to natural gradient methods in case of linear operator (\cf \cref{sec-app:equivalence-zeinhofer}), that achieves accuracies several orders of magnitude above previous methods.

\paragraph{Contributions:}
\citet{mullerPositionOptimizationSciML2024a} argue for the need to take function-space geometry into account in order to further understand and improve scientific machine-learning methods. With this paper, we intend to support and extend their approach by making several contributions:
\begin{enumerate}[label=(\roman*)]
 \item %
 We  highlight a principled mathematical framework that restates natural gradient in an equivalent, yet simpler way,  leading us to propose ANaGRAM, a general-purpose natural gradient algorithm of reduced complexity ${\mathcal O}(\min(P^2S, S^2P))$ compared to ${\mathcal O}(P^3)$, where $P=\#{parameters}$ and $S=\#{batch\, samples}$.
 \item %
 We reinterpret the PINNs framework from a functional analysis perspective in order to extend ANaGRAM to the PINN's context in a straightforward manner.
 \item %
 We establish a direct correspondence between  ANaGRAM for PINNs and the Green's function of the operator on the tangent space.
\end{enumerate}
The rest of this article is organized as follows: in \cref{sec:problem-position}, after introducing neural networks and parametric models in \cref{subsec:NN-notations} from a functional analysis perspective, we review two concepts crucial to our work: PINNs framework in \cref{subsec:PINNs-def}, and natural gradient in \cref{subsec:NG-def}. In \cref{sec:ANaGRAM-quadratic-context},
we introduce the notions of empirical tangent space and an expression for the corresponding notion of empirical natural gradient leading to ANaGRAM, described in algorithm~\ref{alg:anagram}. %
In \cref{sec:ANaGRAM-pinns-context}, after reinterpreting PINNs as a regression problem from the right functional perspective in  \cref{subsec:pinns-as-a-least-square-regression}, yielding ANaGRAM algorithm~\ref{alg:anagram-pinns} for PINNs, we state in \cref{subsec:NG-is-Green-function} that natural gradient matches the Green's function of the operator on the tangent space and analyse the consequence of this on the interpretation of PINNs training process under ANaGRAM. Finally, in \cref{sec:Experiments}, we show empirical evidences of the performance of ANaGRAM on a selected benchmark of PDEs.

\section{Position of the problem}\label{sec:problem-position}
 \subsection{Neural Networks and parametric model}\label{subsec:NN-notations}
  Our starting point is the following functional definition of parametric models, of which neural networks are a non-linear special case:
  \begin{Def}[Parametric model]\label{Def:parametric-model}
	Given a domain $\Omega$ of $\RR^n$, $\KK\in\{\RR,\CC\}$ and a Hilbert space $\cH$ consisting of functions $\Omega\to\KK^m$, a parametric model is a differentiable functional:
	\begin{equation}\label{eqn:def-parametric-model}
	u:\left\{\begin{array}{lll}
	\RR^P &\to& \cH\\
	\vtheta & \mapsto & \big(\vx\in\Omega\mapsto u(\vx; \vtheta)\big) %
	\end{array}\right..
	\end{equation}
    To prevent confusion, we will write  $u_{|\vtheta}(\vx)$ instead of $u\big(\vx; \vtheta\big)$, $\forallt \vx\in\Omega$.
   \end{Def}
   Since a parametric model is differentiable by definition, we can define its differential:
   \begin{Def}[Differential of a parametric model]
    Let $u:\RR^P\to\cH$ be a parametric model and $\vtheta\in\RR^P$. Then the differential of the parametric model $u$ in the parameter $\vtheta$ is:
    \begin{equation}\label{eqn:def-differential-parametric-model}
     \dd u_{|\vtheta}:\left\{\begin{array}{lll}
	\RR^P &\to& \cH\\
	\vh & \mapsto & \sum_{p=1}^{P} \vh_p \frac{\partial u}{\partial \vtheta_p}%
	\end{array}\right..
    \end{equation}
    To simplify notations, we will write $\forallt 1\leq p\leq P$ and $\forallt\vtheta\in\RR^P$, $\partial_p u_{|\vtheta}$, instead of $\frac{\partial u}{\partial \vtheta_p}$.
   \end{Def}
   Given a parametric model $u$, we can define the following two objects of interest:
   \begin{description}
    \item[The image set of $u$]: this is the set of functions reached by $u$, \ie:
	\begin{equation}\label{eqn:image-set-parametric-model}
	 \cM:=\Ima u:=\left\{u_{|\vtheta}\,:\,\vtheta\in\RR^P\right\}\subset\cH.
	\end{equation}
	Although not strictly rigorous\footnote{In particular, because $u$ may not be injective.}, $\cM$ is often considered in deep-learning as a differential submanifold of $\cH$, so we will keep this analogy in mind for pedagogical purposes.
	\item[The tangent space of $u$ at $\vtheta$]: this is the image set of the differential of $u$ at $\vtheta$, \ie the linear subspace of $\cH$ consisting of the functions reached by $\dd u_{|\vtheta}$, \ie:
	\begin{equation}\label{eqn:image-set-differential-parametric-model}
	 T_\vtheta\cM:=\Ima \dd u_{|\vtheta}=\Span\big(\partial_p u_{|\vtheta}\,:1\leq p\leq P\big)\subset\cH.
	\end{equation}
	Once again, this definition is made with reference to differential geometry.
   \end{description}

   We give several examples of Parametric models in \cref{sec-app:parametric-models-examples}. We now introduce PINNs.

 \subsection{Physics Informed Neural Networks (PINNs)}\label{subsec:PINNs-def}
  As in \cref{Def:parametric-model}, let us consider a domain $\Omega$ of $\RR^n$ endowed with a probability measure $\mu$, $\KK\in\{\RR,\CC\}$, $\partial\Omega$ its boundary endowed with a probability measure $\sigma$, and $\cH$ a Hilbert space composed of functions $\Omega\to\KK^m$. Then let us consider two functional (\interalia differential) operators:
 \begin{align}\label{eqn:differential-operators-definition}
  D:&\left\{\begin{array}{lll}
  \cH &\to& \dL^2(\Omega\to\RR,\mu)\\
	u & \mapsto & D[u]
	\end{array}\right.,&%
  B:&\left\{\begin{array}{lll}
	\cH &\to& \dL^2(\partial\Omega\to\RR,\sigma)\\
	u & \mapsto & B[u]
	\end{array}\right.,
 \end{align}
 that we will assume to be differentiable\footnote{It can be shown that, if $D$ and $B$ are defined and differentiable on $\cC^\infty(\Omega\to\KK^m)$ then such a $\cH$ always exists; \cf chapter 12 of \citet{berezanskyFunctionalAnalysisVol1996}.}. We can then consider the functional equation (\interalia PDE):
 \begin{literaleq}{eqn:pinn-equation}
  \begin{cases}
   D(u)=f\in \dL^2(\Omega\to\RR,\mu) & \mathrm{in}\,\Omega\\
   B(u)=g\in\dL^2(\partial\Omega\to\RR,\sigma) & \mathrm{on}\,\partial\Omega
  \end{cases}.
 \end{literaleq}
 The PINNs framework, as introduced by \citet{raissiPhysicsinformedNeuralNetworks2019} consists then in approximating a solution to the PDE by making the ansatz $u=u_{|\vtheta}$, with $u_{|\vtheta}$ a neural network, sampling points $(x^D_i)_{1\leq i\leq S_D}$ in $\Omega$ according to $\mu$, $(x^B_i)_{1\leq i\leq S_B}$ in $\partial\Omega$ according to %
 $\sigma$ and then to optimize the loss:
 \begin{literaleq}{eqn:empirical-loss-of-pinns}
  \ell(\vtheta):=\frac{1}{2S_D}\sum_{i=1}^{S_D} \left(D[u_{|\vtheta}](x^D_i)-f(x^D_i)\right)^2 + \frac{1}{2S_B}\sum_{i=1}^{S_B} \left(B[u_{|\vtheta}](x^B_i)-g(x^B_i)\right)^2,
 \end{literaleq}
 by classical gradient descent techniques, used in the context of deep learning, such as Adam \citep{kingmaAdamMethodStochastic2014}, or L-BFGS \citep{liuLimitedMemoryBFGS1989}. %
 One of the cornerstones of \citet{raissiPhysicsinformedNeuralNetworks2019} is also to use automatic differentiation \citep{baydinAutomaticDifferentiationMachine2018} to compute the operators $D$ and $B$, thus obtaining quasi-exact calculations, whereas most classic techniques require either approximating operators with Finite Differences, or carrying out the calculations manually with Finite Elements.

 Although appealing due to its simplicity and relative ease of implementation, this approach suffers from several well-documented empirical pathologies \citep{krishnapriyanCharacterizingPossibleFailure2021a,wangUnderstandingMitigatingGradient2021,grossmannCanPhysicsinformedNeural2024a}, which can be understood as an ill conditioned problem \citep{deryckOperatorPreconditioningPerspective2024,liuPreconditioningPhysicsInformedNeural2024} and for which several \adhoc procedures has been proposed \citep{karnakovSolvingInverseProblems2022,zengCompetitivePhysicsInformed2022,mcclennySelfAdaptivePhysicsInformedNeural2022}.
 Following \citet{mullerPositionOptimizationSciML2024a}, we argue in this work that the key point is rather to theoretically understand the geometry of the problem and adapt PINNs training accordingly. %

\subsection{Natural Gradient}\label{subsec:NG-def}
 Natural gradient has been introduced, in the context of Information Geometry by \citet{amariWhyNaturalGradient1998}. Given a loss: $\ell:\vtheta\to\RR^+$, the gradient descent:\vspace{-.1cm}
     \begin{equation}\literallabel{eqn:classical-gradient-descent}
      {\vtheta_{t+1}\gets \vtheta_t - \eta\, \nabla\ell},\vspace{-.1cm}
     \end{equation}
     is replaced by the update:\vspace{-.1cm}
     \begin{equation}\label{eqn:natural-gradient-IG}
      \vtheta_{t+1}\gets \vtheta_t - \eta\, F_{\vtheta_t}^\dagger \nabla\ell,\vspace{-.1cm}
     \end{equation}
     with $F_{\vtheta_t}$ being the Gram-Matrix associated to a Fisher-Rao information metric~\citep{amariInformationGeometryIts2016b} or equivalently, the Hessian of some Kullback-Leibler divergence~\citep{kullbackInformationSufficiency1951a}, and $\dagger$ the Moore-Penrose pseudo-inverse.
     This notion has been later further extended to the more abstract setting of Riemannian metrics in the context of neural-networks by \citet{ollivierRiemannianMetricsNeural2015a}. In this case, given a Riemannian-(pseudo) metric $\cG_\vtheta$, the gradient-descent update is replaced by:\vspace{-.1cm}
     \begin{equation}\label{eqn:natural-gradient-Riemannian}
      \vtheta_{t+1}\gets \vtheta_t - \eta G_{\vtheta_t}^\dagger \nabla\ell,\vspace{-.1cm}
     \end{equation}
     where ${G_{\vtheta_t}}_{p,q}:=\cG_{\vtheta_t}(\partial_p u_{|\vtheta_t}, \partial_q u_{|\vtheta_t})$ is the Gram matrix of partial derivatives relative to $\cG_{\vtheta_t}$.
	Despite its mathematically principled advantage, natural gradient suffers from its computational cost, which makes it prohibitive, if not untractable for real world applications. Indeed:\vspace{-.3cm}
     \begin{itemize}
      \item Computation of the Gram matrix $G_{\vtheta_t}$ is at least quadratic in the number of parameters. %
      \item Inversion of $G_{\vtheta_t}$ is cubic in the number of parameters.
     \end{itemize}
     \vspace{-.3cm}
	Different approaches have been proposed to circumvent this limitations. The most prominent one is K-FAC introduced by \citet{heskesNaturalLearningPruning2000} and further extended by \citet{martensOptimizingNeuralNetworks2015,grosseKroneckerfactoredApproximateFisher2016}, which approximates the Gram matrix by block-diagonal matrices. This approximation can be understood as making the ansatz that the partial derivatives of weights belonging to different layers are orthogonal. %
	A refinement of this method has been proposed by \citet{georgeFastApproximateNatural2018}, in which the eigen-structure of the block-diagonal matrices are carefully taken into account in order to provide a better approximation of the diagonal rescaling induced by the inversion of the Gram matrix. This method has been extented to PINNs by \citet{dangelKroneckerFactoredApproximateCurvature2025a}. %
	In a completely different vein, \citet{ollivierTrueAsymptoticNatural2017} has proposed a statistical approach that has been proved to converge to the natural gradient update in the $0$ learning rate limit.

	To conclude this section, let us give a more geometric interpretation of natural gradient. To this end, let us consider the standard quadratic regression problem :
	\begin{literaleq}{eqn:scalar-loss}
     \ell(\vtheta):= \frac{1}{2 S} \sum_{i=1}^{S} \left(u_{|\vtheta}(x_i)-f(x_i)\right)^2,
    \end{literaleq}
    with $u_{|\vtheta}$ a parametric model, for instance a neural-network, $(x_i)$ sampled from some probability measure $\mu$ on some domain $\Omega$ of $\RR^N$.
    In the limit $S\to\infty$ (population limit), this loss can be reinterpreted as the evaluation at $u_{|\vtheta}$ of the functional loss:
   \begin{equation}\label{eqn:fonctionnal-loss}
    \cL:v\in\dL^2(\Omega, \mu)\mapsto \frac{1}{2} \norm[v-f]_{\dL^2(\Omega, \mu)}^2.
   \end{equation}
   Taking the Fréchet derivative, one gets: $\forallt v,h\in\dL^2(\Omega, \mu)$
   \begin{equation}\label{eqn:L2-loss-differential}
    \dd\cL|_v(h)=\braket{v-f}{h}_{\dL^2(\Omega, \mu)},%
   \end{equation}
   \ie the functional gradient of $\cL$ is $\nabla\cL_{|v}:=v-f$. As noted for instance in \citet{verbockhaven2024growing}, Natural gradient has then to be interpreted from the functional point of view as the projection of $\nabla\cL_{|u_{|\vtheta}}$ onto the tangent space $T_\vtheta\cM$ %
   with respect to the $\dL^2(\Omega,\mu)$ metric.
   However, this functional update must be converted into a parameter space update. Since the parameter space $\RR^P$ is somehow identified with $T_\vtheta\cM$ \via the differential application $\dd u_{|\vtheta}$, it would be sufficient to take the inverse of this application to obtain the parametric update. In general $\dd u_{|\vtheta}$ is not invertible but at least it admits a pseudo-inverse $\dd u_{|\vtheta}^\dagger$. Moreover, since $T_\vtheta\cM=\Ima\dd u_{|\vtheta}$ by definition, $\dd u_{|\vtheta}^\dagger$ is defined on all $T_\vtheta\cM$. Thus,
   we have that the natural gradient in the population limits corresponds to the update:
   \begin{equation}
    \literallabel{eqn:population-limit-natural-gradient-update}
    {\vtheta_{t+1}\gets \vtheta_t - \eta\,\dd u_{|\vtheta_t}^\dagger\left(\Pi^\bot_{T_{\vtheta_t}\cM}\left(\nabla\cL_{|u_{|\vtheta_t}}\right)\right)}.
   \end{equation}
   Note that the use of the pseudo-inverse implies that the update in the parameter space happens in the subspace $(\Ker \dd u_{|\vtheta})^\bot\subset\RR^P$.

 \section{Empirical Natural Gradient and ANaGRAM}\label{sec:ANaGRAM-quadratic-context}
 In practice, one cannot reach the population limit and thus \cref{eqn:population-limit-natural-gradient-update} is only an asymptotic update. Nevertheless, we can derive a more accurate update, when we can rely only on a finite set of points $(x_i)_{i=1}^S$ that is usually called a batch. Following \citet{jacotNeuralTangentKernel2018a}, we know that standard quadratic gradient descent update with respect to a batch in the vanishing learning rate limit $\eta\to 0$, rewrites in the functional space as:
 \begin{align}
  \literallabel{eqn:NTK-dynamics}
  {\frac{\dd u_{|\vtheta_t}}{\dd t}(x) &= -\sum_{i=1}^S {NTK}_{\vtheta_t}(x, x_i)\left(u_{|\vtheta_t}(x_i)-y_i\right), & {NTK}_\vtheta(x,y):=\sum_{p=1}^P \left(\partial_p u_{|\vtheta}(x)\right)\left(\partial_p u_{|\vtheta}(y)\right)^t}.
 \end{align}
 Furthermore, \citet{rudnerNaturalNeuralTangent2019,baiGeometricUnderstandingNatural2022} show that under natural gradient descent, the \textbf{Neural Tangent Kernel (NTK)} should be replaced in \cref{eqn:NTK-dynamics} by the \textbf{Natural NTK (NNTK)}:
 \begin{literalign}{eqn:NNTK-def-gram}
  {NNTK}_\vtheta(x,y)&:=\sum_{1\leq p,q\leq P} \left(\partial_p u_{|\vtheta}(x)\right){G_\vtheta^\dagger}_{pq}\left(\partial_q u_{|\vtheta}(y)\right)^t, & {G_{\vtheta}}_{p,q}&:=\braket{\partial_p u_{|\vtheta}}{\partial_q u_{|\vtheta}}_\cH.
 \end{literalign}
 As a consequence, one may see that the update under natural gradient descent with respect to a batch $(x_i)_{i=1}^S$ happens in a subspace of the tangent space, namely the \textbf{empirical Tangent Space}:
 \begin{equation}
  \literallabel{eqn:empirical-tangent-space-NNTK}
  {\widehat{T}^{NNTK}_{\vtheta,(x_i)}\cM:=\Span({NNTK}_{\vtheta}(\cdot, x_i)\,:\, (x_i)_{1\leq i\leq S})\subset T_{\vtheta}\cM}.
 \end{equation}
 Subsequently, \cref{eqn:population-limit-natural-gradient-update} can then be adapted to define the \textbf{empirical Natural Gradient update}:
 \begin{literaleq}{eqn:empirical-natural-gradient}
  \vtheta_{t+1}\gets \vtheta_t - \eta\,\dd u_{|\vtheta_t}^\dagger\left(\Pi^\bot_{\widehat{T}^{NNTK}_{\vtheta,(x_i)}\cM}\left(\nabla\cL_{|u_{|\vtheta_t}}\right)\right).
 \end{literaleq}
 Note that this update can be understood from the functional perspective as the standard Nystr\"om method \citep{sunReviewNystromMethods2015}, bridging the gap between our work and the many methods developed in this field. Nevertheless, the ${NNTK}_{\vtheta}$ kernel cannot be computed explicitly in our case, since it requires \apriori inverting the Gram matrix, which adds further challenge. With this in mind, we present a first result, encapsulated in the following theorem, which is one of our main contributions:%
 \begin{restatable}[ANaGRAM]{Th}{anagramMainTh}\label{Th:anagram-main-thm}
  Let us define $\forallt 1\leq i\leq S$ and $\forallt 1\leq p\leq P$:\vspace{-.1cm}
   \begin{align*}
    \mbox{$\widehat{\phi}_\vtheta$}_{i,p}&:=\partial_p u_{|\vtheta}(x_i)
    \,;
    & \mbox{$\widehat{\nabla\cL}_{|u_{|\vtheta}}$}_i&:=\nabla\cL_{|u_{|\vtheta}}(x_i)=u_{|\vtheta}(x_i)-f(x_i).
  \end{align*}
  Then:\vspace{-.5cm}
  \begin{equation}
   \dd u_{|\vtheta}^\dagger\left(\Pi^\bot_{\widehat{T}^{NNTK}_{\vtheta,(x_i)}\cM}\nabla\cL_{|u_{|\vtheta}}\right)
   =\left(\widehat{\phi}_\vtheta^\dagger + E_\vtheta^\textrm{metric} \right) \left(\widehat{\nabla\cL}_{|u_{|\vtheta}} + E^\bot_\vtheta\right),
  \end{equation}
  where $E_\vtheta^\textrm{metric}$ and $E^\bot_\vtheta$ are correction terms specified in \cref{eqn:E_metric_correction,eqn:E_bot_correction} in \cref{subsec:consequences of NNTK-theory}, respectively accounting for the metric's impact on empirical tangent space defintion, and the substraction of the evaluation of the orthogonal part\footnote{orthogonal to the whole tangent space $T_\vtheta\cM$.} of the functional gradient.
 \end{restatable}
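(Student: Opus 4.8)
The plan is to unfold the left-hand side into finite-dimensional linear algebra by establishing a reproducing property of the NNTK on the tangent space, and then to read the two correction terms off the resulting closed form. Abbreviate $\widehat{T}_\vtheta := \widehat{T}^{NNTK}_{\vtheta,(x_i)}\cM$, set $\psi_y := \big(\partial_p u_{|\vtheta}(y)\big)_{p=1}^{P}\in\RR^P$ (so the $i$-th row of $\widehat{\phi}_\vtheta$ is $\psi_{x_i}^\top$), and let $K := \widehat{\phi}_\vtheta\, G_\vtheta^\dagger\, \widehat{\phi}_\vtheta^\top \in \RR^{S\times S}$. First I would record the elementary identity ${NNTK}_\vtheta(\cdot,y) = \dd u_{|\vtheta}\big(G_\vtheta^\dagger\psi_y\big)$, immediate from \cref{eqn:NNTK-def-gram,eqn:def-differential-parametric-model}. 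Since $G_\vtheta = \dd u_{|\vtheta}^{\ast}\dd u_{|\vtheta}$, one has $\Ker G_\vtheta = \Ker\dd u_{|\vtheta}$, hence $\Ima G_\vtheta^\dagger = (\Ker\dd u_{|\vtheta})^\bot$ and $\dd u_{|\vtheta}^\dagger\dd u_{|\vtheta}$ is the orthogonal projector onto $(\Ker\dd u_{|\vtheta})^\bot$; in particular ${NNTK}_\vtheta(\cdot,y)\in T_\vtheta\cM$. Together with $\Ker\dd u_{|\vtheta}\subseteq\Ker\widehat{\phi}_\vtheta$ (a linear combination of the $\partial_p u_{|\vtheta}$ that vanishes in $\cH$ also vanishes at the $x_i$), this gives the reproducing identity
\[
 \braket{{NNTK}_\vtheta(\cdot,y)}{v}_\cH \;=\; \psi_y^\top\, G_\vtheta^\dagger G_\vtheta\, \vc \;=\; \psi_y^\top \vc \;=\; v(y), \qquad v = \dd u_{|\vtheta}\vc \in T_\vtheta\cM .
\]

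Next I would use $\widehat{T}_\vtheta\subseteq T_\vtheta\cM$ (see \cref{eqn:empirical-tangent-space-NNTK}), which forces $\Pi^\bot_{\widehat{T}_\vtheta}\nabla\cL_{|u_{|\vtheta}} = \Pi^\bot_{\widehat{T}_\vtheta} w$ with $w := \Pi^\bot_{T_\vtheta\cM}\nabla\cL_{|u_{|\vtheta}}$ the tangential part of the functional gradient. Writing $\Pi^\bot_{\widehat{T}_\vtheta} w = \sum_i \alpha_i\,{NNTK}_\vtheta(\cdot,x_i)$ and expressing the normal equations via the reproducing identity, the Gram matrix of the generators is $\big({NNTK}_\vtheta(x_i,x_j)\big)_{ij} = K$ and its right-hand side is $\big(w(x_i)\big)_i = \widehat{\phi}_\vtheta\,\dd u_{|\vtheta}^\dagger w =: \widehat{w}$, so $\alpha = K^\dagger\widehat{w}$ is the minimum-norm coefficient vector. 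Applying $\dd u_{|\vtheta}^\dagger$ termwise, using ${NNTK}_\vtheta(\cdot,x_i) = \dd u_{|\vtheta}(G_\vtheta^\dagger\psi_{x_i})$ and that $\dd u_{|\vtheta}^\dagger\dd u_{|\vtheta}$ is the identity on $(\Ker\dd u_{|\vtheta})^\bot \ni G_\vtheta^\dagger\widehat{\phi}_\vtheta^\top K^\dagger\widehat{w}$, I get the closed form
\[
 \dd u_{|\vtheta}^\dagger\Big(\Pi^\bot_{\widehat{T}_\vtheta}\nabla\cL_{|u_{|\vtheta}}\Big) \;=\; G_\vtheta^\dagger\, \widehat{\phi}_\vtheta^\top\, K^\dagger\, \widehat{w} .
\]
The non-uniqueness of $\alpha$ is harmless: $\sum_i\alpha_i\,{NNTK}_\vtheta(\cdot,x_i)$ equals the fixed element $\Pi^\bot_{\widehat{T}_\vtheta} w$ for every solution, and $\dd u_{|\vtheta}^\dagger$ is then applied to that element.

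It remains to split the two factors so as to match \cref{eqn:E_metric_correction,eqn:E_bot_correction}. For the vector factor, $\widehat{w}_i - \big(\widehat{\nabla\cL}_{|u_{|\vtheta}}\big)_i = \big(w - \nabla\cL_{|u_{|\vtheta}}\big)(x_i) = -\big((\mathrm{Id} - \Pi^\bot_{T_\vtheta\cM})\nabla\cL_{|u_{|\vtheta}}\big)(x_i)$, \ie $\widehat{w} = \widehat{\nabla\cL}_{|u_{|\vtheta}} + E^\bot_\vtheta$ with $E^\bot_\vtheta$ the evaluation at the $x_i$ of the component of the functional gradient orthogonal to $T_\vtheta\cM$. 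For the matrix factor, set $E^{\mathrm{metric}}_\vtheta := G_\vtheta^\dagger\widehat{\phi}_\vtheta^\top K^\dagger - \widehat{\phi}_\vtheta^\dagger$; when the $(\partial_p u_{|\vtheta})_p$ are $\cH$-orthonormal, $G_\vtheta = \mathrm{Id}$ gives $K = \widehat{\phi}_\vtheta\widehat{\phi}_\vtheta^\top$ and $G_\vtheta^\dagger\widehat{\phi}_\vtheta^\top K^\dagger = \widehat{\phi}_\vtheta^\top(\widehat{\phi}_\vtheta\widehat{\phi}_\vtheta^\top)^\dagger = \widehat{\phi}_\vtheta^\dagger$, so $E^{\mathrm{metric}}_\vtheta$ vanishes and records only the distortion induced by a non-trivial $\cH$-metric, as claimed. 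Substituting both splittings into the closed form yields the asserted identity.

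\paragraph{Main obstacle.} I expect the crux to be the block around the reproducing property and the closed form: handling the pseudo-inverses and kernels carefully — in particular, why the choice $G_\vtheta^\dagger$ in \cref{eqn:NNTK-def-gram} is precisely what makes ${NNTK}_\vtheta$ reproduce point evaluation on $T_\vtheta\cM$ — justifying $\Ker\dd u_{|\vtheta}\subseteq\Ker\widehat{\phi}_\vtheta$ so that evaluating tangent functions at the batch points is compatible with $\dd u_{|\vtheta}^\dagger$, and checking that the coefficient non-uniqueness is immaterial. Once $G_\vtheta^\dagger\widehat{\phi}_\vtheta^\top K^\dagger\widehat{w}$ is in hand, identifying $E^{\mathrm{metric}}_\vtheta$ and $E^\bot_\vtheta$ is routine bookkeeping.
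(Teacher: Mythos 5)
Your argument is essentially the paper's: both proofs rest on the fact that $NNTK_\vtheta$ is the reproducing kernel of $T_\vtheta\cM$ (\cref{Cor:NNTK-as-projection}), both reduce the projection onto $\widehat{T}^{NNTK}_{\vtheta,(x_i)}\cM$ to a finite linear system whose Gram matrix is $K=\widehat{\phi}_\vtheta G_\vtheta^\dagger\widehat{\phi}_\vtheta^t$ (the paper gets this by applying \cref{LM:projection-on-span-spaces} a second time to the generators $NNTK_\vtheta(\cdot,x_i)$, you by writing the normal equations directly --- the same computation), and both arrive at the identical closed form $G_\vtheta^\dagger\widehat{\phi}_\vtheta^t K^\dagger\widehat{w}$ with $\widehat{w}=\widehat{\nabla\cL}^\parallel_\vtheta$, from which $E^\bot_\vtheta$ is read off exactly as in \cref{eqn:E_bot_correction}. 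Your justification of the ``exact implicit inversion'' ($\Ima G_\vtheta^\dagger\subset(\Ker\dd u_{|\vtheta})^\bot$, $\Ker\dd u_{|\vtheta}\subseteq\Ker\widehat{\phi}_\vtheta$) is if anything more explicit than the paper's.

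The one genuine gap is the treatment of $E_\vtheta^\textrm{metric}$. The theorem asserts that this correction term is the \emph{specific} expression \cref{eqn:E_metric_correction}, written in terms of the SVD $\widehat{\phi}_\vtheta=\widehat{U}_\vtheta\widehat{\Delta}_\vtheta\widehat{V}_\vtheta^t$, the rank projector $\Pi_r$ and $\Sigma_\vtheta=\Pi_r\widehat{V}_\vtheta^tG_\vtheta^\dagger\widehat{V}_\vtheta\Pi_r$. You instead \emph{define} $E_\vtheta^\textrm{metric}:=G_\vtheta^\dagger\widehat{\phi}_\vtheta^t K^\dagger-\widehat{\phi}_\vtheta^\dagger$, which makes the displayed identity true by construction but proves nothing about the claimed form of that term; the sanity check that it vanishes for $\cH$-orthonormal $(\partial_p u_{|\vtheta})$ is consistent with, but does not establish, \cref{eqn:E_metric_correction}. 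Closing this requires the paper's remaining algebraic step: substitute the SVD to get $K^\dagger=\widehat{U}_\vtheta\widehat{\Delta}_\vtheta^\dagger\Sigma_\vtheta^\dagger\widehat{\Delta}_\vtheta^\dagger\widehat{U}_\vtheta^t$, insert $\widehat{V}_\vtheta\widehat{V}_\vtheta^t=\widehat{V}_\vtheta\big(\Pi_r+(\mI_P-\Pi_r)\big)\widehat{V}_\vtheta^t$ in front of $G_\vtheta^\dagger$, and identify the $\Pi_r$ block with $\widehat{\phi}_\vtheta^\dagger$ and the complementary block with \cref{eqn:E_metric_correction}. (Note also that the paper's vanishing criterion is stronger than yours: $E_\vtheta^\textrm{metric}=0$ whenever $\widehat{\phi}_\vtheta$ has full rank $P$, not only when $G_\vtheta=\mI_P$.)
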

 \vspace{-.1cm}
 A proof of this theorem, as well as a more comprehensive introduction to empirical natural gradient, encompassing a \textit{détour} through RKHS theory, can be found in \cref{sec-app:anagram-formal-derivation}.
  \vspace{-.1cm}
  \begin{Rk}
   In some important cases the correction terms $E_\vtheta^\textrm{metric}$ and $E^\bot_\vtheta$ vanishes. This happens for instance for $E^\bot_\vtheta$ when solving $D[u]=0$ with $D$ linear and $u$ an MLP (see \cref{Ex:MLP-def}). We refer to \cref{Prop:projection-not-needed-in-linear-case} and \cref{Rk:linear-operator-no-projection} at the end of \cref{subsec:consequences of NNTK-theory}.
   $E_\vtheta^\textrm{metric}$ cancels out in the following case:
  \end{Rk}
 \vspace{-.1cm}
 \begin{restatable}{Prop}{empiricalPopulationEquivalence}\label{Cor:empirical-population-equivalence}
  There exist $P$ points $(\hat{x}_i)$ %
  such that $\widehat{T}^{NNTK}_{\vtheta,(x_i)}\cM=T_{\vtheta}\cM$. Then notably $E_\vtheta^\textrm{metric}=0$. %
 \end{restatable}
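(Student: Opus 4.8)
The plan is to reduce the identity $\widehat{T}^{NNTK}_{\vtheta,(\hat{x}_i)}\cM=T_\vtheta\cM$ to a finite-dimensional ``general position'' statement about the coordinate evaluations $x\mapsto\partial_p u_{|\vtheta}(x)$, and then to read off $E_\vtheta^\textrm{metric}=0$ from the definition of that term in \cref{eqn:E_metric_correction}. Write $G:=G_\vtheta$ for the Gram matrix of \cref{eqn:NNTK-def-gram} and $r:=\Rank G$. I will use two routine facts: (a) $h\mapsto\norm[\dd u_{|\vtheta}(h)]_\cH^2$ is the quadratic form of $G$, hence $\Ker G=\Ker\dd u_{|\vtheta}$ and $\dim T_\vtheta\cM=\Rank\dd u_{|\vtheta}=r$; (b) $G$ is symmetric positive semidefinite, so $\Ima G=(\Ker G)^\perp$ and $G^\dagger$ restricts to a linear automorphism of $\Ima G$. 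The one analytic ingredient is that, because $D$ and $B$ in \cref{eqn:differential-operators-definition} act by point evaluation, $\cH$ may be taken to be a space in which point evaluation is continuous (a reproducing-kernel-type space, as set up in \cref{sec-app:anagram-formal-derivation}); in particular an element of $\cH$ vanishing pointwise on $\Omega$ is zero in $\cH$. For readability I take $m=1$, the general case being identical after replacing each evaluation vector by its $m$ columns.

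The core step is the spanning claim: writing $w(x):=\big(\partial_1 u_{|\vtheta}(x),\dots,\partial_P u_{|\vtheta}(x)\big)\in\RR^P$ for $x\in\Omega$,
\[
 \Span\{\,w(x):x\in\Omega\,\}=\Ima G .
\]
Indeed, for $h\in\RR^P$ we have $\sum_{p=1}^P h_p\, w(x)_p=\dd u_{|\vtheta}(h)(x)$; if $h\in\Ker\dd u_{|\vtheta}=\Ker G$ this vanishes for every $x$, so each $w(x)$ is orthogonal to $\Ker G$, i.e. lies in $(\Ker G)^\perp=\Ima G$, giving ``$\subseteq$''. Conversely, if some $h\in\Ima G$ were orthogonal to every $w(x)$, then $\dd u_{|\vtheta}(h)$ would vanish pointwise on $\Omega$, hence $\dd u_{|\vtheta}(h)=0$ in $\cH$, so $h\in\Ker G\cap\Ima G=\{0\}$; this gives ``$\supseteq$''. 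Since $\dim\Ima G=r<\infty$, one may pick $\hat{x}_1,\dots,\hat{x}_r\in\Omega$ with $\big(w(\hat{x}_j)\big)_{j=1}^r$ a basis of $\Ima G$, and complete the list with $P-r$ arbitrary further points $\hat{x}_{r+1},\dots,\hat{x}_P$.

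To conclude, \cref{eqn:NNTK-def-gram} gives ${NNTK}_\vtheta(\cdot,\hat{x}_i)=\dd u_{|\vtheta}\big(G^\dagger w(\hat{x}_i)\big)$ for each $i$, so \cref{eqn:empirical-tangent-space-NNTK} yields
\[
 \widehat{T}^{NNTK}_{\vtheta,(\hat{x}_i)}\cM=\dd u_{|\vtheta}\!\Big(\Span\{G^\dagger w(\hat{x}_i):1\le i\le P\}\Big)=\dd u_{|\vtheta}\big(G^\dagger(\Ima G)\big)=\dd u_{|\vtheta}(\Ima G),
\]
where the middle equality uses the spanning claim (already the first $r$ points suffice) together with fact (b). As $\Ima G=(\Ker G)^\perp=(\Ker\dd u_{|\vtheta})^\perp$ is mapped by $\dd u_{|\vtheta}$ isomorphically onto $\Ima\dd u_{|\vtheta}=T_\vtheta\cM$ (\cref{eqn:image-set-differential-parametric-model}), we obtain $\widehat{T}^{NNTK}_{\vtheta,(\hat{x}_i)}\cM=T_\vtheta\cM$. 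For the final claim: by construction $\Span\{w(\hat{x}_i)\}=\Ima G$ is $G^\dagger$-stable, so the metric-twisted and untwisted empirical tangent-space constructions compared by $E_\vtheta^\textrm{metric}$ in \cref{eqn:E_metric_correction} agree; together with $\widehat{T}^{NNTK}_{\vtheta,(\hat{x}_i)}\cM=T_\vtheta\cM$ this forces $E_\vtheta^\textrm{metric}=0$.

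The main obstacle I anticipate is the analytic ingredient: justifying that $\cH$ can be chosen so that point evaluation is well defined and continuous (so that pointwise vanishing implies vanishing in $\cH$) — this is exactly the hypothesis underlying the well-posedness of PINNs evaluations and is the subject of the RKHS \emph{détour} in \cref{sec-app:anagram-formal-derivation}. The rest is finite-dimensional linear algebra around $\Ker G=\Ker\dd u_{|\vtheta}$; the only point to verify carefully is that the precise expression of $E_\vtheta^\textrm{metric}$ in \cref{eqn:E_metric_correction} is indeed annihilated by the $G^\dagger$-stability of $\Span\{w(\hat{x}_i)\}$ that we have engineered.
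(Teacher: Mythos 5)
Your proof is correct and reaches the paper's conclusion, but it is carried out in parameter space rather than in function space. The paper shows that $\overline{\Span\left({NNTK}_\vtheta(\cdot,x):x\in\Omega\right)}=T_\vtheta\cM$ by using the reproducing property $\braket{{NNTK}_\vtheta(\cdot,x)}{v}=v(x)$ for $v\in T_\vtheta\cM$ (via \cref{Thm:projection-kernel}) and then extracts a free subfamily of cardinality $d=\dim T_\vtheta\cM$; you instead show $\Span\{w(x):x\in\Omega\}=\Ima G_\vtheta$ inside $\RR^P$ using $\Ker G_\vtheta=\Ker \dd u_{|\vtheta}$, extract a basis there, and push it forward through $\dd u_{|\vtheta}\circ G_\vtheta^\dagger$. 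The two arguments are dual to one another (since ${NNTK}_\vtheta(\cdot,x)=\dd u_{|\vtheta}\big(G_\vtheta^\dagger w(x)\big)$ and $\dd u_{|\vtheta}$ restricts to an isomorphism $(\Ker G_\vtheta)^\perp\to T_\vtheta\cM$), and both rest on the same analytic caveat that pointwise vanishing on $\Omega$ implies vanishing in $\cH$, which the paper also leaves implicit. What your route buys is a direct handle on the second claim: with your points the row space of $\widehat{\phi}_\vtheta$ is all of $\Ima G_\vtheta$, so $\widehat{V}_\vtheta\left(\mI_P-\Pi_r\right)\widehat{V}_\vtheta^t$ is the orthogonal projection onto $\Ker G_\vtheta$, and the leading factor $\widehat{V}_\vtheta\left(\mI_P-\Pi_r\right)\widehat{V}_\vtheta^t G_\vtheta^\dagger$ of \cref{eqn:E_metric_correction} vanishes outright because $\Ima G_\vtheta^\dagger=\Ima G_\vtheta=(\Ker G_\vtheta)^\perp$. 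You should write out this one-line computation instead of appealing to ``$G^\dagger$-stability'' and to an agreement of ``twisted and untwisted constructions,'' which as phrased does not certify the formula; once stated, it actually does slightly better than the paper, whose proof of the proposition establishes only the first sentence and defers $E_\vtheta^\textrm{metric}=0$ to the preceding discussion under the stronger hypothesis that $\widehat{\phi}_\vtheta$ has rank $P$.
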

 \vspace{-.2cm}
 As a first approximation, we can neglect those two terms, yielding the following vanilla algorithm:
     \begin{algorithm}[H]
      \caption{vanilla ANaGRAM}
      \label{alg:anagram}
      \DontPrintSemicolon
      \KwIn{\vspace{-4mm}\begin{itemize}
       \item $u:\RR^P\to\dL^2(\Omega,\mu)$ ~\tcp{neural network architecture}
       \item $\vtheta_0\in\RR^P$ ~\tcp{initialization of the neural network}
       \item $f\in\dL^2(\Omega,\mu)$ ~\tcp{target function of the quadratic regression}
       \item $(x_i)\in\Omega^S$ ~\tcp{a batch in $\Omega$}
       \item $\eps>0$ ~\tcp{cutoff level to compute the pseudo inverse}
     \end{itemize}}
     \Repeat{stop criterion met}{
      \label{algl:phi-cpmputation-anagram}
      $\widehat{\phi}_{\vtheta_t}\gets \left(\partial_p u_{|\vtheta_t}(x_i)\right)_{1\leq i\leq S,\,1\leq p\leq P}$ ~\tcp{Computed \via auto-differentiation}
      $\widehat{U}_{\vtheta_t},\widehat{\Delta}_{\vtheta_t}, \widehat{V}_{\vtheta_t}^t\gets SVD\left(\widehat{\phi}_{\vtheta_t}\right)$\\
      $\widehat{\Delta}_{\vtheta_t}\gets\left(\mbox{$\widehat{\Delta}_{\vtheta_t}$}_p \text{ if } \mbox{$\widehat{\Delta}_{\vtheta_t}$}_p > \eps \text{ else } 0\right)_{1\leq p\leq P}$\\
      $\widehat{\nabla\cL}\gets \left(u_{|\vtheta_t}(x_i)-f(x_i)\right)_{1\leq i\leq S}$\\
      $d_{\vtheta_t}\gets \widehat{V}_{\vtheta_t}\widehat{\Delta}_{\vtheta_t}^\dagger \widehat{U}_{\vtheta_t}^t\, \widehat{\nabla\cL}$\\
      $\eta_t\gets \argmin\limits_{\eta\in\RR^+}\sum\limits_{1\leq i\leq S}\left(f(x_i)-u_{|\vtheta_t - \eta d_{\vtheta_t}}(x_i)\right)^2$ ~\tcp{Using \eg line search}
      $\vtheta_{t+1}\gets \vtheta_t - \eta_t\, d_{\vtheta_t}$}
     \end{algorithm}
     Note that algorithm~\ref{alg:anagram} is equivalent to Gram-Gauss-Newton algorithm~\citet{caiGramGaussNewtonMethodLearning2019} applied to the empirical loss in \cref{eqn:scalar-loss} also considered recently in~\citet{jniniGaussNewtonNaturalGradient2024} with a different setting. Nevertheless, our work aims at a more general approach, giving rise to different algorithms depending on the approximations of $E_\vtheta^\textrm{metric}$ and
     $E^\bot_\vtheta$.
     One of the pleasant byproducts of the ANaGRAM framework is also that it leads to a straightforward criterion to choose points in the batch, namely:
     \begin{equation}
      (x^*_i):=\argmin_{(x_i)\in\Omega^S}\|\Pi^\bot_{\Span(NNTK_{\vtheta}(x_i, \cdot):1\leq i\leq S)}\left(\nabla\mathcal{L}\right)-\nabla\mathcal{L}\|_{\cH},
     \end{equation}
     which is amenable to various approximations, subject to further investigations.
     Taking the best advantage of this criterion should eventually allow us to use natural gradient in a mini-batch setting while staying close to the convergence rate of the full batch natural gradient as characterized in~\citet{xuConvergenceAnalysisNatural2024}. We will now show how ANaGRAM can be applied to the PINNs framework.

\section{ANaGRAM for PINNs}\label{sec:ANaGRAM-pinns-context}
 Generalizing ANaGRAM to PINNs only requires to  change the problem perspective. %

 \subsection{PINNs as a least-square regression problem}\label{subsec:pinns-as-a-least-square-regression}
 The only difference between the losses of \cref{eqn:empirical-loss-of-pinns} and \cref{eqn:scalar-loss} is the use of the differential operator $D$ and the boundary operator $B$ in \cref{eqn:empirical-loss-of-pinns}. More precisely, PINNs and standard quadratic regression problems are essentially similar, except that in the case of PINNs we use the compound model $(D,B)\circ u$ instead of $u$ directly, where, using the definitions of \cref{eqn:differential-operators-definition}:
 \begin{literaleq}{eqn:coumpound-model-definition}
  (D,B)\circ u:\left\{\begin{array}{lllll}
   \RR^P &\to&\cH&\to & \mathbf{\dL^2\left(\Omega,\partial\Omega\right)}:=\dL^2(\Omega\to\RR,\mu)\times \dL^2(\partial\Omega\to\RR,\sigma)\\
   \vtheta &\mapsto & u_{|\vtheta} & \mapsto & (D[u_{|\vtheta}],B[u_{|\vtheta}])
   \end{array}\right..
 \end{literaleq}
 The derivation of vanilla ANaGRAM in PINNs context is then straightforward:
 \begin{algorithm}[H]
   \caption{vanilla ANaGRAM for PINNs}
   \label{alg:anagram-pinns}
   \DontPrintSemicolon
   \KwIn{\vspace{-4mm}\begin{itemize}\item
   $u:\RR^P\to\cH$ ~\tcp{neural network architecture}
   \item $\vtheta_0\in\RR^P$ ~\tcp{initialization of the neural network}
   \item $D:\cH\to\dL^2(\Omega\to\RR,\mu)$ ~\tcp{differential operator}
   \item $B:\cH\to\dL^2(\partial\Omega\to\RR,\sigma)$ ~\tcp{boundary operator}
   \item $f\in\dL^2(\Omega\to\RR,\mu)$ ~\tcp{source term}
   \item $g\in\dL^2(\partial\Omega\to\RR,\sigma)$ ~\tcp{boundary value}
   \item $(x^D_i)\in\Omega^{S_D}$ ~\tcp{a batch in $\Omega$} %
   \item $(x^B_i)\in\Omega^{S_B}$ ~\tcp{a batch in $\partial\Omega$} %
   \item $\eps>0$ ~\tcp{cutoff level to compute the pseudo inverse}
   \end{itemize}}
   \Repeat{stop criterion met}{
    \label{algl:feature-matrix-pinns}
    $\widehat{\phi}_{\vtheta_t}\gets
    \begin{pmatrix}
     \left(\partial_p D[u_{|\vtheta_t}](x^D_i)\right)_{i=1}^{S_D}, & \left(\partial_p B[u_{|\vtheta_t}](x^B_i)\right)_{i=1}^{S_B}
    \end{pmatrix}_{p=1}^P$ ~\tcp{\via autodiff}
    $\widehat{V}_{\vtheta_t},\widehat{\Delta}_{\vtheta_t}, \widehat{U}_{\vtheta_t}^t\gets SVD(\widehat{\phi}_{\vtheta_t})$\\
    $\widehat{\Delta}_{\vtheta_t}\gets\left(\mbox{$\widehat{\Delta}_{\vtheta_t}$}_r \text{ if } \mbox{$\widehat{\Delta}_{\vtheta_t}$}_r > \eps \text{ else } 0\right)_{1\leq r\leq P}$\\
    $\widehat{\nabla\cL}\gets
    \begin{pmatrix}
     \left(D[u_{|\vtheta_t}](x^D_i)-f(x^D_i)\right)_{1\leq i\leq S_D}\\
     \left(B[u_{|\vtheta_t}](x^B_i)-g(x^B_i)\right)_{1\leq i\leq S_B}
    \end{pmatrix}$\\
    \label{algl:anagrma-pinns-update-direction}
    $d_{\vtheta_t}\gets \widehat{V}_{\vtheta_t}\widehat{\Delta}_{\vtheta_t}^\dagger \widehat{U}_{\vtheta_t}^t\, \widehat{\nabla\cL}$\\
    \label{algl:line-search-anagram-pinns}
    $\eta_t\gets \argmin\limits_{\eta\in\RR^+}
    \frac{1}{2S_D}\sum\limits_{1\leq i\leq S_D}\left(f(x^D_i)-D[u_{|\vtheta_t - \eta d_{\vtheta_t}}](x^D_i)\right)^2+
    \frac{1}{2S_B}\sum\limits_{1\leq i\leq S_B}\left(g(x^B_i)-B[u_{|\vtheta_t - \eta d_{\vtheta_t}}](x^B_i)\right)^2$
    ~\tcp{Using \eg line search}
    $\vtheta_{t+1}\gets \vtheta_t - \eta_t\, d_{\vtheta_t}$}
  \end{algorithm}
 Adaptation of the definitions in \cref{subsec:NG-def,sec:ANaGRAM-quadratic-context} to the case of PINNs are detailed in \cref{subsec:NG-PINNs,subsec:NNTK-eNG-PINNs} respectively.
 We now present the link between PINN's natural gradient and Green's function. %
  \subsection{PINNs Natural Gradient is a Green's Function}\label{subsec:NG-is-Green-function}
   Knowing the Green's function of a linear operator is one of the most optimal ways of solving the associated PDE, since it then suffices to estimate an integral to approximate a solution \citep{duffyGreensFunctionsApplications2015}. However, this requires prior knowledge of the Green's function, which is not always possible. Here, we show that using the natural gradient for PINNs implicitly uses the operator's Green's function. In \cref{sec-app:Greens-theory}, we briefly recall the main definitions required to state and prove the following theorem:
 \begin{restatable}{Th}{greenFunctionInTangentSpace}
  \label{Th:Green-function-in-tangent-space}
  Let $D:\cH\to\dL^2(\Omega\to\RR,\mu)$ be a linear differential operator and $u:\RR^P\to\cH$ a parametric model. Then $\forallt\vtheta\in\RR^P$, the generalized Green's function of $D$ on $T_\vtheta\cM=\Ima \dd u_{|\vtheta}$ %
  is given by: $\forallt x,y\in\Omega$
  \begin{equation}
   g_{T_\vtheta\cM}(x,y):=\sum_{1\leq p,q\leq P}\partial_p u_{|\vtheta}(x)\, G^\dagger_{p,q} \partial_q D[u_{|\vtheta}](y),
  \end{equation}
  with: $\forallt 1\leq p,q\leq P$
  \begin{equation}
   G_{pq} := \braket{\partial_p D[u_{|\vtheta}]}{\partial_q D[u_{|\vtheta}]}_{\dL^2(\Omega\to\RR,\mu)}.
  \end{equation}
  In particular, the natural gradient of PINNs\footnote{\cf last line of \cref{Tab:comparing-classical-PINNs-regression-definitions} in \cref{subsec:NG-PINNs}.} %
  can be rewritten:
  \begin{literaleq}{eqn:natural-gradient-as-green}
   \vtheta_{t+1}\gets \vtheta_t - \eta\,\dd u_{|\vtheta_t}^\dagger\left(x\in\Omega\mapsto\int_\Omega g_{T_{\vtheta_t}\cM}(x,y)\nabla\cL_{|\vtheta_t}(y)\mu(\dd y)\right),
  \end{literaleq}
 \end{restatable}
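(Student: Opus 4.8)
The plan is to prove the statement in two stages: first, that the displayed kernel $g_{T_\vtheta\cM}$ really is the generalized Green's function of $D$ restricted to the tangent space; second, that substituting it into the integral operator and composing with $\dd u_{|\vtheta}^\dagger$ reproduces the PINNs natural‑gradient direction.

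\textbf{Stage 1 (the kernel is a generalized Green's function).} Everything hinges on linearity of $D$: since $D$ is linear, $\partial_p D[u_{|\vtheta}] = D[\partial_p u_{|\vtheta}]$, so writing $\psi_p := D[\partial_p u_{|\vtheta}]$ and applying $D$ in the $x$–variable gives $D_x\, g_{T_\vtheta\cM}(x,y) = \sum_{p,q}\psi_p(x)\,G^\dagger_{pq}\,\psi_q(y)$, where $G_{pq} = \braket{\psi_p}{\psi_q}_{\dL^2(\Omega,\mu)}$ is now recognized as the Gram matrix of the family $(\psi_p)$. I would then invoke the elementary fact that for any spanning family of a finite‑dimensional subspace $V := \Span(\psi_p)\subseteq\dL^2(\Omega,\mu)$ with Gram matrix $G$, the kernel $\sum_{p,q}\psi_p(x)G^\dagger_{pq}\psi_q(y)$ is the reproducing kernel of $V$, i.e. the induced integral operator is the orthogonal projector $\Pi^\bot_V$; the quick proof sets $\Psi:\RR^P\to\dL^2(\Omega,\mu)$, $\vc\mapsto\sum_p\vc_p\psi_p$, notes $\Psi^*\Psi = G$, and computes $\Pi^\bot_V = \Psi\Psi^\dagger = \Psi G^\dagger\Psi^*$. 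Since moreover $g_{T_\vtheta\cM}(\cdot,y)\in\Span(\partial_p u_{|\vtheta}) = T_\vtheta\cM$ for every $y$, the function $v_h := \int_\Omega g_{T_\vtheta\cM}(\cdot,y)h(y)\mu(\dd y)$ lies in $T_\vtheta\cM$ and satisfies $D[v_h] = \Pi^\bot_{D[T_\vtheta\cM]}h$, the specific representative being the canonical one $v_h = \dd u_{|\vtheta}(G^\dagger\Psi^* h)$; this is exactly the defining property of the generalized Green's function of $D$ on $T_\vtheta\cM$ recalled in \cref{sec-app:Greens-theory} (restriction of the domain to $T_\vtheta\cM$, orthogonal projection of the range onto $D[T_\vtheta\cM]$).

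\textbf{Stage 2 (rewriting the natural gradient).} Put $A := \dd(D\circ u)_{|\vtheta} = D\circ\dd u_{|\vtheta}$; by linearity this is precisely the map $\vc\mapsto\sum_p\vc_p\psi_p$, i.e. $A = \Psi$, so $A^*A = G$ and $A^\dagger = (A^*A)^\dagger A^* = G^\dagger\Psi^*$. Reading off Stage 1, the integral operator $h\mapsto\big(x\mapsto\int_\Omega g_{T_\vtheta\cM}(x,y)h(y)\mu(\dd y)\big)$ equals $\dd u_{|\vtheta}\circ G^\dagger\circ\Psi^* = \dd u_{|\vtheta}\circ A^\dagger$. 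Now compose with $\dd u_{|\vtheta}^\dagger$: the factorization $A = D\circ\dd u_{|\vtheta}$ forces $\Ker\dd u_{|\vtheta}\subseteq\Ker A$, hence $\Ima A^\dagger = (\Ker A)^\bot\subseteq(\Ker\dd u_{|\vtheta})^\bot$, and since $\dd u_{|\vtheta}^\dagger\dd u_{|\vtheta}$ is the orthogonal projector onto $(\Ker\dd u_{|\vtheta})^\bot$, it acts as the identity on $\Ima A^\dagger$; therefore $\dd u_{|\vtheta}^\dagger\big(x\mapsto\int_\Omega g_{T_\vtheta\cM}(x,y)\nabla\cL_{|\vtheta}(y)\mu(\dd y)\big) = A^\dagger\nabla\cL_{|\vtheta}$. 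It remains to identify $A^\dagger\nabla\cL_{|\vtheta}$ with the PINNs natural gradient: for the compound model $D\circ u$ the tangent space is $\Ima A = D[T_\vtheta\cM]$, the functional gradient of $v\mapsto\tfrac12\|v-f\|^2$ at $v = D[u_{|\vtheta}]$ is $\nabla\cL_{|\vtheta} = D[u_{|\vtheta}]-f$, and $A^\dagger = A^\dagger\Pi^\bot_{\Ima A}$ gives $A^\dagger\nabla\cL_{|\vtheta} = \dd(D\circ u)_{|\vtheta}^\dagger\big(\Pi^\bot_{D[T_\vtheta\cM]}\nabla\cL_{|\vtheta}\big)$, which is the update of the last line of \cref{Tab:comparing-classical-PINNs-regression-definitions}; this establishes \cref{eqn:natural-gradient-as-green}.

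\textbf{Main obstacle.} The delicate part is the pseudo‑inverse bookkeeping in Stage 2: pseudo‑inverses neither compose nor cancel in general ($\dd u_{|\vtheta}^\dagger\dd u_{|\vtheta}\ne\mathrm{Id}$ and $(BC)^\dagger\ne C^\dagger B^\dagger$), and the cancellation $\dd u_{|\vtheta}^\dagger\dd u_{|\vtheta}A^\dagger = A^\dagger$ is rescued only by the inclusion $\Ker\dd u_{|\vtheta}\subseteq\Ker A$ coming from the particular factorization $A = D\circ\dd u_{|\vtheta}$. A secondary care point is making precise, consistently with the appendix, in what weak sense $g_{T_\vtheta\cM}$ "inverts" the non‑invertible operator $D$ — namely that $D_x g_{T_\vtheta\cM}(x,\cdot)$ is the kernel of $\Pi^\bot_{D[T_\vtheta\cM]}$ rather than a genuine Dirac mass, which is why one must track both the domain restriction and the range projection throughout.
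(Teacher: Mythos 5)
Your proof is correct, and its first stage coincides with the paper's: the paper proves your ``elementary fact'' as \cref{LM:projection-on-span-spaces} (the kernel $\sum_{p,q}\psi_p(x)G^\dagger_{pq}\psi_q(y)$ built from a spanning family with Gram matrix $G$ is the kernel of the orthogonal projector onto its span), applies it to the family $\psi_p=D[\partial_p u_{|\vtheta}]$, and then pulls $D$ out of the integral by linearity to match \cref{Def:generalized-greens-function} — exactly your Stage 1. Where you genuinely diverge is Stage 2. The paper derives \cref{eqn:natural-gradient-as-green} from the chain $\dd\big(D\circ u\big)_{|\vtheta}^\dagger=\dd u_{|\vtheta}^\dagger\circ D^\dagger$ followed by $D^\dagger\circ\Pi^\bot_{D[\cH_0]}=R_{\cH_0}$; the first link is a reverse-order law for pseudo-inverses that fails for general compositions and is asserted there without justification. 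Your route — identifying the integral operator as $\dd u_{|\vtheta}\circ A^\dagger$ with $A=\Psi=\dd(D\circ u)_{|\vtheta}$, cancelling $\dd u_{|\vtheta}^\dagger\dd u_{|\vtheta}$ on $\Ima A^\dagger\subseteq(\Ker\dd u_{|\vtheta})^\bot$ via the inclusion $\Ker\dd u_{|\vtheta}\subseteq\Ker A$ forced by the factorization $A=D\circ\dd u_{|\vtheta}$, and then using $A^\dagger=A^\dagger\Pi^\bot_{\Ima A}$ to recover the update in the last line of \cref{Tab:comparing-classical-PINNs-regression-definitions} — replaces that unproved operator identity with a cancellation you actually establish. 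The payoff is a tighter argument at precisely the point you flag as the main obstacle: your kernel-inclusion bookkeeping in effect validates, for this specific factorized situation, the identity the paper takes for granted, while reaching the same conclusion.
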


  A few comments should be made about \cref{eqn:natural-gradient-as-green}. First, if $\eta=1$, then the natural gradient can be understood as the least-square's solution of $D[u]=f$ at order 1, \ie in the affine space $u_{|\vtheta_t}+T_{\vtheta_t}\cM$. However, it does not hold \apriori that:\vspace{-.2cm}
   \begin{itemize}
    \item $D[u_{|\vtheta_t}+T_{\vtheta_t}\cM]$ correctly approximates $f\in\dL^2(\Omega\to\RR,\mu)$. %
    \item $u_{|\vtheta_t}+T_{\vtheta_t}\cM$ correctly approximates the image space $\cM =\left\{u_{|\vtheta}\,:\, \vtheta\in\RR^P\right\}$.
   \end{itemize}\vspace{-.2cm}
   Multiplying by a learning rate $\eta\ll1$ is then essential. In this way, natural gradient can be understood as moving in the direction of the solution of $D[u]=f$ in the affine space $u_{|\vtheta_t}+T_{\vtheta_t}\cM$, and thus getting closer to the solution, while expecting that the change induced by this update will improve the approximation space $u_{|\vtheta_{t+1}}+T_{\vtheta_{t+1}}\cM$\footnote{To our best knowledge, rigorous proof of this phenomenon has not yet been provided. We can therefore only rely on empirical evidence, which is illustrated in \cref{sec:Experiments}.}. On the other hand, when we approach the end of the optimization, \ie when the space $D[u_{|\vtheta_{t}}+T_{\vtheta_{t}}\cM]$ approximates $f$ ``well enough'',
   while $\dd u_{|\vtheta_t}$ approximates ``well enough'' $\cM$, then it is in our best interest to solve the equation completely, \ie to take learning rates $\eta$ close to 1. This is why the use of line search in ANaGRAM (\cf line~\ref{algl:line-search-anagram-pinns} in Algorithm~\ref{alg:anagram-pinns}) is essential. %
   We should then conclude that the quality of the solution found by the parametric model $u$ \textbf{depends only} on:\vspace{-.2cm}
  \begin{itemize}
   \item How well $\Gamma=\{D[u_{|\vtheta}]\,:\, \vtheta\in\RR^P\}$ can approximate the source $f\in\dL^2(\Omega\to\RR,\mu)$.
   \item The curvature of $\Gamma$. More precisely, if its non-linear structure induces convergence to %
   a $D[u_{|\vtheta}]$ such that $f-D[u_{|\vtheta}]$ is non-negligible, while being orthogonal to the tangent space $D[T_{\vtheta_t}\cM]$.
  \end{itemize}\vspace{-.2cm}
  If we assume now that $D$ is also nonlinear, then all the above analysis also holds for the linear operators $\dd D_{|u_{|\vtheta_t}}$, the difference being that the operator changes at each step. This means that in the case of non-linear operators, we have to deal with both the non-linearity of $D$ and $u$, but that does not change the overall dynamic.

  Finally, assuming that both $D$ and $u$ are linear (this is for instance the case when we assume $u$ to be a linear combination of basis functions, like in Finite Elements, or Fourier Series). Then ``learning'' $u_{|\vtheta}$ with natural gradient (and learning rate $1$) corresponds to solve the equation in the least-squares sense with a generalized Green's function.

\section{Experiments}\label{sec:Experiments}
We test ANaGRAM on four problems: 2\,D Laplace equation ; 1+1\,D heat equation ; 5\,D Laplace equation ; and 1+1\,D Allen-Cahn equation.
The first three problems comes from \citet{mullerAchievingHighAccuracy2023}, while the last one is proposed in \citet{luDeepXDEDeepLearning2021}.

For training, we use multilayer perceptrons with varying layer sizes and $\tanh$ activations, along with fixed batches of points: a batch of size $S_D$ to discretize $\Omega$ and a batch of size $S_B$ to discretize $\partial\Omega$. The layer size specifications, cutoff factor $\epsilon$, values of $S_D$ and $S_B$, and discretization procedures are specified separately for each problem. Currently, the cutoff factor is chosen manually and warrants further investigation.

For these various problems, we display as a function of gradient descent steps, the medians over $10$ different initializations, of $\dL^2$ error $E_{\dL^2}$ and test loss $E_{\text{test}}$, with shaded area indicating the range between the first and third quartiles. $E_{\dL^2}$ is defined as: given test points $(x_i)_{i=1}^S$, $E_{\dL^2}(\vtheta):=\sqrt{\frac{1}{S_{\dL^2}}\sum_{i=1}^{S_{\dL^2}} \left|u_{|\vtheta}(x_i)-u^*(x_i)\right|^2}$, where $u^*$ is a known solution to the PDE and $S$ is taken $10$ times bigger than the $\Omega$ batch size $S_D$, while $E_{\text{test}}$ is the empirical PINNs loss $\ell$ of \cref{eqn:empirical-loss-of-pinns}, computed with a distinct set of points, of size $5$ times bigger than the $\Omega$ batch size $S_D$.
 We compare ANaGRAM to Energy Natural Gradient descent (E-NGD) \citep{mullerAchievingHighAccuracy2023}, vanilla gradient descent (GD) with line-search, Adam \citep{kingmaAdamMethodStochastic2014} with exponentially decaying learning-rate after $10^{15}$ steps as in \citet{mullerAchievingHighAccuracy2023}
 as well as L-BFGS \citep{liuLimitedMemoryBFGS1989}. The corresponding CPU times are also provided in tables for reference. The code is made avaible at \url{https://github.com/IloneM/ANaGRAM/} and further implementation and computation details are provided in \cref{sec-app:experimental-setting}.

\paragraph{2\,D Laplace equation}:
 We consider the two dimensional Laplace equation and its solution:
 \begin{align}\label{eqn:laplace-2D}
  \begin{cases}
  \Delta u = -2\pi^2 \sin(\pi x_1) \sin(\pi x_2) & \textrm{in }\Omega=[0,1]^2\\
  u = 0 & \textrm{on }\partial\Omega %
  \end{cases}&;&
  u^*(x_1,x_2) &= \sin(\pi x_1) \sin(\pi x_2).\\[-.7cm]\nonumber
 \end{align}
\begin{figure}[H]
 \centering
 \vspace{-0.50cm}
 \includegraphics[width=.95\linewidth]{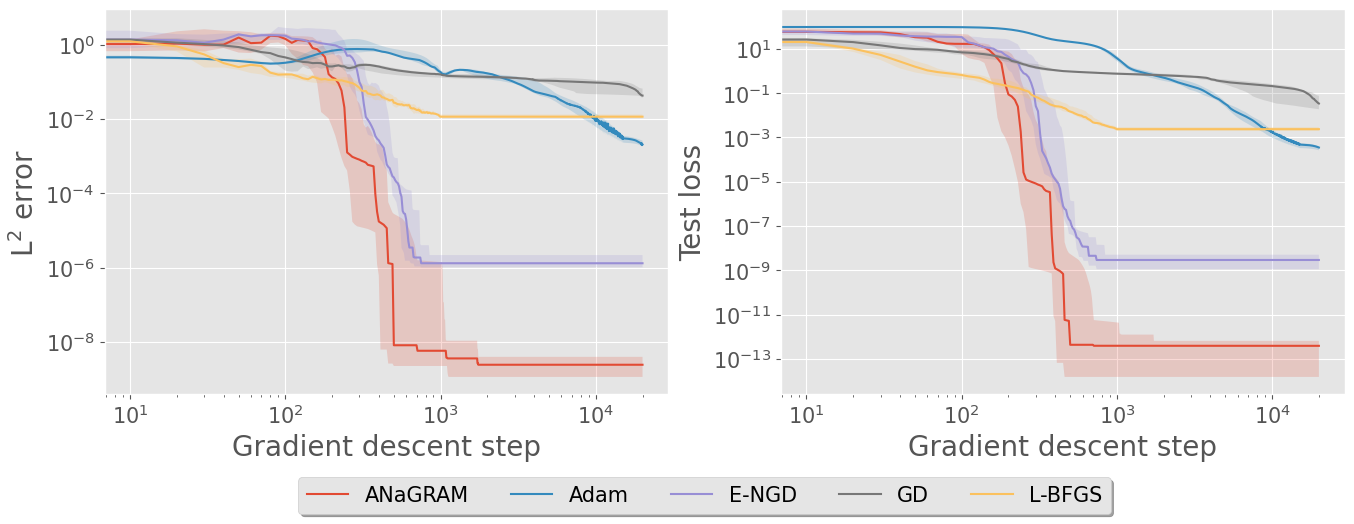}
 \vspace{-0.10cm}
 \caption{Median absolute $L^2$ errors and Test losses for the 2\,D Laplace equation.} %
 \label{fig:poisson_2d_step_comp}
\end{figure}
\begin{wraptable}{r}{.43\textwidth}
\vspace{-.49cm}
\label{tab:poisson_2d_loss_time}
\begin{tabular}{lrr}
\toprule
 CPU time (s) & Per step & Full \\
\midrule
ANaGRAM & 7.16e-02 & \textbf{1.25e+02} \\
Adam & \textbf{1.23e-02} & 2.44e+02 \\
E-NGD & 1.94e-01 & 1.88e+02 \\
GD & 2.07e-02 & 4.13e+02 \\
L-BFGS & 1.95e-01 & 1.95e+02 \\
\bottomrule
\end{tabular}
\end{wraptable}
We choose $S_D=900$ equi-distantly spaced points in the interior of $\Omega$ and $S_B=120$ equally spaced points on the boundary $\partial\Omega$ ($30$ on each side). ANaGRAM, E-NGD and L-BFGS are applied for $2000$ iterations each, while GD and Adam are trained for $20 \times 10^{3}$ iterations. The network consists of a single hidden layer with a width of $32$, resulting in a total of $P = 129$ parameters. The cutoff factor is set to $\epsilon = 1 \times 10^{-6}$.

\paragraph{1+1\,D Heat equation}:
 We consider the $(1+1)$ dimensional Heat equation and its solution:
 \begin{align}\label{eqn:heat-2D}
  \begin{cases}
  \partial_t u - \frac{1}{4} \partial_{xx} u = 0 & \textrm{in }\Omega=[0,1]^2\\
  u = 0 & \textrm{on }\partial\Omega_{\text{border}}=[0,1]\times \{0,1\}\\
  u(0,x) = \sin(\pi x) & \textrm{on }\partial\Omega_0=\{0\}\times[0,1]
 \end{cases}
 &;&
 u^*(t,x) = \exp\left(-\frac{\pi^2 t}{4}\right)\sin(\pi x).\\[-.7cm]\nonumber
 \end{align}
\begin{figure}[H]
 \centering
 \vspace{-0.50cm}
 \includegraphics[width=.95\linewidth]{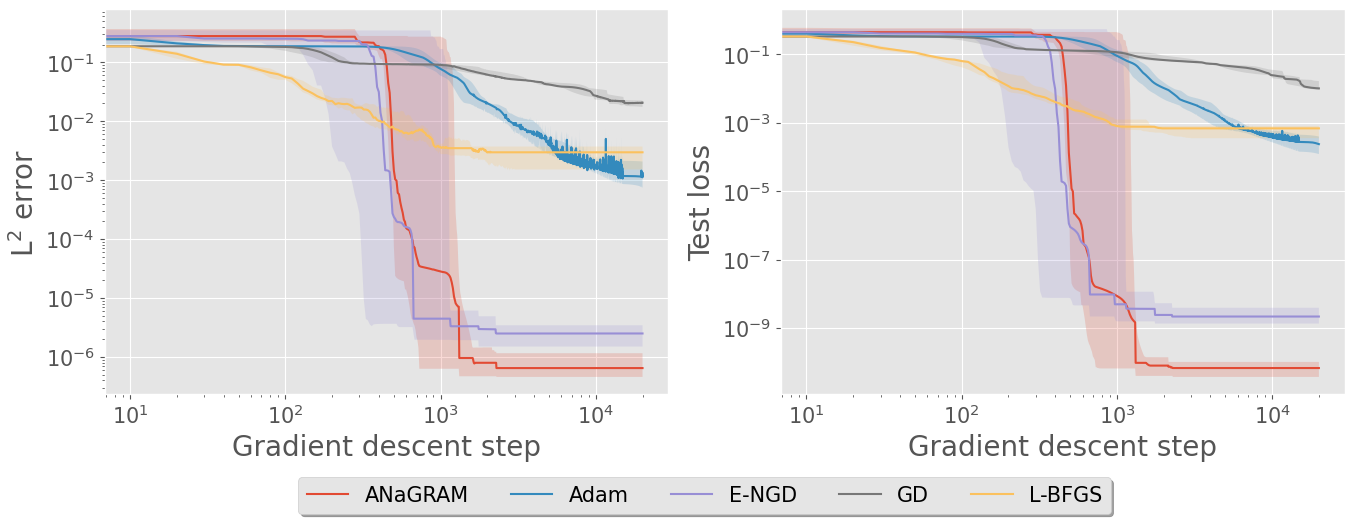}
 \vspace{-0.10cm}
 \caption{Median absolute $L^2$ errors and Test losses for the Heat equation.}%
 \label{fig:heat_step_comp}
\end{figure}

\vspace{-.5cm}
\begin{wraptable}{r}{.43\textwidth}
\vspace{-.3cm}
\label{tab:heat_loss_time}
\begin{tabular}{lrr}
\toprule
 CPU time (s) & Per step & Full\\
\midrule
ANaGRAM & 1.29e-01 & \textbf{3.78e+02} \\
Adam & \textbf{2.12e-02} & 4.15e+02 \\
E-NGD & 1.78e-01 & 4.04e+02 \\
GD & 3.87e-02 & 7.68e+02 \\
L-BFGS & 1.30e-01 & 3.91e+02 \\
\bottomrule
\end{tabular}
\end{wraptable}
We choose $S_D=900$ equi-distantly spaced points in the interior of $\Omega$ and $S_B=90$ equally spaced points on the boundary $\partial\Omega$ ($30$ on $\partial\Omega_0$ and $30$ on each side of $\partial\Omega_{\text{border}}$). ANaGRAM, E-NGD and L-BFGS are applied for $2000$ iterations each, while GD and Adam are trained for $20 \times 10^{3}$ iterations. The network consists of a single hidden layer with a width of $64$, resulting in a total of $P = 257$ parameters. The cutoff factor is set to $\epsilon = 1 \times 10^{-5}$.

\paragraph{5\,D Laplace equation}:
We consider the five dimensional Laplace equation and its solution:
\vspace{-.2cm}
 \begin{align}\label{eqn:laplace-5D}
  \begin{cases}
  \Delta u = \pi^2 \sum_{k=1}^5 \sin(\pi x_k) & \textrm{in }\Omega=[0,1]^5\\
  u =  \sum_{k=1}^5 \sin(\pi x_k) & \textrm{on }\partial\Omega %
  \end{cases}&;&
  u^*(x) &= \sum_{k=1}^5 \sin(\pi x_k),\\[-.7cm]\nonumber
 \end{align}

\begin{figure}[H]
 \centering
 \vspace{-0.50cm}
 \includegraphics[width=.95\linewidth]{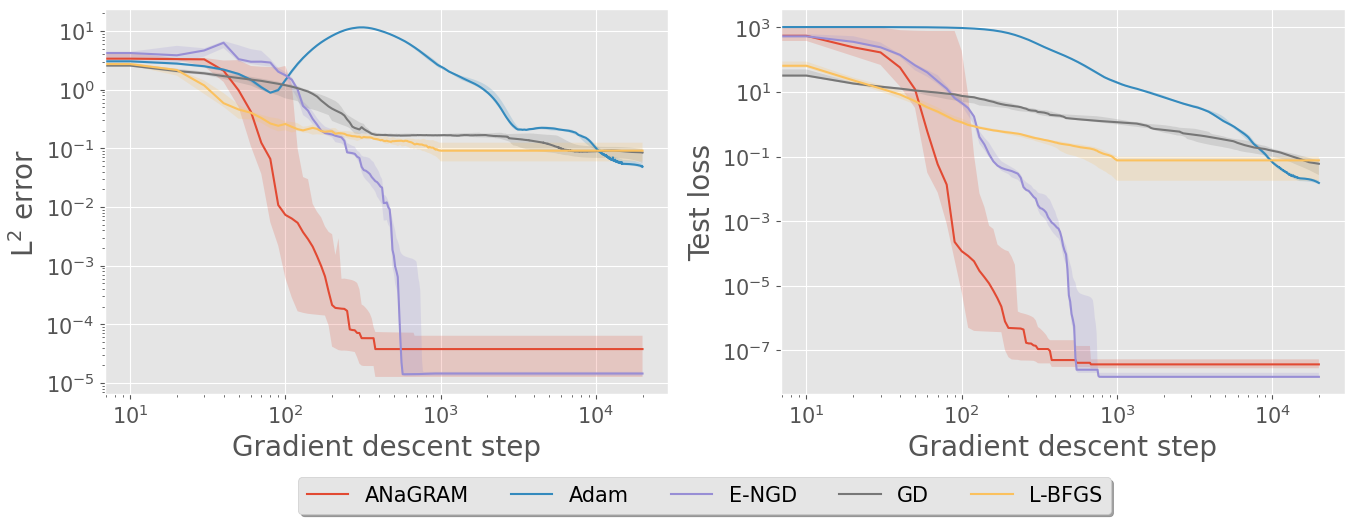}
 \vspace{-0.10cm}
 \caption{Median absolute $L^2$ errors and Test losses for the 5\,D Laplace equation.} %
 \label{fig:poisson_5d_step_comp}
\end{figure}

\begin{wraptable}{r}{.43\textwidth}
\vspace{-.3cm}
\label{tab:poisson_5d_loss_time}
\begin{tabular}{lrr}
\toprule
 CPU time (s) & Per step & Full \\
\midrule
ANaGRAM & 7.18e-01 & 4.88e+02 \\
Adam & \textbf{6.65e-02} & 1.29e+03 \\
E-NGD & 6.52e+00 & 4.96e+03 \\
GD & 2.69e-01 & 5.38e+03 \\
L-BFGS & 2.96e-01 & \textbf{2.96e+02} \\
\bottomrule
\end{tabular}
\end{wraptable}
\vspace{-.5cm}
We choose $S_D=4000$ uniformly drawn points in the interior of $\Omega$ and $S_B=500$ uniformly drawn points on the boundary $\partial\Omega$. ANaGRAM, E-NGD and L-BFGS are applied for $1000$ iterations each, while GD and Adam are trained for $20 \times 10^{3}$ iterations. The network consists of a single hidden layer with a width of $64$, resulting in a total of $P = 449$ parameters. The cutoff factor is set to $\epsilon = 5.10^{-7}\times{\Delta_{\vtheta}}_{\max}$, where ${\Delta_{\vtheta}}_{\max}$ is the maximal eigenvalue of $\widehat{\phi}_\vtheta$ (\cf line~\ref{algl:feature-matrix-pinns} of algorithm~\ref{alg:anagram-pinns}).

\paragraph{1+1\,D Allen-Cahn equation}
 We consider the $(1+1)$ dimensional Allen-Cahn equation:%
 \vspace{-.2cm}
 \begin{align}\label{eqn:allen-cahn-2D}
  \begin{cases}
  \partial_t u - 10^{-3} %
  \,\partial_{xx} u - 5(u-u^3) = 0 & \textrm{in }\Omega=[0,1]\times [-1,1]\\
  u = -1 & \textrm{on }\partial\Omega_{\text{border}}=[0,1]\times \{-1,1\}\\
  u(0,x) = x^2\cos(\pi x) & \textrm{on }\partial\Omega_0=\{0\}\times[-1,1]
 \end{cases}
 \\[-.7cm]\nonumber
 \end{align}
\begin{figure}[H]
 \centering
 \vspace{-0.50cm}
 \includegraphics[width=.95\linewidth]{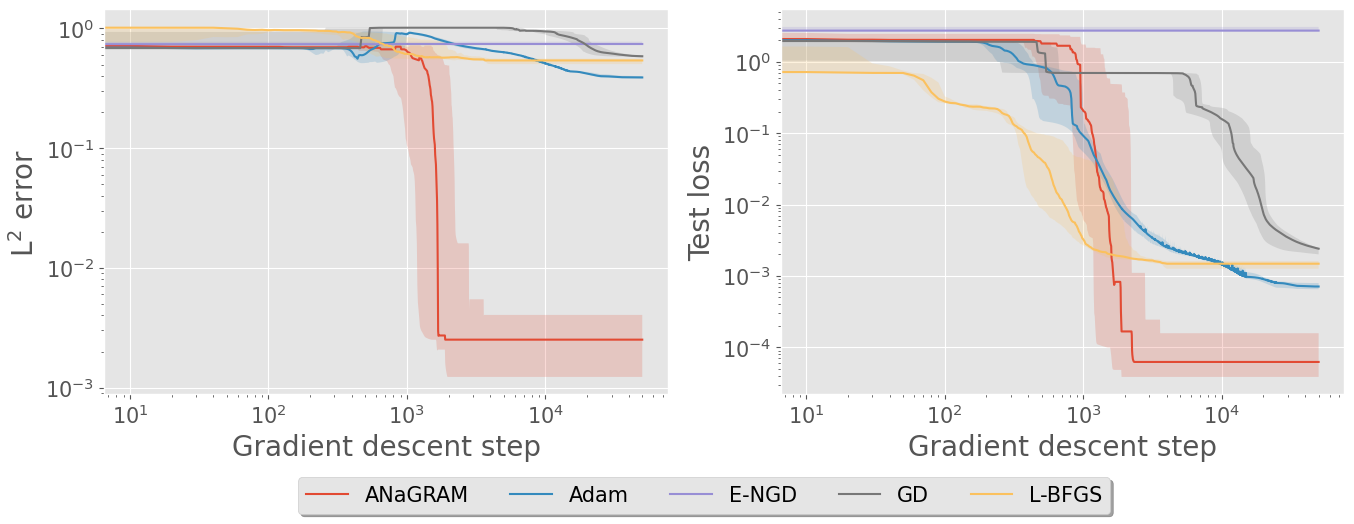}
 \vspace{-0.10cm}
 \caption{Median absolute $L^2$ errors and Test losses for the Allen-Cahn equation.}%
 \label{fig:allen-cahn_step_comp}
\end{figure}

\begin{wraptable}{r}{.43\textwidth}
\label{tab:allen-cahn_loss_time}
\begin{tabular}{lrr}
\toprule
 CPU time (s) & Per step & Full\\
\midrule
ANaGRAM & 6.01e-01 & 2.16e+03 \\
Adam & \textbf{2.82e-02} & \textbf{1.18e+03} \\
E-NGD & 1.30e+00 & 6.52e+03 \\
GD & 8.59e-02 & 4.28e+03 \\
L-BFGS & 4.07e-01 & 1.60e+03 \\
\bottomrule
\end{tabular}
\end{wraptable}
We choose $S_D=900$ equi-distantly spaced points in the interior of $\Omega$ and $S_B=90$ equally spaced points on the boundary $\partial\Omega$ ($30$ on $\partial\Omega_0$ and $30$ on each side of $\partial\Omega_{\text{border}}$). ANaGRAM and L-BFGS are applied for $4000$ iterations each, E-NGD for $1000$ iterations, while classical gradient descent (GD) and Adam are trained for $50 \times 10^{3}$ iterations. The network consists of three hidden layers with a width of $20$, resulting in a total of $P = 921$ parameters. The cutoff factor is set to $\epsilon = 5.10^{-7}\times{\Delta_{\vtheta}}_{\max}$, where ${\Delta_{\vtheta}}_{\max}$ is the maximal eigenvalue of $\widehat{\phi}_\vtheta$ (\cf line~\ref{algl:feature-matrix-pinns} of algorithm~\ref{alg:anagram-pinns}).

\paragraph{Results summary}:
We demonstrated that our approach can achieve comparable accuracy to  \citet{mullerAchievingHighAccuracy2023} on linear problems, consistent with the equivalence established in \cref{sec-app:equivalence-zeinhofer}, while maintaining a per-step computational cost at most reasonably higher than that of Adam. Excluding Adam and GD, which consistently get stuck at high error levels, the bottom line is that ANaGRAM consistently outperforms both E-NGD and L-BFGS—often by a significant margin—on at least one or even both criteria: precision and computation time. The cases where the computation times of E-NGD and ANaGRAM are similar occur when small-sized architectures are sufficient for the problem.

 \section{Conclusion and Perspectives}
  We introduce empirical Natural Gradient, a new kind of natural gradient that scales linearly with respect to the number of parameters and extend it to PINNs framework through a mathematically principled reformulation. We show that this update implicitly corresponds to the use of the Green's function of the operator.
  We give empirical evidences that this optimization in its simplest form (vanilla ANaGRAM) already achieves highly accurate solutions, comparable to \citet{mullerAchievingHighAccuracy2023} for linear PDEs at a fraction of the computational cost, and with significant improvements for non-linear equations, for which equivalence of the two algorithms does not hold anymore.

  Still, the present formulation of the algorithm has two limitations: one concerns the chosing procedure of the batch points, which is so far limited to simple heuristics; the second is the hyperparameter tunning, more specifically the cutoff factor, which is so far chosen by hand, while it may probably be automatically chosen based on the spectrum of $\widehat{\phi}_\vtheta$.

  Important perspectives include
  exploring approximations schemes for terms $E_\vtheta^\textrm{metric}$ (\eg using Nystr\"om's methods, \cf \citet{sunReviewNystromMethods2015}) and $E^\bot_\vtheta$ (\eg using \citet{cohenOptimalWeightedLeastsquares2017}), introduced in \cref{Th:anagram-main-thm},
  the design of an optimal collocation points procedure, coupled with SVD cut-off factor adaptation strategy for ANaGRAM, as well as incorporation of common optimization techniques, such as momentum. From a theoretical point of view, it seems particularly important to us to include data assimilation in this theoretical setting, and understand its regularizing effect, while establishing connections to classical solvers such as FEMs.

\newpage
\section*{Acknowledgments}
The work has been supported by the french ANR grant Scalp (ANR-24-CE23-1320).

We wish to thank Matthieu Dolbeault for insightful discussions on sampling methods for $\dL^2$-approximations, which eventually led to the concept of empirical tangent space.
\bibliographystyle{unsrtnat}
\bibliography{iclr2025_conference}

\begin{thebibliography}{60}
\providecommand{\natexlab}[1]{#1}
\providecommand{\url}[1]{\texttt{#1}}
\expandafter\ifx\csname urlstyle\endcsname\relax
  \providecommand{\doi}[1]{doi: #1}\else
  \providecommand{\doi}{doi: \begingroup \urlstyle{rm}\Url}\fi

\bibitem[LeCun et~al.(2015)LeCun, Bengio, and Hinton]{lecunDeepLearning2015b}
Yann LeCun, Yoshua Bengio, and Geoffrey Hinton.
\newblock Deep learning.
\newblock \emph{Nature}, 521\penalty0 (7553):\penalty0 436--444, May 2015.
\newblock ISSN 1476-4687.
\newblock \doi{10.1038/nature14539}.

\bibitem[Cuomo et~al.(2022)Cuomo, Di~Cola, Giampaolo, Rozza, Raissi, and
  Piccialli]{cuomoScientificMachineLearning2022a}
Salvatore Cuomo, Vincenzo~Schiano Di~Cola, Fabio Giampaolo, Gianluigi Rozza,
  Maziar Raissi, and Francesco Piccialli.
\newblock Scientific {{Machine Learning Through Physics}}--{{Informed Neural
  Networks}}: {{Where}} we are and {{What}}'s {{Next}}.
\newblock \emph{Journal of Scientific Computing}, 92\penalty0 (3):\penalty0 88,
  July 2022.
\newblock ISSN 1573-7691.
\newblock \doi{10.1007/s10915-022-01939-z}.

\bibitem[Dissanayake and
  Phan-Thien(1994)]{dissanayakeNeuralnetworkbasedApproximationsSolving1994}
M.~W. M.~G. Dissanayake and N.~Phan-Thien.
\newblock Neural-network-based approximations for solving partial differential
  equations.
\newblock \emph{Communications in Numerical Methods in Engineering},
  10\penalty0 (3):\penalty0 195--201, March 1994.
\newblock ISSN 1069-8299, 1099-0887.
\newblock \doi{10.1002/cnm.1640100303}.

\bibitem[Lagaris et~al.(1998)Lagaris, Likas, and
  Fotiadis]{lagarisArtificialNeuralNetworks1998}
Isaac~E. Lagaris, Aristidis Likas, and Dimitrios~I. Fotiadis.
\newblock Artificial neural networks for solving ordinary and partial
  differential equations.
\newblock \emph{IEEE transactions on neural networks}, 9\penalty0 (5):\penalty0
  987--1000, 1998.

\bibitem[Raissi et~al.(2019{\natexlab{a}})Raissi, Perdikaris, and
  Karniadakis]{raissiPhysicsinformedNeuralNetworks2019}
M.~Raissi, P.~Perdikaris, and G.E. Karniadakis.
\newblock Physics-informed neural networks: {{A}} deep learning framework for
  solving forward and inverse problems involving nonlinear partial differential
  equations.
\newblock \emph{Journal of Computational Physics}, 378:\penalty0 686--707,
  February 2019{\natexlab{a}}.
\newblock ISSN 00219991.
\newblock \doi{10.1016/j.jcp.2018.10.045}.

\bibitem[Leshno et~al.(1993)Leshno, Lin, Pinkus, and
  Schocken]{leshnoMultilayerFeedforwardNetworks1993a}
Moshe Leshno, Vladimir~Ya Lin, Allan Pinkus, and Shimon Schocken.
\newblock Multilayer feedforward networks with a nonpolynomial activation
  function can approximate any function.
\newblock \emph{Neural networks}, 6\penalty0 (6):\penalty0 861--867, 1993.

\bibitem[Baydin et~al.(2018)Baydin, Pearlmutter, Radul, and
  Siskind]{baydinAutomaticDifferentiationMachine2018}
Atilim~Gunes Baydin, Barak~A. Pearlmutter, Alexey~Andreyevich Radul, and
  Jeffrey~Mark Siskind.
\newblock Automatic differentiation in machine learning: A survey.
\newblock \emph{Journal of Marchine Learning Research}, 18:\penalty0 1--43,
  2018.

\bibitem[Raissi et~al.(2019{\natexlab{b}})Raissi, Wang, Triantafyllou, and
  Karniadakis]{raissiDeepLearningVortexinduced2019}
Maziar Raissi, Zhicheng Wang, Michael~S. Triantafyllou, and George~Em
  Karniadakis.
\newblock Deep learning of vortex-induced vibrations.
\newblock \emph{Journal of Fluid Mechanics}, 861:\penalty0 119--137, February
  2019{\natexlab{b}}.
\newblock ISSN 0022-1120, 1469-7645.
\newblock \doi{10.1017/jfm.2018.872}.

\bibitem[Raissi et~al.(2019{\natexlab{c}})Raissi, Babaee, and
  Givi]{raissiDeepLearningTurbulent2019a}
Maziar Raissi, Hessam Babaee, and Peyman Givi.
\newblock Deep learning of turbulent scalar mixing.
\newblock \emph{Physical Review Fluids}, 4\penalty0 (12):\penalty0 124501,
  December 2019{\natexlab{c}}.
\newblock \doi{10.1103/PhysRevFluids.4.124501}.

\bibitem[Sun et~al.(2020)Sun, Gao, Pan, and
  Wang]{sunSurrogateModelingFluid2020a}
Luning Sun, Han Gao, Shaowu Pan, and Jian-Xun Wang.
\newblock Surrogate modeling for fluid flows based on physics-constrained deep
  learning without simulation data.
\newblock \emph{Computer Methods in Applied Mechanics and Engineering},
  361:\penalty0 112732, April 2020.
\newblock ISSN 0045-7825.
\newblock \doi{10.1016/j.cma.2019.112732}.

\bibitem[Raissi et~al.(2020)Raissi, Yazdani, and
  Karniadakis]{raissiHiddenFluidMechanics2020}
Maziar Raissi, Alireza Yazdani, and George~Em Karniadakis.
\newblock Hidden fluid mechanics: {{Learning}} velocity and pressure fields
  from flow visualizations.
\newblock \emph{Science}, 367\penalty0 (6481):\penalty0 1026--1030, February
  2020.
\newblock \doi{10.1126/science.aaw4741}.

\bibitem[Jin et~al.(2021)Jin, Cai, Li, and
  Karniadakis]{jinNSFnetsNavierStokesFlow2021}
Xiaowei Jin, Shengze Cai, Hui Li, and George~Em Karniadakis.
\newblock {{NSFnets}} ({{Navier-Stokes}} flow nets): {{Physics-informed}}
  neural networks for the incompressible {{Navier-Stokes}} equations.
\newblock \emph{Journal of Computational Physics}, 426:\penalty0 109951,
  February 2021.
\newblock ISSN 0021-9991.
\newblock \doi{10.1016/j.jcp.2020.109951}.

\bibitem[{de Wolff} et~al.(2021){de Wolff}, Carrillo, Mart{\'i}, and
  {Sanchez-Pi}]{dewolffAssessingPhysicsInformed2021}
Taco {de Wolff}, Hugo Carrillo, Luis Mart{\'i}, and Nayat {Sanchez-Pi}.
\newblock Assessing physics informed neural networks in ocean modelling and
  climate change applications.
\newblock In \emph{{{AI}}: {{Modeling Oceans}} and {{Climate Change Workshop}}
  at {{ICLR}} 2021}, 2021.

\bibitem[Sahli~Costabal et~al.(2020)Sahli~Costabal, Yang, Perdikaris, Hurtado,
  and Kuhl]{sahlicostabalPhysicsInformedNeuralNetworks2020}
Francisco Sahli~Costabal, Yibo Yang, Paris Perdikaris, Daniel~E. Hurtado, and
  Ellen Kuhl.
\newblock Physics-{{Informed Neural Networks}} for {{Cardiac Activation
  Mapping}}.
\newblock \emph{Frontiers in Physics}, 8, February 2020.
\newblock ISSN 2296-424X.
\newblock \doi{10.3389/fphy.2020.00042}.

\bibitem[Kissas et~al.(2020)Kissas, Yang, Hwuang, Witschey, Detre, and
  Perdikaris]{kissasMachineLearningCardiovascular2020}
Georgios Kissas, Yibo Yang, Eileen Hwuang, Walter~R. Witschey, John~A. Detre,
  and Paris Perdikaris.
\newblock Machine learning in cardiovascular flows modeling: {{Predicting}}
  arterial blood pressure from non-invasive {{4D}} flow {{MRI}} data using
  physics-informed neural networks.
\newblock \emph{Computer Methods in Applied Mechanics and Engineering},
  358:\penalty0 112623, January 2020.
\newblock ISSN 0045-7825.
\newblock \doi{10.1016/j.cma.2019.112623}.

\bibitem[Wang and Perdikaris(2021)]{wangDeepLearningFree2021}
Sifan Wang and Paris Perdikaris.
\newblock Deep learning of free boundary and {{Stefan}} problems.
\newblock \emph{Journal of Computational Physics}, 428:\penalty0 109914, 2021.

\bibitem[Krishnapriyan et~al.(2021)Krishnapriyan, Gholami, Zhe, Kirby, and
  Mahoney]{krishnapriyanCharacterizingPossibleFailure2021a}
Aditi Krishnapriyan, Amir Gholami, Shandian Zhe, Robert Kirby, and Michael~W
  Mahoney.
\newblock Characterizing possible failure modes in physics-informed neural
  networks.
\newblock In \emph{Advances in {{Neural Information Processing Systems}}},
  volume~34, pages 26548--26560. Curran Associates, Inc., 2021.

\bibitem[Wang et~al.(2021)Wang, Teng, and
  Perdikaris]{wangUnderstandingMitigatingGradient2021}
Sifan Wang, Yujun Teng, and Paris Perdikaris.
\newblock Understanding and mitigating gradient flow pathologies in
  physics-informed neural networks.
\newblock \emph{SIAM Journal on Scientific Computing}, 43\penalty0
  (5):\penalty0 A3055--A3081, 2021.

\bibitem[Karnakov et~al.(2022)Karnakov, Litvinov, and
  Koumoutsakos]{karnakovSolvingInverseProblems2022}
Petr Karnakov, Sergey Litvinov, and Petros Koumoutsakos.
\newblock Solving inverse problems in physics by optimizing a discrete loss is
  much more faster and accurate without neural networks.
\newblock 2022.

\bibitem[Zeng et~al.(2022)Zeng, Bryngelson, and
  Schaefer]{zengCompetitivePhysicsInformed2022}
Qi~Zeng, Spencer~H. Bryngelson, and Florian~Tobias Schaefer.
\newblock Competitive {{Physics Informed Networks}}.
\newblock In \emph{{{ICLR}} 2022 {{Workshop}} on {{Gamification}} and
  {{Multiagent Solutions}}}, April 2022.

\bibitem[M{\"u}ller and Zeinhofer(2023)]{mullerAchievingHighAccuracy2023}
Johannes M{\"u}ller and Marius Zeinhofer.
\newblock Achieving high accuracy with {{PINNs}} via energy natural gradient
  descent.
\newblock In \emph{International {{Conference}} on {{Machine Learning}}}, pages
  25471--25485. PMLR, 2023.

\bibitem[M{\"u}ller and Zeinhofer(2024)]{mullerPositionOptimizationSciML2024a}
Johannes M{\"u}ller and Marius Zeinhofer.
\newblock Position: {{Optimization}} in {{SciML Should Employ}} the {{Function
  Space Geometry}}.
\newblock In \emph{Forty-First {{International Conference}} on {{Machine
  Learning}}}, February 2024.

\bibitem[Berezansky et~al.(1996)Berezansky, Sheftel, and
  Us]{berezanskyFunctionalAnalysisVol1996}
Yurij~M. Berezansky, Zinovij~G. Sheftel, and Georgij~F. Us.
\newblock \emph{Functional Analysis. {{Vol}}. {{II}}}, volume~86 of
  \emph{Operator {{Theory Advances}} and {{Applications}}}.
\newblock Birkh{\"a}user, 1996.

\bibitem[Kingma and Ba(2014)]{kingmaAdamMethodStochastic2014}
Diederik~P. Kingma and Jimmy Ba.
\newblock Adam: {{A}} method for stochastic optimization.
\newblock \emph{arXiv preprint arXiv:1412.6980}, 2014.

\bibitem[Liu and Nocedal(1989)]{liuLimitedMemoryBFGS1989}
Dong~C. Liu and Jorge Nocedal.
\newblock On the limited memory {{BFGS}} method for large scale optimization.
\newblock \emph{Mathematical Programming}, 45\penalty0 (1):\penalty0 503--528,
  August 1989.
\newblock ISSN 1436-4646.
\newblock \doi{10.1007/BF01589116}.

\bibitem[Grossmann et~al.(2024)Grossmann, Komorowska, Latz, and
  Sch{\"o}nlieb]{grossmannCanPhysicsinformedNeural2024a}
Tamara~G Grossmann, Urszula~Julia Komorowska, Jonas Latz, and Carola-Bibiane
  Sch{\"o}nlieb.
\newblock Can physics-informed neural networks beat the finite element method?
\newblock \emph{IMA Journal of Applied Mathematics}, 89\penalty0 (1):\penalty0
  143--174, January 2024.
\newblock ISSN 0272-4960.
\newblock \doi{10.1093/imamat/hxae011}.

\bibitem[De~Ryck et~al.(2024)De~Ryck, Bonnet, Mishra, and {de
  B{\'e}zenac}]{deryckOperatorPreconditioningPerspective2024}
Tim De~Ryck, Florent Bonnet, Siddhartha Mishra, and Emmanuel {de B{\'e}zenac}.
\newblock An operator preconditioning perspective on training in
  physics-informed machine learning, May 2024.

\bibitem[Liu et~al.(2024)Liu, Su, Yao, Hao, Su, Wu, and
  Zhu]{liuPreconditioningPhysicsInformedNeural2024}
Songming Liu, Chang Su, Jiachen Yao, Zhongkai Hao, Hang Su, Youjia Wu, and Jun
  Zhu.
\newblock Preconditioning for {{Physics-Informed Neural Networks}}, February
  2024.

\bibitem[McClenny and
  {Braga-Neto}(2022)]{mcclennySelfAdaptivePhysicsInformedNeural2022}
Levi McClenny and Ulisses {Braga-Neto}.
\newblock Self-{{Adaptive Physics-Informed Neural Networks}} using a {{Soft
  Attention Mechanism}}, April 2022.

\bibitem[Amari and Douglas(1998)]{amariWhyNaturalGradient1998}
Shun-Ichi Amari and Scott~C. Douglas.
\newblock Why natural gradient?
\newblock In \emph{Proceedings of the 1998 {{IEEE International Conference}} on
  {{Acoustics}}, {{Speech}} and {{Signal Processing}}, {{ICASSP}}'98 ({{Cat}}.
  {{No}}. {{98CH36181}})}, volume~2, pages 1213--1216. IEEE, 1998.

\bibitem[Amari(2016)]{amariInformationGeometryIts2016b}
Shun-ichi Amari.
\newblock \emph{Information Geometry and Its Applications}, volume 194.
\newblock Springer, 2016.

\bibitem[Kullback and Leibler(1951)]{kullbackInformationSufficiency1951a}
S.~Kullback and R.~A. Leibler.
\newblock On {{Information}} and {{Sufficiency}}.
\newblock \emph{The Annals of Mathematical Statistics}, 22\penalty0
  (1):\penalty0 79--86, 1951.
\newblock ISSN 0003-4851.

\bibitem[Ollivier(2015)]{ollivierRiemannianMetricsNeural2015a}
Yann Ollivier.
\newblock Riemannian metrics for neural networks {{I}}: Feedforward networks.
\newblock \emph{Information and Inference: A Journal of the IMA}, 4\penalty0
  (2):\penalty0 108--153, 2015.

\bibitem[Heskes(2000)]{heskesNaturalLearningPruning2000}
Tom Heskes.
\newblock On ``{{Natural}}'' {{Learning}} and {{Pruning}} in {{Multilayered
  Perceptrons}}.
\newblock \emph{Neural Computation}, 12\penalty0 (4):\penalty0 881--901, April
  2000.
\newblock ISSN 0899-7667.
\newblock \doi{10.1162/089976600300015637}.

\bibitem[Martens and Grosse(2015)]{martensOptimizingNeuralNetworks2015}
James Martens and Roger Grosse.
\newblock Optimizing {{Neural Networks}} with {{Kronecker-factored Approximate
  Curvature}}.
\newblock In \emph{Proceedings of the 32nd {{International Conference}} on
  {{Machine Learning}}}, pages 2408--2417. PMLR, June 2015.

\bibitem[Grosse and
  Martens(2016)]{grosseKroneckerfactoredApproximateFisher2016}
Roger Grosse and James Martens.
\newblock A {{Kronecker-factored}} approximate {{Fisher}} matrix for
  convolution layers.
\newblock In \emph{Proceedings of {{The}} 33rd {{International Conference}} on
  {{Machine Learning}}}, pages 573--582. PMLR, June 2016.

\bibitem[George et~al.(2018)George, Laurent, Bouthillier, Ballas, and
  Vincent]{georgeFastApproximateNatural2018}
Thomas George, C{\'e}sar Laurent, Xavier Bouthillier, Nicolas Ballas, and
  Pascal Vincent.
\newblock Fast {{Approximate Natural Gradient Descent}} in a {{Kronecker
  Factored Eigenbasis}}.
\newblock In \emph{Advances in {{Neural Information Processing Systems}}},
  volume~31. Curran Associates, Inc., 2018.

\bibitem[Dangel et~al.(2025)Dangel, M{\"u}ller, and
  Zeinhofer]{dangelKroneckerFactoredApproximateCurvature2025a}
Felix Dangel, Johannes M{\"u}ller, and Marius Zeinhofer.
\newblock Kronecker-{{Factored Approximate Curvature}} for {{Physics-Informed
  Neural Networks}}.
\newblock In \emph{Advances in {{Neural Information Processing Systems}}},
  volume~37, pages 34582--34636, January 2025.

\bibitem[Ollivier(2017)]{ollivierTrueAsymptoticNatural2017}
Yann Ollivier.
\newblock True {{Asymptotic Natural Gradient Optimization}}, December 2017.

\bibitem[Verbockhaven et~al.(2024)Verbockhaven, Chevallier, and
  Charpiat]{verbockhaven2024growing}
Manon Verbockhaven, Sylvain Chevallier, and Guillaume Charpiat.
\newblock Growing tiny networks: {{Spotting}} expressivity bottlenecks and
  fixing them optimally.
\newblock \emph{arXiv preprint arXiv:2405.19816}, 2024.

\bibitem[Jacot et~al.(2018)Jacot, Gabriel, and
  Hongler]{jacotNeuralTangentKernel2018a}
Arthur Jacot, Franck Gabriel, and Cl{\'e}ment Hongler.
\newblock Neural tangent kernel: {{Convergence}} and generalization in neural
  networks.
\newblock \emph{Advances in neural information processing systems}, 31, 2018.

\bibitem[Rudner et~al.(2019)Rudner, Wenzel, Teh, and
  Gal]{rudnerNaturalNeuralTangent2019}
Tim~GJ Rudner, Florian Wenzel, Yee~Whye Teh, and Yarin Gal.
\newblock The natural neural tangent kernel: {{Neural}} network training
  dynamics under natural gradient descent.
\newblock In \emph{4th Workshop on {{Bayesian Deep Learning}} ({{NeurIPS}}
  2019)}, 2019.

\bibitem[Bai et~al.(2022)Bai, Rosenberg, and
  Xu]{baiGeometricUnderstandingNatural2022}
Qinxun Bai, Steven Rosenberg, and Wei Xu.
\newblock A {{Geometric Understanding}} of {{Natural Gradient}}, February 2022.

\bibitem[Sun et~al.(2015)Sun, Zhao, and Zhu]{sunReviewNystromMethods2015}
Shiliang Sun, Jing Zhao, and Jiang Zhu.
\newblock A review of {{Nystr{\"o}m}} methods for large-scale machine learning.
\newblock \emph{Information Fusion}, 26:\penalty0 36--48, 2015.

\bibitem[Cai et~al.(2019)Cai, Gao, Hou, Chen, Wang, He, Zhang, and
  Wang]{caiGramGaussNewtonMethodLearning2019}
Tianle Cai, Ruiqi Gao, Jikai Hou, Siyu Chen, Dong Wang, Di~He, Zhihua Zhang,
  and Liwei Wang.
\newblock Gram-{{Gauss-Newton Method}}: {{Learning Overparameterized Neural
  Networks}} for {{Regression Problems}}, September 2019.

\bibitem[Jnini et~al.(2024)Jnini, Vella, and
  Zeinhofer]{jniniGaussNewtonNaturalGradient2024}
Anas Jnini, Flavio Vella, and Marius Zeinhofer.
\newblock Gauss-{{Newton Natural Gradient Descent}} for {{Physics-Informed
  Computational Fluid Dynamics}}, February 2024.

\bibitem[Xu et~al.(2024)Xu, Du, Kong, Li, and
  Huang]{xuConvergenceAnalysisNatural2024}
Xianliang Xu, Ting Du, Wang Kong, Ye~Li, and Zhongyi Huang.
\newblock Convergence {{Analysis}} of {{Natural Gradient Descent}} for
  {{Over-parameterized Physics-Informed Neural Networks}}, August 2024.

\bibitem[Duffy(2015)]{duffyGreensFunctionsApplications2015}
Dean~G. Duffy.
\newblock \emph{Green's Functions with Applications}.
\newblock {Chapman and Hall/CRC}, 2015.

\bibitem[Lu et~al.(2021)Lu, Meng, Mao, and
  Karniadakis]{luDeepXDEDeepLearning2021}
Lu~Lu, Xuhui Meng, Zhiping Mao, and George~E. Karniadakis.
\newblock {{DeepXDE}}: {{A}} deep learning library for solving differential
  equations.
\newblock \emph{SIAM Review}, 63\penalty0 (1):\penalty0 208--228, January 2021.
\newblock ISSN 0036-1445, 1095-7200.
\newblock \doi{10.1137/19M1274067}.

\bibitem[Cohen and Migliorati(2017)]{cohenOptimalWeightedLeastsquares2017}
Albert Cohen and Giovanni Migliorati.
\newblock Optimal weighted least-squares methods.
\newblock \emph{The SMAI Journal of computational mathematics}, 3:\penalty0
  181--203, 2017.
\newblock ISSN 2426-8399.
\newblock \doi{10.5802/smai-jcm.24}.

\bibitem[Glorot and Bengio(2010)]{glorotUnderstandingDifficultyTraining2010}
Xavier Glorot and Yoshua Bengio.
\newblock Understanding the difficulty of training deep feedforward neural
  networks.
\newblock In \emph{Proceedings of the Thirteenth International Conference on
  Artificial Intelligence and Statistics}, pages 249--256, 2010.

\bibitem[Bradbury et~al.(2018)Bradbury, Frostig, Hawkins, Johnson, Leary,
  Maclaurin, Necula, Paszke, VanderPlas, {Wanderman-Milne}, and
  Zhang]{jax2018github}
James Bradbury, Roy Frostig, Peter Hawkins, Matthew~James Johnson, Chris Leary,
  Dougal Maclaurin, George Necula, Adam Paszke, Jake VanderPlas, Skye
  {Wanderman-Milne}, and Qiao Zhang.
\newblock {{JAX}}: Composable transformations of {{Python}}+{{NumPy}} programs,
  2018.

\bibitem[Virtanen et~al.(2020)Virtanen, Gommers, Oliphant, Haberland, Reddy,
  Cournapeau, Burovski, Peterson, Weckesser, Bright, {van der Walt}, Brett,
  Wilson, Millman, Mayorov, Nelson, Jones, Kern, Larson, Carey, Polat, Feng,
  Moore, VanderPlas, Laxalde, Perktold, Cimrman, Henriksen, Quintero, Harris,
  Archibald, Ribeiro, Pedregosa, {van Mulbregt}, and {SciPy 1.0
  Contributors}]{2020SciPy-NMeth}
Pauli Virtanen, Ralf Gommers, Travis~E. Oliphant, Matt Haberland, Tyler Reddy,
  David Cournapeau, Evgeni Burovski, Pearu Peterson, Warren Weckesser, Jonathan
  Bright, St{\'e}fan~J. {van der Walt}, Matthew Brett, Joshua Wilson, K.~Jarrod
  Millman, Nikolay Mayorov, Andrew R.~J. Nelson, Eric Jones, Robert Kern, Eric
  Larson, C~J Carey, {\.I}lhan Polat, Yu~Feng, Eric~W. Moore, Jake VanderPlas,
  Denis Laxalde, Josef Perktold, Robert Cimrman, Ian Henriksen, E.~A. Quintero,
  Charles~R. Harris, Anne~M. Archibald, Ant{\^o}nio~H. Ribeiro, Fabian
  Pedregosa, Paul {van Mulbregt}, and {SciPy 1.0 Contributors}.
\newblock {{SciPy}} 1.0: {{Fundamental}} algorithms for scientific computing in
  python.
\newblock \emph{Nature Methods}, 17:\penalty0 261--272, 2020.
\newblock \doi{10.1038/s41592-019-0686-2}.

\bibitem[{DeepMind} et~al.(2020){DeepMind}, Babuschkin, Baumli, Bell,
  Bhupatiraju, Bruce, Buchlovsky, Budden, Cai, Clark, Danihelka, Dedieu,
  Fantacci, Godwin, Jones, Hemsley, Hennigan, Hessel, Hou, Kapturowski, Keck,
  Kemaev, King, Kunesch, Martens, Merzic, Mikulik, Norman, Papamakarios, Quan,
  Ring, Ruiz, Sanchez, Sartran, Schneider, Sezener, Spencer, Srinivasan,
  Stanojevi{\'c}, Stokowiec, Wang, Zhou, and Viola]{deepmind2020jax}
{DeepMind}, Igor Babuschkin, Kate Baumli, Alison Bell, Surya Bhupatiraju, Jake
  Bruce, Peter Buchlovsky, David Budden, Trevor Cai, Aidan Clark, Ivo
  Danihelka, Antoine Dedieu, Claudio Fantacci, Jonathan Godwin, Chris Jones,
  Ross Hemsley, Tom Hennigan, Matteo Hessel, Shaobo Hou, Steven Kapturowski,
  Thomas Keck, Iurii Kemaev, Michael King, Markus Kunesch, Lena Martens, Hamza
  Merzic, Vladimir Mikulik, Tamara Norman, George Papamakarios, John Quan,
  Roman Ring, Francisco Ruiz, Alvaro Sanchez, Laurent Sartran, Rosalia
  Schneider, Eren Sezener, Stephen Spencer, Srivatsan Srinivasan, Milo{\v s}
  Stanojevi{\'c}, Wojciech Stokowiec, Luyu Wang, Guangyao Zhou, and Fabio
  Viola.
\newblock The {{DeepMind JAX Ecosystem}}, 2020.

\bibitem[Rosenblatt(1958)]{rosenblattPerceptronProbabilisticModel1958}
Frank Rosenblatt.
\newblock The perceptron: A probabilistic model for information storage and
  organization in the brain.
\newblock \emph{Psychological review}, 65\penalty0 (6):\penalty0 386, 1958.

\bibitem[Sch{\"o}lkopf et~al.(2002)Sch{\"o}lkopf, Smola, and
  Bach]{scholkopfLearningKernelsSupport2002}
Bernhard Sch{\"o}lkopf, Alexander~J. Smola, and Francis Bach.
\newblock \emph{Learning with Kernels: Support Vector Machines, Regularization,
  Optimization, and Beyond}.
\newblock MIT press, 2002.

\bibitem[Paulsen and
  Raghupathi(2016)]{paulsenIntroductionTheoryReproducing2016b}
Vern~I. Paulsen and Mrinal Raghupathi.
\newblock \emph{An {{Introduction}} to the {{Theory}} of {{Reproducing Kernel
  Hilbert Spaces}}}, volume 152.
\newblock Cambridge university press, 2016.

\bibitem[Adcock and Huybrechs(2019)]{adcockFramesNumericalApproximation2019}
Ben Adcock and Daan Huybrechs.
\newblock Frames and {{Numerical Approximation}}.
\newblock \emph{SIAM Review}, 61\penalty0 (3):\penalty0 443--473, January 2019.
\newblock ISSN 0036-1445, 1095-7200.
\newblock \doi{10.1137/17M1114697}.

\bibitem[Alvarez et~al.(2012)Alvarez, Rosasco, and
  Lawrence]{alvarezKernelsVectorValuedFunctions2012}
Mauricio~A. Alvarez, Lorenzo Rosasco, and Neil~D. Lawrence.
\newblock Kernels for {{Vector-Valued Functions}}: A {{Review}}, April 2012.

\bibitem[Abramowitz and
  Stegun(1968)]{abramowitzHandbookMathematicalFunctions1968}
Milton Abramowitz and Irene~A. Stegun.
\newblock \emph{Handbook of Mathematical Functions with Formulas, Graphs, and
  Mathematical Tables}, volume~55.
\newblock US Government printing office, 1968.

\end{thebibliography}

\newpage
\appendix
 \section{Complementary material for the Experiments}
 \subsection{Description of the experimental setting}\label{sec-app:experimental-setting}
\paragraph{Description of the Method}
We base our code on \citet{mullerAchievingHighAccuracy2023}. For our $4$ numerical experiments, we apply ANaGRAM gradient steps as defined in the Algorithm~\ref{alg:anagram-pinns}. As in \citet{mullerAchievingHighAccuracy2023}, we choose the interval $[0,1]$ for the line search determining the learning rate, $1$ corresponding to solving the (linearized PDE) with the Green's function (\cf \cref{subsec:NG-is-Green-function}).
The neural network weights are initialized using the Glorot normal initialization \citep{glorotUnderstandingDifficultyTraining2010}.

\paragraph{Computation Details}
As in \citet{mullerAchievingHighAccuracy2023}, our implementation relies on JAX~\citep{jax2018github}, where all derivatives are computed using JAX automatic differentiation, and the singular value decomposition computation is carried out by the scipy \citep{2020SciPy-NMeth} implementation of JAX. Stochastic gradient descent, Adam, as well as L-BFGS relies on the implementation of \citet{deepmind2020jax}. All experiments were run on a 11th Gen Intel\textregistered~Core\texttrademark~i7-1185G7 @ 3.00GHz Laptop CPU in double precision (float64).

\subsection{Figures relative to computation time}
\subsubsection{2\,D Laplace}
\begin{figure}[H]
 \centering
 \vspace{-0.50cm}
 \includegraphics[width=\linewidth]{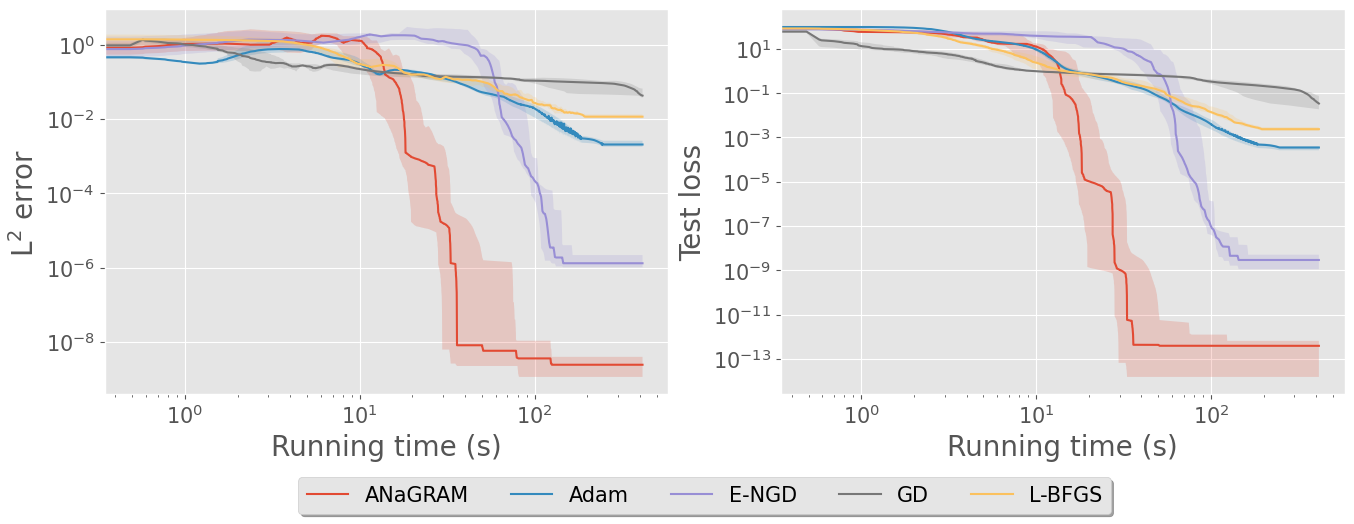}
 \vspace{-0.60cm}
 \caption{Median absolute $L^2$ errors and Test losses for the 2\,D Laplace equation across 10 different initializations for the five optimizers, relative to computation time. The shaded area indicates the range between the first and third quartiles.}
 \label{fig:poisson_2d_time_comp}
\end{figure}

\subsubsection{Heat}
\begin{figure}[H]
 \centering
 \vspace{-0.50cm}
 \includegraphics[width=\linewidth]{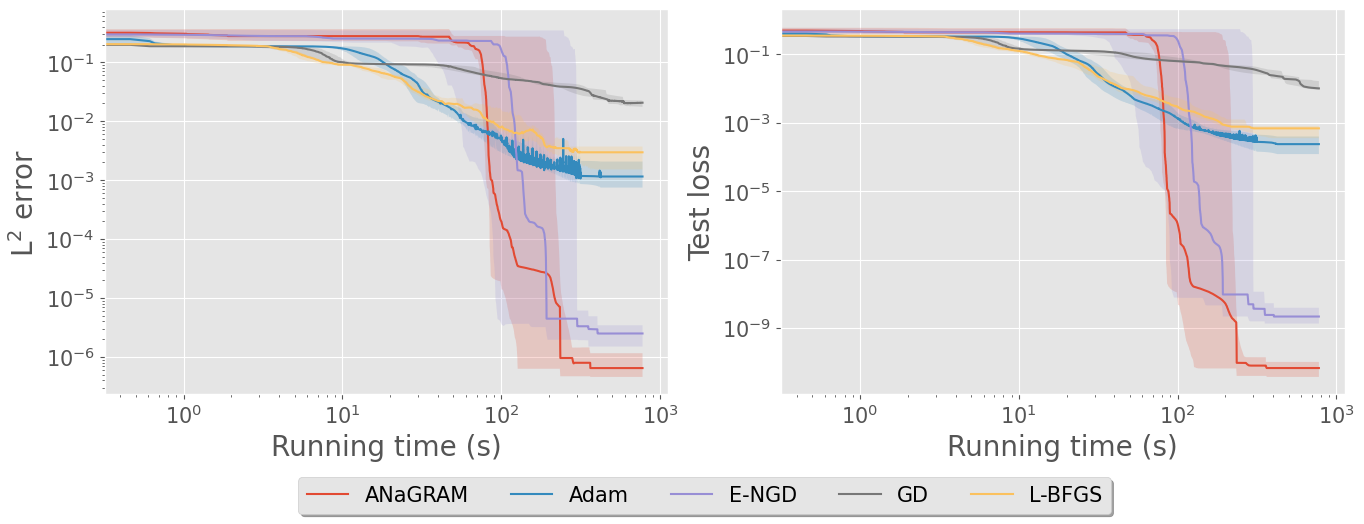}
 \vspace{-0.60cm}
 \caption{Median absolute $L^2$ errors and Test losses for the Heat equation across 10 different initializations for the five optimizers, relative to computation time. The shaded area indicates the range between the first and third quartiles.}
 \label{fig:heat_time_comp}
\end{figure}

\subsubsection{5\,D Laplace}
\begin{figure}[H]
 \centering
 \vspace{-0.50cm}
 \includegraphics[width=\linewidth]{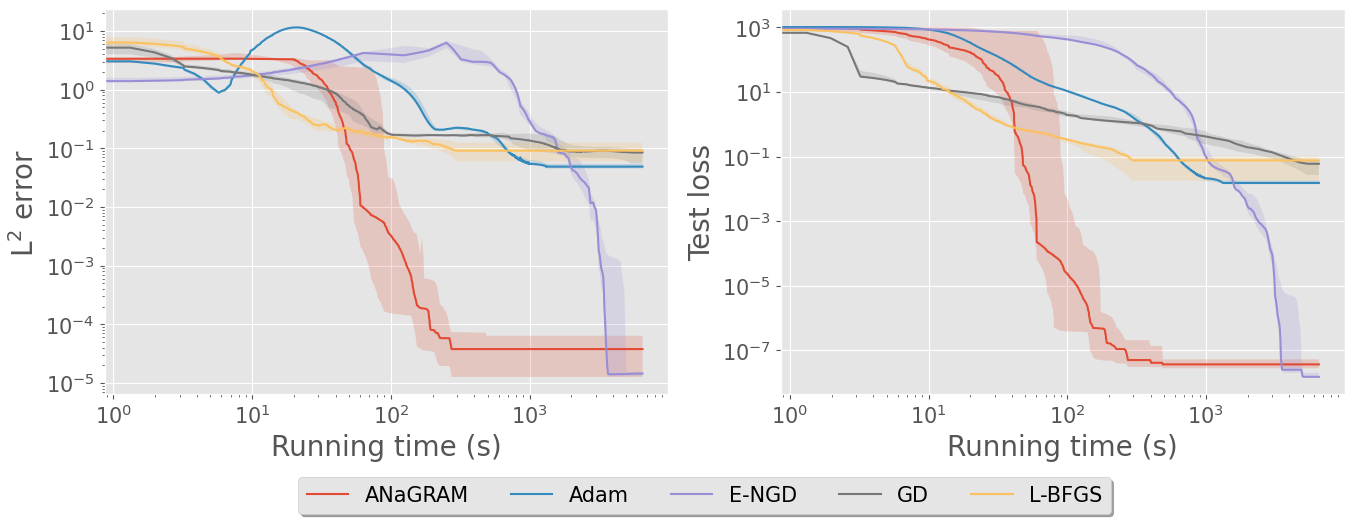}
 \vspace{-0.60cm}
 \caption{Median absolute $L^2$ errors and Test losses for the 5\,D Laplace equation across 10 different initializations for the five optimizers, relative to computation time (except for ENGD for which we only took 3 initializations). The shaded area indicates the range between the first and third quartiles.}
 \label{fig:poisson_5d_time_comp}
\end{figure}

\subsubsection{Allen-Cahn}

\begin{figure}[H]
 \centering
 \vspace{-0.50cm}
 \includegraphics[width=\linewidth]{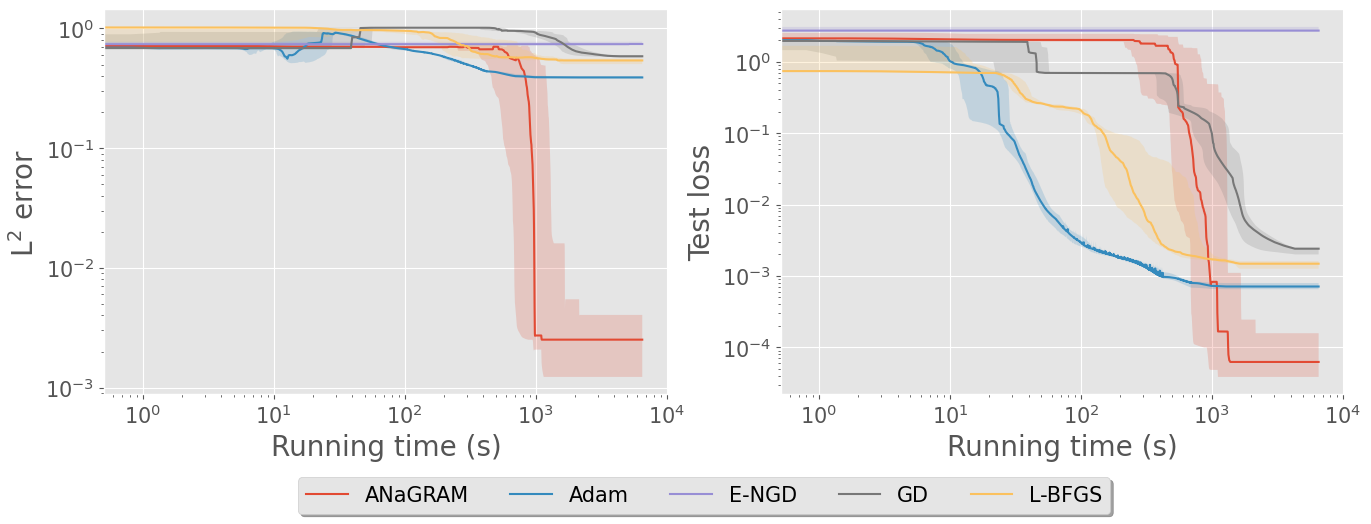}
 \vspace{-0.60cm}
 \caption{Median absolute $L^2$ errors and Test losses for the Allen-Cahn equation across 10 different initializations for the five optimizers, relative to computation time (except for ENGD for which we only took 3 initializations). The shaded area indicates the range between the first and third quartiles.}
 \label{fig:allen-cahn_time_comp}
\end{figure}

\subsection{Statistical tables of results}
\subsubsection{2\,D Laplace}
\begin{table}[H]
\caption{Median, Maximum and Minimum $\dL^2$-errors of the optimizers for the 2\,D Laplace equation.}
\begin{center}
\begin{tabular}{lrrr}
\toprule
 & Median & Minimum & Maximum \\
\midrule
ANaGRAM & \textbf{2.42e-09} & \textbf{1.70e-10} & \textbf{1.19e-08} \\
Adam & 2.05e-03 & 1.67e-03 & 2.86e-03 \\
E-NGD & 1.31e-06 & 8.43e-07 & 3.87e-05 \\
GD & 4.25e-02 & 1.01e-02 & 1.25e-01 \\
L-BFGS & 1.15e-02 & 3.08e-03 & 1.55e-02 \\
\bottomrule
\end{tabular}
\end{center}
\end{table}

\begin{table}[H]
\caption{Median, Maximum and Minimum of the test loss of the optimizers for the 2\,D Laplace equation.}
\begin{center}
\begin{tabular}{lrrr}
\toprule
 & Median & Minimum & Maximum \\
\midrule
ANaGRAM & \textbf{3.85e-13} & \textbf{8.49e-15} & \textbf{1.43e-12} \\
Adam & 3.51e-04 & 2.29e-04 & 4.31e-04 \\
E-NGD & 2.91e-09 & 1.01e-10 & 3.57e-08 \\
GD & 3.42e-02 & 4.32e-03 & 1.76e-01 \\
L-BFGS & 2.37e-03 & 8.91e-04 & 9.09e-03 \\
\bottomrule
\end{tabular}
\end{center}
\end{table}

\begin{table}[H]
\caption{Mean and Standard deviation of $\dL^2$-errors of the optimizers for the 2\,D Laplace equation.}
\begin{center}
\begin{tabular}{lrr}
\toprule
 & mean & std \\
\midrule
ANaGRAM & \textbf{3.49e-09} & \textbf{3.58e-09} \\
Adam & 2.19e-03 & 4.18e-04 \\
E-NGD & 5.37e-06 & 1.18e-05 \\
GD & 5.41e-02 & 1.57e-02 \\
L-BFGS & 1.13e-02 & 2.94e-03 \\
\bottomrule
\end{tabular}
\end{center}
\end{table}

\begin{table}[H]
\caption{Mean and Standard deviation of of the test loss of the optimizers for the 2\,D Laplace equation.}
\begin{center}
\begin{tabular}{lrr}
\toprule
 & mean & std \\
\midrule
ANaGRAM & \textbf{4.27e-13} & \textbf{4.66e-13} \\
Adam & 3.37e-04 & 7.66e-05 \\
E-NGD & 7.00e-09 & 1.11e-08 \\
GD & 5.39e-02 & 5.39e-02 \\
L-BFGS & 3.04e-03 & 2.23e-03 \\
\bottomrule
\end{tabular}
\end{center}
\end{table}

\subsubsection{Heat}
\begin{table}[H]
\caption{Median, Maximum and Minimum $\dL^2$-errors of the optimizers for the Heat equation.}
\begin{center}
\begin{tabular}{lrrr}
\toprule
 & Median & Minimum & Maximum \\
\midrule
ANaGRAM & \textbf{6.48e-07} & \textbf{3.67e-07} & \textbf{6.15e-06} \\
Adam & 1.07e-03 & 5.96e-04 & 3.94e-03 \\
E-NGD & 2.50e-06 & 1.02e-06 & 6.38e-06 \\
GD & 2.02e-02 & 1.04e-02 & 2.39e-02 \\
L-BFGS & 2.97e-03 & 5.14e-04 & 6.73e-03 \\
\bottomrule
\end{tabular}
\end{center}
\end{table}

\begin{table}[H]
\caption{Median, Maximum and Minimum test loss of the optimizers for the Heat equation.}
\begin{center}
\begin{tabular}{lrrr}
\toprule
 & Median & Minimum & Maximum \\
\midrule
ANaGRAM & \textbf{6.82e-11} & \textbf{1.90e-11} & \textbf{2.50e-10} \\
Adam & 2.37e-04 & 1.11e-04 & 1.41e-03 \\
E-NGD & 2.18e-09 & 5.62e-10 & 1.35e-08 \\
GD & 1.01e-02 & 4.70e-03 & 1.93e-02 \\
L-BFGS & 6.84e-04 & 5.85e-05 & 3.34e-03 \\
\bottomrule
\end{tabular}
\end{center}
\end{table}

\begin{table}[H]
\caption{Mean and Standard deviation of $\dL^2$-errors of the optimizers for the Heat equation.}
\begin{center}
\begin{tabular}{lrr}
\toprule
 & mean & std \\
\midrule
ANaGRAM & \textbf{1.28e-06} & \textbf{1.75e-06} \\
Adam & 1.55e-03 & 5.19e-04 \\
E-NGD & 2.89e-06 & 1.77e-06 \\
GD & 1.92e-02 & 9.60e-04 \\
L-BFGS & 3.09e-03 & 1.74e-03 \\
\bottomrule
\end{tabular}
\end{center}
\end{table}

\begin{table}[H]
\caption{Mean and Standard deviation of test loss of the optimizers for the Heat equation.}
\begin{center}
\begin{tabular}{lrr}
\toprule
 & mean & std \\
\midrule
ANaGRAM & \textbf{8.56e-11} & \textbf{7.05e-11} \\
Adam & 3.63e-04 & 3.93e-04 \\
E-NGD & 3.53e-09 & 3.83e-09 \\
GD & 1.20e-02 & 1.05e-03 \\
L-BFGS & 8.54e-04 & 9.16e-04 \\
\bottomrule
\end{tabular}
\end{center}
\end{table}

\subsubsection{5\,D Laplace}
\begin{table}[H]
\caption{Median, Maximum and Minimum $\dL^2$-errors of the optimizers for the 5\,D Laplace equation.}
\begin{center}
\begin{tabular}{lrrr}
\toprule
 & Median & Minimum & Maximum \\
\midrule
ANaGRAM & 3.76e-05 & \textbf{6.99e-06} & 8.23e-05 \\
Adam & 4.86e-02 & 3.41e-02 & 6.08e-02 \\
E-NGD & \textbf{1.40e-05} & 1.18e-05 & \textbf{1.64e-05} \\
GD & 8.44e-02 & 1.50e-02 & 1.28e-01 \\
L-BFGS & 9.08e-02 & 1.55e-02 & 1.71e-01 \\
\bottomrule
\end{tabular}
\end{center}
\end{table}

\begin{table}[H]
\caption{Median, Maximum and Minimum test loss of the optimizers for the 5\,D Laplace equation.}
\begin{center}
\begin{tabular}{lrrr}
\toprule
 & Median & Minimum & Maximum \\
\midrule
ANaGRAM & 3.68e-08 & \textbf{1.03e-08} & 2.20e-07 \\
Adam & 1.53e-02 & 1.02e-02 & 2.54e-02 \\
E-NGD & \textbf{1.51e-08} & 1.50e-08 & \textbf{2.51e-08} \\
GD & 6.00e-02 & 6.37e-03 & 1.11e-01 \\
L-BFGS & 7.65e-02 & 5.34e-03 & 2.25e-01 \\
\bottomrule
\end{tabular}
\end{center}
\end{table}

\begin{table}[H]
\caption{Mean and Standard deviation of $\dL^2$-errors of the optimizers for the 5\,D Laplace equation.}
\begin{center}
\begin{tabular}{lrr}
\toprule
 & mean & std \\
\midrule
ANaGRAM & 4.00e-05 & 2.93e-05 \\
Adam & 4.83e-02 & 8.06e-03 \\
E-NGD & \textbf{1.41e-05} & \textbf{2.29e-06} \\
GD & 7.64e-02 & 1.75e-02 \\
L-BFGS & 1.00e-01 & 2.19e-02 \\
\bottomrule
\end{tabular}
\end{center}
\end{table}

\begin{table}[H]
\caption{Mean and Standard deviation of test loss of the optimizers for the 5\,D Laplace equation.}
\begin{center}
\begin{tabular}{lrr}
\toprule
 & mean & std \\
\midrule
ANaGRAM & 6.37e-08 & 7.01e-08 \\
Adam & 1.63e-02 & 4.19e-03 \\
E-NGD & \textbf{1.84e-08} & \textbf{5.55e-09} \\
GD & 5.64e-02 & 3.60e-02 \\
L-BFGS & 8.20e-02 & 6.80e-02 \\
\bottomrule
\end{tabular}
\end{center}
\end{table}

\subsubsection{Allen-Cahn}
\begin{table}[H]
\caption{Median, Maximum and Minimum $\dL^2$-errors of the optimizers for the Allen-Cahn equation.}
\begin{center}
\begin{tabular}{lrrr}
\toprule
 & Median & Minimum & Maximum \\
\midrule
ANaGRAM & \textbf{2.51e-03} & \textbf{6.14e-04} & \textbf{2.04e-02} \\
Adam & 3.90e-01 & 3.78e-01 & 4.80e-01 \\
E-NGD & 7.39e-01 & 7.32e-01 & 8.10e-01 \\
GD & 5.86e-01 & 5.43e-01 & 8.37e-01 \\
L-BFGS & 5.40e-01 & 4.33e-01 & 7.45e-01 \\
\bottomrule
\end{tabular}
\end{center}
\end{table}

\begin{table}[H]
\caption{Median, Maximum and Minimum test loss of the optimizers for the Allen-Cahn equation.}
\begin{center}
\begin{tabular}{lrrr}
\toprule
 & Median & Minimum & Maximum \\
\midrule
ANaGRAM & \textbf{6.22e-05} & \textbf{1.45e-05} & 1.38e-03 \\
Adam & 7.10e-04 & 6.29e-04 & \textbf{1.18e-03} \\
E-NGD & 2.74e+00 & 2.58e+00 & 3.43e+00 \\
GD & 2.41e-03 & 1.70e-03 & 1.93e-02 \\
L-BFGS & 1.48e-03 & 9.08e-04 & 5.09e-03 \\
\bottomrule
\end{tabular}
\end{center}
\end{table}

\begin{table}[H]
\caption{Mean and Standard deviation of $\dL^2$-errors of the optimizers for the Allen-Cahn equation.}
\begin{center}
\begin{tabular}{lrr}
\toprule
 & mean & std \\
\midrule
ANaGRAM & \textbf{4.32e-03} & 5.93e-03 \\
Adam & 4.02e-01 & \textbf{5.19e-04} \\
E-NGD & 7.60e-01 & 4.24e-02 \\
GD & 6.06e-01 & 5.99e-02 \\
L-BFGS & 5.49e-01 & 6.85e-02 \\
\bottomrule
\end{tabular}
\end{center}
\end{table}

\begin{table}[H]
\caption{Mean and Standard deviation of test loss of the optimizers for the Allen-Cahn equation.}
\begin{center}
\begin{tabular}{lrr}
\toprule
 & mean & std \\
\midrule
ANaGRAM & \textbf{2.19e-04} & 4.16e-04 \\
Adam & 7.81e-04 & \textbf{1.01e-04} \\
E-NGD & 2.92e+00 & 4.51e-01 \\
GD & 3.95e-03 & 5.41e-03 \\
L-BFGS & 1.77e-03 & 1.19e-03 \\
\bottomrule
\end{tabular}
\end{center}
\end{table}

\subsection{Additional experiment : Burgers equation}
 We consider the $(1+1)$ dimensional Burger equation:%
 \vspace{-.2cm}
 \begin{align}\label{eqn:burger-2D}
  \begin{cases}
  \partial_t u + u\,\partial_{x} u - \nu \partial_{xx} u = 0 & \textrm{in }\Omega=[0,1]\times [-1,1]\\
  u = 0 & \textrm{on }\partial\Omega_{\text{border}}=[0,1]\times \{-1,1\}\\
  u(0,x) = -\sin(\pi x) & \textrm{on }\partial\Omega_0=\{0\}\times[-1,1]
 \end{cases}
 \\[-.7cm]\nonumber
 \end{align}

\begin{figure}[H]
 \centering
 \vspace{-0.50cm}
 \includegraphics[width=\linewidth]{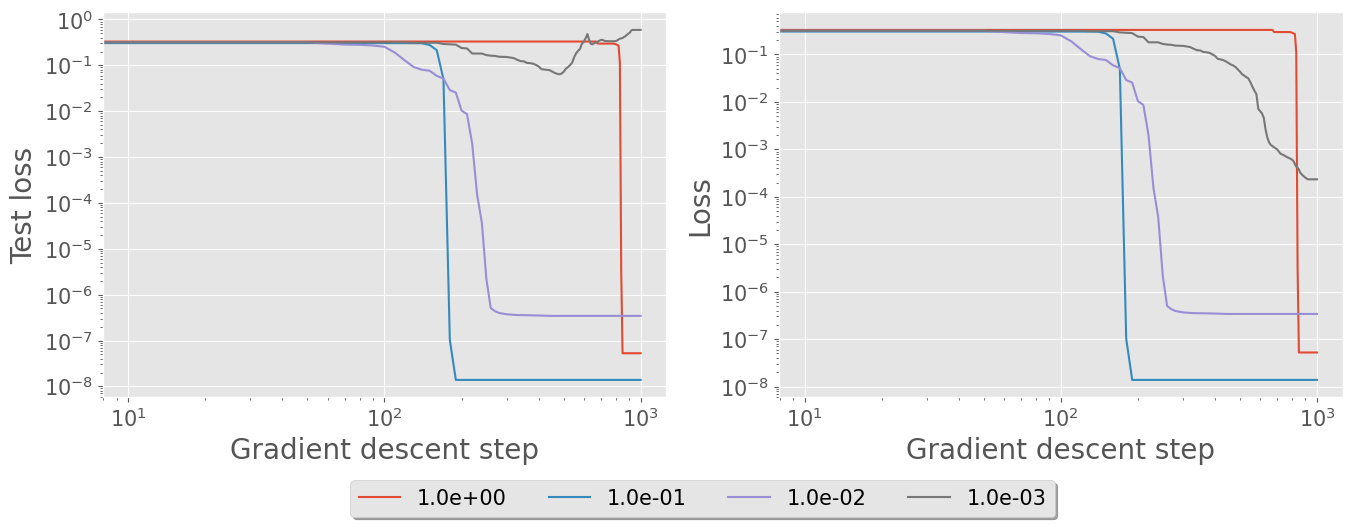}
 \vspace{-0.10cm}
 \caption{Test and train losses for the Burgers equation for various viscosities $\nu$.}
 \label{fig:burger_step_comp}
\end{figure}

\begin{figure}[H]
    \centering
    \begin{subfigure}[b]{0.4\textwidth}
        \centering
        \includegraphics[width=\textwidth]{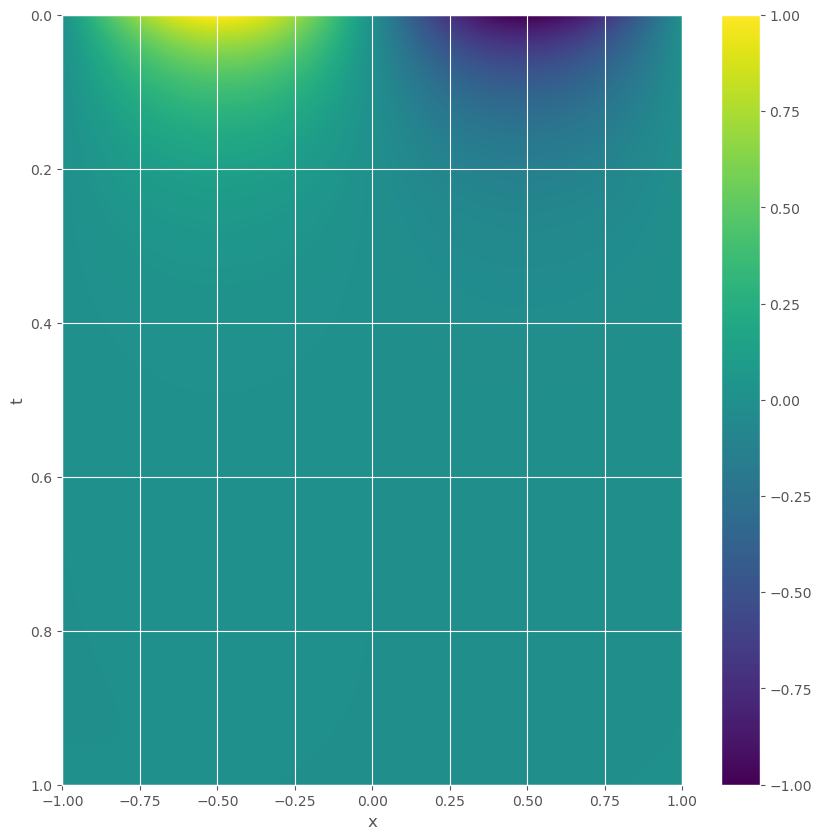} %
        \caption{$\nu=1$}
        \label{fig:Burgers-profile-visco-1e0}
    \end{subfigure}
    \hfill
    \begin{subfigure}[b]{0.4\textwidth}
        \centering
        \includegraphics[width=\textwidth]{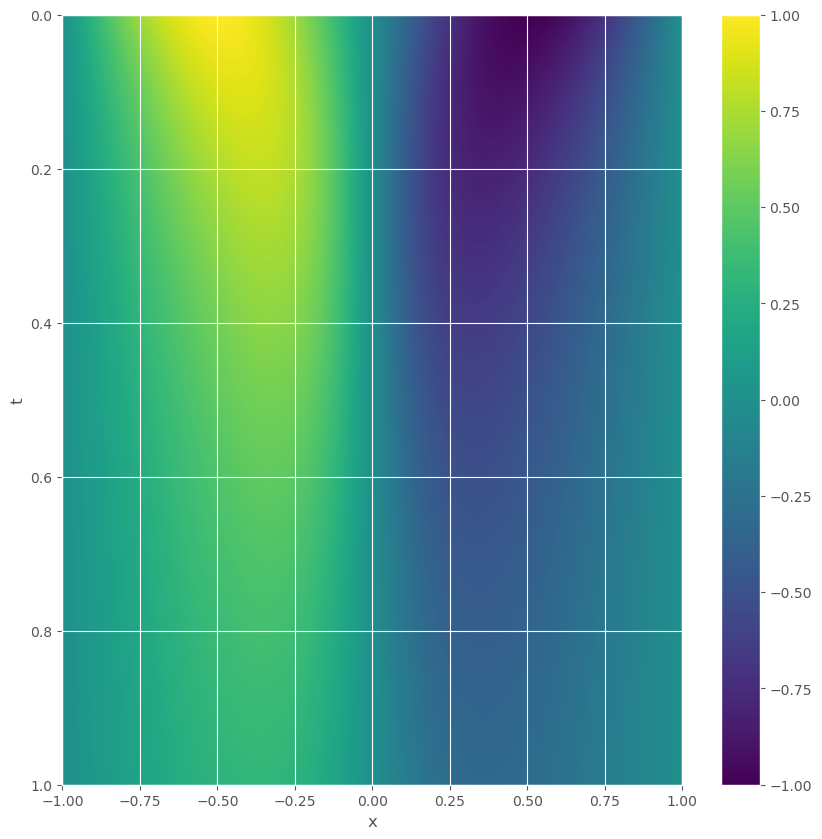} %
        \caption{$\nu=10^{-1}$}
        \label{fig:Burgers-profile-visco-1e-1}
    \end{subfigure}
    \vskip\baselineskip %
    \begin{subfigure}[b]{0.4\textwidth}
        \centering
        \includegraphics[width=\textwidth]{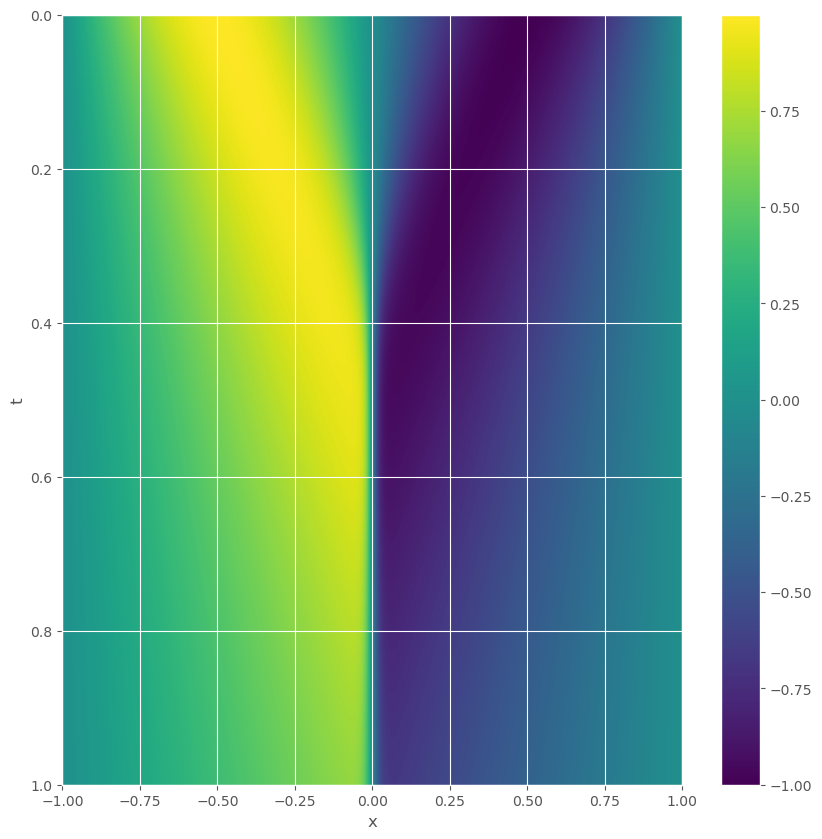} %
        \caption{$\nu=10^{-2}$}
        \label{fig:Burgers-profile-visco-1e-2}
    \end{subfigure}
    \hfill
    \begin{subfigure}[b]{0.4\textwidth}
        \centering
        \includegraphics[width=\textwidth]{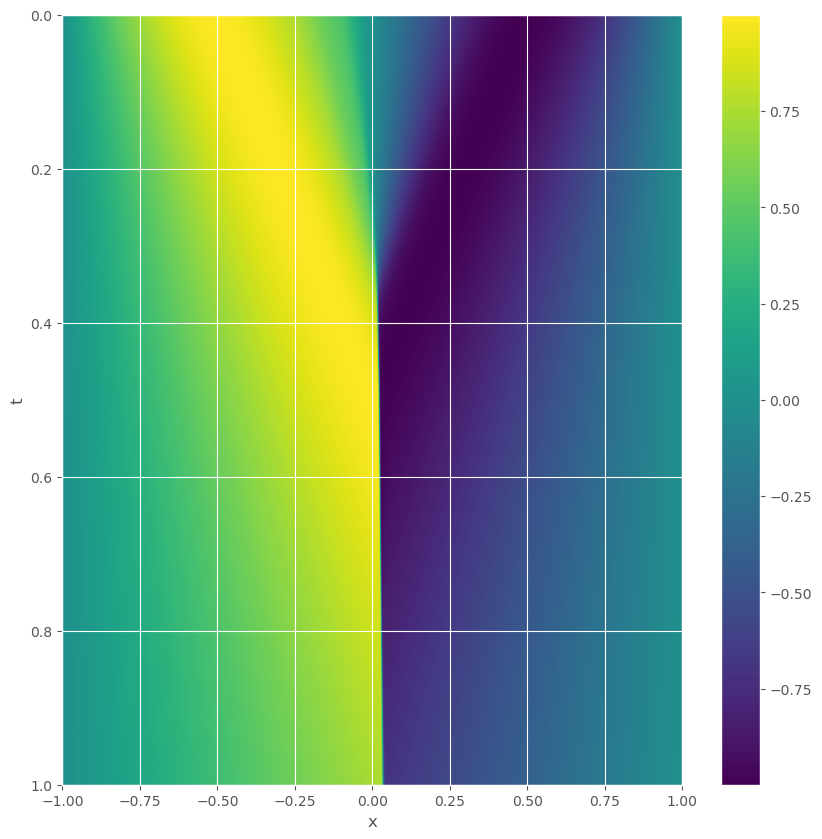} %
        \caption{$\nu=10^{-3}$}
        \label{fig:Burgers-profile-visco-1e-3}
    \end{subfigure}

    \caption{PINN solution profiles for the Burgers equation for various viscosities $\nu$}
    \label{fig:Burgers-profile-main}
\end{figure}

We considered the burger with $4$ viscosities $\nu\in \{1, 10^{-1}, 10^{-2}, 10^{-3}\}$. Since a shock develops in finite time as $\nu\to 0$, using a regular grid did not yield sufficiently accurate results.
Consequently, we manually adjusted the grid. Specifically, we defined two discretizations $D_t$, $D_x$, corresponding to $[0,1]$ and $[-1,1]$, respectively, as follows:
\begin{itemize}
 \item $D_t$: A regular grid discretization with 240 points.
 \item $D_x$: A non-regular grid discretization constructed as:
 \begin{itemize}
  \item A regular-grid of $60=\frac{240}{4}$ points on $[-1,-0.25)$,
   \item A grid defined as $-2^{-D_{\log}}$ on $[-0.25,0)$, where $D_{\log}$ is a regular grid of $60=\frac{240}{4}$ point on $[2,32]$,
   \item The point $0$,
   \item A grid defined as $2^{-D_{\log}}$ on $(0,0.25]$, where $D_{\log}$ is a regular grid of $60=\frac{240}{4}$ point on $[2,32]$,
  \item A regular grid of $60=\frac{240}{4}$ points on $(0.25, 1]$.
 \end{itemize}
\end{itemize}
With these definitions, the following grids were constructed:
\begin{itemize}
 \item $G_\Omega = D_t\times D_x$: The grid for the domain $\Omega=[0,1]\times [-1,1]$,
 \item $G_{\partial\Omega_{\text{border}}}=D_t\times \{-1,1\}$: The grid for the boundary $\partial\Omega_{\text{border}}=[0,1]\times \{-1,1\}$,
 \item $G_{\partial\Omega_0}=\{0\}\times D_x$: The grid for the boundary $\partial\Omega_0=\{0\}\times[-1,1]$.
\end{itemize}

We applied ANaGRAM for $1000$ iterations, using a neural network with three hidden layers, each containing $32$ neurons, resulting in a total of $P = 2241$ parameters.
The cutoff factor is set to $\epsilon = 5.10^{-7}\times{\Delta_{\vtheta}}_{\max}$, where ${\Delta_{\vtheta}}_{\max}$ represents the largest eigenvalue of $\widehat{\phi}_\vtheta$ (see line~\ref{algl:feature-matrix-pinns} of Algorithm~\ref{alg:anagram-pinns}).
In \cref{fig:burger_step_comp}, we present the training and test losses for the different viscosities. The test loss was calculated using a grid constructed similarly to the training grid but with five times as many points.

For a viscosity of $\nu=10^{-3}$, the test loss does not appear to converge, whereas the train loss does. This behavior can be attributed to the grid computation method, which disproportionately emphasizes points near the shock. This explanation is supported by the solution profile shown in \cref{fig:Burgers-profile-main}, where, for $\nu=10^{-3}$, the shock in the learned solution is slightly shifted from $x=0$ toward $x>0$.

\section{Examples of parametric models}\label{sec-app:parametric-models-examples}

   \subsection{Partial Fourier's series}\label{subsec:partial-fourier-series}
    Let us fix a dimension $d\in\NN$. We then define the $N$-partial Fourier's Serie in $[0,1]^d$ as:
    \begin{literaleq}{eqn:Fouriers-Serie-def}
	S_N:\left\{\begin{array}{lll}
	\RR^{[\![-N,N]\!]^d} &\to& \dL^2([0,1]^d\to\CC)\\
	\left(\alpha_{k_1,\ldots,k_d}\right)
	& \mapsto &
	\left(x\in [0,1]^d\mapsto \sum\limits_{k_1=-N}^N\cdots\sum\limits_{k_d=-N}^N \alpha_{k_1,\ldots,k_d} e^{2i\pi \left(\sum_{l=1}^dk_l x_l\right)}\right)
	\end{array}\right..
	\end{literaleq}
	We see that $\forallt k\in [\![-N,N]\!]^d$ and $\vtheta\in\RR^{[\![-N,N]\!]^d}$, $\partial_k {S_N}_{|\vtheta}=\left(x\in\Omega\mapsto e^{2i\pi \left(\sum_{l=1}^dk_l x_l\right)}\right)$.
	As a consequence: $\forallt\vtheta\in\RR^{[\![-N,N]\!]^d}$
	\begin{equation*}
	 \dd {S_N}_{|\vtheta} = S_N,
	\end{equation*}
	an thus: $\forallt \vtheta\in\RR^{[\![-N,N]\!]^d}$
	\begin{literaleq}{eqn:Fourier-Serie-Tangent-Space}
	 \cM=T_\vtheta\cM=\Span\left(x\in [0,1]^d\mapsto e^{2i\pi \left(\sum_{l=1}^dk_l x_l\right)}\,:\, k\in [\![-N,N]\!]^d\right)
	\end{literaleq}
	This precisely means that $S_N$ is a linear parametric model.

   \subsection{Multilayer perceptron}\label{Ex:MLP-def}
    Historically, Multilayer perceptrons (MLPs) were the first neural network models to be proposed \citep{rosenblattPerceptronProbabilisticModel1958}. Without going into an unnecessarily formal description, we will define MLPs of depth $L\in\NN$ as a function $\RR^n\to\RR^m$ defined by induction:
    \begin{description}
     \item[Initialization (Input Layer)]: $n_0=n$, and
     \begin{equation*}
      \mathbf{a}^{(0)} := x\in\RR^{n_0}
     \end{equation*}
     \item[Inductive Step (Hidden Layers)]: $\forallt 1\leq l\leq L-1,\, n_l\in\NN$, $\sigma^{(l)}:\RR\to\RR$, and
     \begin{align*}
      \mathbf{z}^{(l)} &:= \underbracket{\mathbf{W}^{(l)}}_{\in\RR^{n_l, n_{l-1}}} \mathbf{a}^{(l-1)} + \underbracket{\mathbf{b}^{(l)}}_{\in\RR^{n_l}}\,;\,
      &
      \mathbf{a}^{(l)} &:= \underbracket{\sigma^{(l)}}_{\textrm{componentwise}}(\mathbf{z}^{(l)}),
     \end{align*}
     \item[Final Step (Output Layer)]:
     \begin{equation}\label{eqn:MLP-last-layer}
      f_{\left(\mathbf{W}^{(l)}, \mathbf{b}^{(l)}\right)_{l=1}^L}(x):= \underbracket{\mathbf{W}^{(L)}}_{\in\RR^{m, n_{L-1}}} \mathbf{a}^{(L-1)} + \underbracket{\mathbf{b}^{(L)}}_{\in\RR^m},
     \end{equation}
    \end{description}
    Equipped with this definition, we define a parametric model $u$ associated to $f_{\left(\mathbf{W}^{(l)}, \mathbf{b}^{(l)}\right)_{l=1}^L}$ by considering any differentiable parametrization $\varphi:\RR^P\to \Pi_{l=1}^L \RR^{w_{l-1}\times w_l}\times \RR^{w_l}$, and then defining:
    \begin{equation}\label{eqn:parametric-model-mlp}
	u:\left\{\begin{array}{lll}
	\RR^P &\to& \cH\\
	\vtheta & \mapsto & f_{\varphi(\vtheta)}
	\end{array}\right.,
	\end{equation}
    \ie using $\varphi$ to encode the coefficients of the weights $\mathbf{W}^{(l)}$ and biases $\mathbf{b}^{(l)}$ in the coordinates of a vector in $\RR^P$. Note that if $\varphi$ is bijective, then $P=\sum_{l=1}^L (w_{l-1}+1)w_l$.

   \begin{Rk}
   A parametric model $u$ associated to an MLP, as defined in \cref{eqn:parametric-model-mlp}, is not linear,
   if activations $(\sigma^{(l)})_{1\leq l\leq L}$ are not, and $L\geq2$ (yielding the qualifier ``deep'' in ``deep learning''). Nevertheless,
   if $\varphi$ is linear, we may note that $u$ is still linear with respect to the parameters associated to weight $\mathbf{W}^{(L)}$ and bias $\mathbf{b}^{(L)}$. In particular, $\forallt \vtheta\in\RR^P$, $u_{|\vtheta}\in \Ima \dd u_{|\vtheta}$.
   \end{Rk}

   \section{Comprehensive introduction to Empirical Natural Gradient and ANaGRAM framework}
   \label{sec-app:anagram-formal-derivation}
   In this section, we propose a more comprehensive introduction to the concepts introduced in \cref{sec:ANaGRAM-quadratic-context}, as well as proofs for \cref{Cor:empirical-population-equivalence}, \cref{Th:anagram-main-thm} and \cref{Prop:projection-not-needed-in-linear-case}, which are stated therein.
   To this end, we need to review the notions of Neural Tangent Kernel (NTK) in \cref{subsec:NTK-def} and Reproducing Kernel Hilbert Space (RKHS) in \cref{subsec:RKHS-theory}, before introducing empirical Natural Gradient in \cref{subsec:consequences of NNTK-theory}, which is the key theoretical concept behind ANaGRAM.

   \subsection{Neural Tangent Kernel (NTK)}\label{subsec:NTK-def}
 \textbf{Neural Tangent Kernel (NTK)} has been introduced by \citet{jacotNeuralTangentKernel2018a} as a fundamental tool connecting neural networks to kernel methods, another very popular tool in Machine-learning \citep{scholkopfLearningKernelsSupport2002}. More precisely, it shows that for an empirical quadratic loss: %
 \begin{equation*}
  \literalref{eqn:scalar-loss}\tag{\ref{eqn:scalar-loss}}
 \end{equation*}
 the gradient descent:
 \begin{equation*}
  \literalref{eqn:classical-gradient-descent},\tag{\ref{eqn:classical-gradient-descent}}
 \end{equation*}
 can be reinterpreted in the functional space, in the limit $\eta\to 0$, as the
 the differential equation in $u_{|\vtheta}:\RR^+\to\dL^2\left(\Omega\to\RR\right)$, with the initial data $u_{|\vtheta_0}=u_0\in\dL^2\left(\Omega\to\RR\right)$:
 \begin{align*}
  \literalignref{eqn:NTK-dynamics}.\tag{\ref{eqn:NTK-dynamics}}
 \end{align*}
 By the same observations as for \cref{eqn:population-limit-natural-gradient-update}, we observe that \cref{eqn:NTK-dynamics} induces the following differential equation in %
 $\vtheta:\RR^+\to\RR^P$, with the initial data $\vtheta(0)=\vtheta_0\in\RR^P$:
 \begin{equation}\label{eqn:NTK-dynamics-parametric}
  \frac{\dd \vtheta}{\dd t} = \dd u_{|\vtheta_t}^\dagger\left(-\sum_{i=1}^N {NTK}_{\vtheta_t}(x, x_i)(u_{|\vtheta_t}(x_i)-y_i)\right) = -\sum_{i=1}^N\dd u_{|\vtheta_t}^\dagger\Big({NTK}_{\vtheta_t}(x, x_i)\Big)(u_{|\vtheta_t}(x_i)-y_i),
 \end{equation}

 Using Euler's approximation method, this can of course be rewritten as the discrete upate:
 \begin{equation}\label{eqn:NTK-discrete-dynamics}
   \vtheta_{t+1}= \vtheta_t -\eta \sum_{i=1}^N \dd u_{|\vtheta_t}^\dagger\Big({NTK}_{\vtheta_t}(x, x_i)\Big)(u_{|\vtheta_t}(x_i)-y_i),
 \end{equation}
  Note that if we assume $\dd u_{|\vtheta_t}$ to be inversible, we can \textit{implicitly} compute the inverse, yielding:
  \begin{equation}\label{eqn:implicit-inversion-NTK}
   \dd u_{|\vtheta_t}^\dagger\Big({NTK}_{\vtheta_t}(x, x_i)\Big)= \sum_{p=1}^P \partial_p u_{|\vtheta_t}(x_i)\, \mbox{$\ve^{(p)}$},
  \end{equation}
  making \cref{eqn:NTK-discrete-dynamics} effectively correspond to the usual gradient descent in \cref{eqn:classical-gradient-descent}.
 \citet{rudnerNaturalNeuralTangent2019} further extended this framework to the case of natural gradient descent in the context of information geometry. They demonstrate that the learning dynamic in this scenario is governed by a new kernel, named the \textbf{Natural Neural Tangent Kernel (NNTK)}, which is defined as: $\forallt \vtheta\in\RR^P$
 \begin{equation}\label{eqn:NNTK-def-fisher}
  {NNTK}_\vtheta(x,y):=\sum_{1\leq p,q\leq P} \left(\partial_p u_{|\vtheta}(x)\right){F_\vtheta^\dagger}_{pq}(\partial_p u_{|\vtheta}(y))^t,
 \end{equation}
 where $F_\vtheta$ is the Fisher information matrix. In the more general context of Riemannian Geometry, \citet{baiGeometricUnderstandingNatural2022} show that the NNTK is given by: %
 $\forallt \vtheta\in\RR^P$
 \begin{literaleq}{eqn:NNTK-def-gram-Riemann}
  {NNTK}_\vtheta(x,y):=\sum_{1\leq p,q\leq P} \left(\partial_p u_{|\vtheta}(x)\right){G_\vtheta^\dagger}_{pq}(\partial_p u_{|\vtheta}(y))^t,
 \end{literaleq}
 with $G_\vtheta$ being the Gram matrix relative to a Riemannian metric $\cG_\vtheta$ as introduced in \cref{subsec:NG-def}: $\forallt \vtheta\in\RR^P$, $\forallt 1\leq p,q\leq P$
 \begin{equation}
  {G_{\vtheta}}_{p,q}:=\cG_{\vtheta}(\partial_p u_{|\vtheta}, \partial_q u_{|\vtheta}).
 \end{equation}
 In particular, when $\cG_{\vtheta}$ is given by the metric of an ambient Hilbert space $\cH$, this yields: %
 $\forallt \vtheta\in\RR^P$, $\forallt 1\leq p,q\leq P$, $\forallt x,y\in\Omega$
 \begin{align*}
  \literalignref{eqn:NNTK-def-gram}\tag{\ref{eqn:NNTK-def-gram}}
 \end{align*}
 For the quadratic problem of \cref{eqn:scalar-loss},
 natural gradient then yields the functional dynamics:
 \begin{equation}
  \literallabel{eqn:NNTK-dynamics}
  {\frac{\dd u_{|\vtheta_t}}{\dd t}(x) = -\sum_{i=1}^N {NNTK}_{\vtheta_t}(x, x_i)(u_{|\vtheta_t}(x_i)-y_i)}.
 \end{equation}
 In the following, we will further explore the (N)NTK and its connection to the natural gradient in the context of Reproducing Kernel Hilbert Space (RKHS) theory.
 \subsection{A perspective on Reproducing Kernel Hilbert Spaces (RKHS)}\label{subsec:RKHS-theory}
 In this subsection we will carefully review the intimate link between neural tangent kernels, projections and reproducing kernels.
 To begin, let us define what a kernel function is, following \citet[Definition 2.12]{paulsenIntroductionTheoryReproducing2016b}:
 \begin{Def}\label{def:kernel-function}
  A function $k:\Omega\times\Omega\to\KK$, $\KK\in\{\RR,\CC\}$ is called a \textbf{kernel function} provided that: $\forallt N\in\NN$, $\forallt (x_i)\in\Omega^N$, $\forallt \alpha\in\KK^N$
  \begin{equation}\label{eqn:kernel-function-def}
   \sum\limits_{1\leq i,j\leq N}\alpha_i k(x_i,x_j)\overline{\alpha}_j \geq 0
  \end{equation}
 \end{Def}
 Equipped with \cref{def:kernel-function}, we can state the following theorem, which binds different perspectives on RKHS:
 \begin{restatable}{Th}{RKHSdef}
 \label{Th:equivalences-of-RKHS-definitions}
 An Hilbert space $\cH$ consisting of functions $\Omega\to\KK$, $\KK\in\{\RR,\CC\}$, is a Reproducing Kernel Hilbert Space if and only if one of the following equivalent conditions are met:
 \begin{enumerate}
  \item There is a kernel $k:\Omega\times\Omega\to\RR$ such that $\cH=\overline{\Span\left(k(x, \cdot):x\in\Omega\right)}$ and: $\forallt x,y\in\Omega$
  \begin{equation}\literallabel{eqn:first-reproducing-property}
   {\braket{k(x,\cdot)}{k(y, \cdot)}_\cH=k(x,y)}.
  \end{equation}
  In particular, we have the reproducing property: $\forallt v\in\cH$, $\forallt x\in\Omega$
  \begin{equation}\literallabel{eqn:second-reproducing-property}
   {v(x)=\braket{k(x,\cdot)}{v}_\cH}.
  \end{equation}
  \item For all $x\in\Omega$, the evaluation form $e_x:f\in\cH\mapsto f(x)$ is continuous.
  \end{enumerate}
 \end{restatable}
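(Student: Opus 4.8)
The plan is to prove the two implications $(1)\Rightarrow(2)$ and $(2)\Rightarrow(1)$, which together give the asserted equivalence and hence, under either of the customary definitions of an RKHS, the full statement; the argument is the classical Riesz / Moore--Aronszajn one, following \citet{paulsenIntroductionTheoryReproducing2016b}. The implication $(1)\Rightarrow(2)$ is the easy half: assuming a kernel $k$ with the reproducing property, Cauchy--Schwarz gives, for every $x\in\Omega$ and $v\in\cH$,
\[
 |v(x)| \;=\; \bigl|\braket{k(x,\cdot)}{v}_\cH\bigr| \;\le\; \norm[k(x,\cdot)]_\cH\,\norm[v]_\cH \;=\; \sqrt{k(x,x)}\,\norm[v]_\cH,
\]
so the evaluation form $e_x$ is bounded, hence continuous, with $\norm[e_x]\le\sqrt{k(x,x)}$.

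For $(2)\Rightarrow(1)$ I would manufacture the kernel from the Riesz representers of the evaluation forms. For each $x\in\Omega$, continuity of $e_x$ and the Riesz representation theorem supply a unique $k_x\in\cH$ with $v(x)=\braket{k_x}{v}_\cH$ for all $v\in\cH$; set $k(x,y):=k_x(y)$, so that $k(x,\cdot)=k_x$ and the reproducing property holds by construction. Evaluating it at $v=k_y$ yields $\braket{k(x,\cdot)}{k(y,\cdot)}_\cH=\braket{k_x}{k_y}_\cH=k_y(x)=k(y,x)$, which equals $k(x,y)$ by the (Hermitian) symmetry of the inner product --- in the real case $k$ is genuinely symmetric, in the complex case one keeps track of the conjugate --- establishing the self-reproducing identity $\braket{k(x,\cdot)}{k(y,\cdot)}_\cH=k(x,y)$. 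That $k$ satisfies \cref{def:kernel-function} is then immediate: for any finite family $(x_i)$ and $\alpha\in\KK^N$, the self-reproducing identity collapses the double sum to
\[
 \sum_{1\le i,j\le N}\alpha_i\,k(x_i,x_j)\,\overline{\alpha_j} \;=\; \Bigl\|\sum_{i=1}^N \overline{\alpha_i}\,k(x_i,\cdot)\Bigr\|_\cH^2 \;\ge\; 0.
\]
Finally, for the density statement $\cH=\overline{\Span\left(k(x,\cdot):x\in\Omega\right)}$, it suffices to show that the orthogonal complement of $\Span\left(k_x:x\in\Omega\right)$ is trivial: if $f\in\cH$ satisfies $\braket{k_x}{f}_\cH=0$ for every $x$, then $f(x)=0$ for every $x$, i.e. $f=0$. (Uniqueness of such a $k$, if wanted, follows since any reproducing kernel represents the same family of evaluation forms.)

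It remains to justify the ``in particular'' clause: that the self-reproducing identity together with density forces the full reproducing property $v(x)=\braket{k(x,\cdot)}{v}_\cH$ for all $v\in\cH$. By bilinearity this already holds on $\Span\left(k(y,\cdot):y\in\Omega\right)$, and it extends to $\cH$ by density, using that $v\mapsto\braket{k(x,\cdot)}{v}_\cH$ is continuous and agrees on that dense subspace with the bounded extension of $e_x$. I do not expect a genuine obstacle: the whole argument is Riesz representation plus one density observation, and the only points that demand care are the inner-product / conjugation conventions in the complex case and the logical ordering --- the reproducing property must be secured before it is used to read off positive-semidefiniteness and the density of the span.
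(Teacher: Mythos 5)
Your two implications follow the standard Riesz/Moore--Aronszajn argument, which is exactly the route the paper takes: it offers no proof of its own and defers entirely to \citet[Definition 1.1, Proposition 2.13 and Theorem 2.14]{paulsenIntroductionTheoryReproducing2016b}, whose proof is the one you reproduce. Your $(1)\Rightarrow(2)$ via Cauchy--Schwarz, and your $(2)\Rightarrow(1)$ via Riesz representers --- reproducing property by construction, positive semidefiniteness as a squared norm, density because the orthogonal complement of the span kills every point evaluation --- are all correct.

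The one step you should not wave through is the ``in particular'' clause, and your own closing sentence underestimates it. The extension-by-density argument is circular as written: the bounded extension of $e_x$ from $\Span\left(k(y,\cdot):y\in\Omega\right)$ to $\cH$ is, by definition, the functional $v\mapsto\braket{k(x,\cdot)}{v}_\cH$, but identifying that extension with the \emph{actual} evaluation $v\mapsto v(x)$ on all of $\cH$ is precisely the continuity of $e_x$ that condition (2) asserts and that, in the direction $(1)\Rightarrow(2)$, you have not yet established. Concretely, if $v_n\to v$ in norm with $v_n$ in the span, then $v_n(x)=\braket{k(x,\cdot)}{v_n}\to\braket{k(x,\cdot)}{v}$, so the pointwise limit exists; but nothing in ``self-reproducing identity plus dense span'' forces that limit to equal $v(x)$, since a Hilbert space of functions can agree with the RKHS of $k$ on the span while assigning different pointwise values to limits. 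The clean repair is either to read the reproducing property \cref{eqn:second-reproducing-property} as part of the hypothesis of condition (1) --- which is how \citet{paulsenIntroductionTheoryReproducing2016b} sets things up, and is all the paper ever uses downstream, since \cref{Thm:projection-kernel} and \cref{LM:projection-on-span-spaces} only need the finite-dimensional case where evaluation is automatically continuous by \cref{Cor:finite-dimensional-is-RKHS} --- or to invoke Moore's theorem to realize the abstract completion of the span as the space of functions $x\mapsto\braket{k(x,\cdot)}{v}_\cH$, on which the reproducing property holds by construction. With that caveat addressed, the proposal is sound.
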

 A proof of this theorem can be found in \citet[Definition 1.1, Proposition 2.13 and Moore's Theorem 2.14]{paulsenIntroductionTheoryReproducing2016b}.
 We now draw some easy but essential consequences from \cref{Th:equivalences-of-RKHS-definitions}:
 \begin{Cor}\label{Cor:finite-dimensional-is-RKHS}
  Any finite dimensional Hilbert space $\cH$ is a RKHS
 \end{Cor}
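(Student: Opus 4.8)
The plan is to reduce the statement to the second characterization of RKHS in \cref{Th:equivalences-of-RKHS-definitions}, namely that a Hilbert space of functions $\Omega\to\KK$ is a RKHS precisely when every evaluation form $e_x:f\mapsto f(x)$ is continuous. So it suffices to show that each $e_x$ is continuous whenever $\dim\cH<\infty$.

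First I would observe that, for each fixed $x\in\Omega$, the map $e_x$ is a well-defined linear functional on $\cH$: it is well-defined because, by hypothesis, the elements of $\cH$ are genuine functions on $\Omega$, so ``evaluate at $x$'' is unambiguous, and it is linear because pointwise evaluation respects addition and scalar multiplication. Then I would invoke the classical fact that every linear functional on a finite-dimensional normed space is bounded. Concretely, picking an orthonormal basis $(e_i)_{i=1}^d$ of $\cH$ and writing $f=\sum_{i=1}^d \braket{f}{e_i}\,e_i$, the Cauchy--Schwarz inequality gives $|e_x(f)| = \big|\sum_{i=1}^d \braket{f}{e_i}\, e_i(x)\big| \le \big(\sum_{i=1}^d |e_i(x)|^2\big)^{1/2}\,\norm[f]_\cH$, so $e_x$ is bounded with $\norm[e_x] \le \big(\sum_{i=1}^d |e_i(x)|^2\big)^{1/2}<\infty$. (As a byproduct, the Riesz representation of the now-continuous $e_x$ yields the reproducing kernel explicitly, recovering condition~1 of \cref{Th:equivalences-of-RKHS-definitions}, e.g.\ $k_x=\sum_{i=1}^d \overline{e_i(x)}\,e_i$ with $k(x,y):=k_x(y)$.)

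Since this holds for every $x\in\Omega$, condition~2 of \cref{Th:equivalences-of-RKHS-definitions} is met and $\cH$ is a RKHS. I do not expect a genuine obstacle here; the only point requiring a moment's care is the well-definedness and linearity of $e_x$ --- which is exactly where the assumption that $\cH$ consists of functions $\Omega\to\KK$ is used --- while finite-dimensionality enters solely through the automatic continuity of linear functionals.
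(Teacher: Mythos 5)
Your proof is correct and follows essentially the same route as the paper: both reduce the claim to condition~2 of \cref{Th:equivalences-of-RKHS-definitions} and use finite-dimensionality to get automatic continuity of the evaluation functionals $e_x$. The only difference is cosmetic --- the paper invokes equivalence of $\norm[\cdot]_\cH$ with the sup norm, whereas you bound $|e_x(f)|$ directly by Cauchy--Schwarz in an orthonormal basis; your version is if anything slightly more careful, since it avoids having to check that $\sup_{x\in\Omega}|f(x)|$ is finite on $\cH$.
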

 \begin{proof}
  Since $\cH$ is finite dimensional, all norms are equivalent. In particular $\norm[\cdot]_\infty:f\in\cH\mapsto \sup_{x\in\Omega}|f(x)|$ is equivalent to $\norm[\cdot]_\cH$. Then by point 2 in \cref{Th:equivalences-of-RKHS-definitions}, $\cH$ is a RKHS, since $\forallt x\in\Omega$, $e_x$ is continuous for $\norm[\cdot]_\infty$.
 \end{proof}
 In order to set out an important theorem that highlights the link between RKHS and projections, we need the following definition:
 \begin{Def}\label{Def:linear-operator}
  A linear operator $A:\cH\to\cH$ is an integral operator given that there is $k:\Omega\times\Omega\to\KK$, $\KK\in\{\RR,\CC\}$, such that: $\forallt f\in\cH$, $\forallt x\in\Omega$
  \begin{equation}
   A\big(f\big)(x)=\braket{k(x,\cdot)}{f}_\cH.
  \end{equation}
 \end{Def}
 We can now state:
 \begin{restatable}{Th}{projectionKernel}
  \label{Thm:projection-kernel}
  If $\cH_0\subset\cH$ is a RKHS, then the orthogonal projection onto $\cH_0$, $\Pi_{\cH_0}:\cH\to\cH_0$ is an integral operator whose kernel is the reproducing kernel $k$ of $\cH_0$, \ie: $\forallt f\in\cH$, $\forallt x\in\Omega$
  \begin{equation}
   \Pi_{\cH_0}\big(f\big)(x)=\braket{k(x,\cdot)}{f}_\cH
  \end{equation}
  In addition, $k$ is given by: $\forallt x,y\in\Omega$
  \begin{equation}
   k(x,y) = \sum_{i\in\NN} L_i(x)L_i(y)
  \end{equation}
  where $(L_i)_{i\in\NN}$ is any orthonormal basis of $\cH_0$.
 \end{restatable}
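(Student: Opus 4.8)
The plan is to identify the integral kernel of the orthogonal projection $\Pi_{\cH_0}$ with the reproducing kernel $k$ of $\cH_0$ by combining the reproducing property (\cref{Th:equivalences-of-RKHS-definitions}) with the orthogonal decomposition $\cH=\cH_0\oplus\cH_0^\perp$. First I would record that, under the standing reading that $\cH_0$ carries the inner product inherited from $\cH$, being an RKHS makes $\cH_0$ in particular a Hilbert space, hence complete, hence a \emph{closed} subspace of $\cH$; thus $\Pi_{\cH_0}$ is well defined. Applying \cref{Th:equivalences-of-RKHS-definitions} to $\cH_0$ itself, its reproducing kernel $k$ satisfies, for every $x\in\Omega$, $k(x,\cdot)\in\cH_0\subset\cH$ together with the reproducing identity $v(x)=\braket{k(x,\cdot)}{v}_\cH$ for all $v\in\cH_0$ (the inner products of $\cH_0$ and $\cH$ agreeing on $\cH_0$).

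Next, for an arbitrary $f\in\cH$ write $f=\Pi_{\cH_0}(f)+f^\perp$ with $f^\perp\in\cH_0^\perp$. Then for each $x\in\Omega$,
\[
\braket{k(x,\cdot)}{f}_\cH=\braket{k(x,\cdot)}{\Pi_{\cH_0}(f)}_\cH+\braket{k(x,\cdot)}{f^\perp}_\cH=\Pi_{\cH_0}(f)(x)+0,
\]
where the first term is evaluated by the reproducing property, legitimate since $\Pi_{\cH_0}(f)\in\cH_0$, and the second vanishes because $k(x,\cdot)\in\cH_0$ is orthogonal in $\cH$ to $f^\perp$. This is precisely the assertion that $\Pi_{\cH_0}$ is the integral operator with kernel $k$ in the sense of \cref{Def:linear-operator}.

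For the series formula, fix $x\in\Omega$ and expand $k(x,\cdot)\in\cH_0$ in an orthonormal basis $(L_i)_{i\in\NN}$ of $\cH_0$ via Parseval: $k(x,\cdot)=\sum_i\braket{k(x,\cdot)}{L_i}_\cH\,L_i$ (with the conjugates carried through in the complex case). The reproducing property with $v=L_i$ gives $\braket{k(x,\cdot)}{L_i}_\cH=L_i(x)$, so $k(x,\cdot)=\sum_i L_i(x)\,L_i$, the series converging in the $\cH_0$-norm. Since point evaluation $e_y$ is continuous on the RKHS $\cH_0$, I may pass it inside the sum to obtain $k(x,y)=e_y\big(k(x,\cdot)\big)=\sum_i L_i(x)L_i(y)$; absolute convergence is then automatic from $\sum_i|L_i(x)|^2=k(x,x)<\infty$ and Cauchy--Schwarz. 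Basis-independence is immediate because the left-hand side does not involve $(L_i)$.

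The remaining verifications — closedness of $\cH_0$, the validity of the Parseval expansion — are standard, and the only point requiring a little care is bookkeeping: the reproducing property for $\cH_0$ must be invoked with the \emph{ambient} inner product of $\cH$, which is what lets the cross term $\braket{k(x,\cdot)}{f^\perp}_\cH$ be annihilated by orthogonality in $\cH$. I do not expect a genuine obstacle here; the theorem is essentially the identification of the reproducing kernel with the integral kernel of the projection, and it falls out of \cref{Th:equivalences-of-RKHS-definitions} (together with \cref{Cor:finite-dimensional-is-RKHS} in the finite-dimensional instances we will use) once this identification of inner products is pinned down.
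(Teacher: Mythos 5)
Your proof is correct and, for the first claim, follows essentially the same route as the paper: decompose $f$ into its $\cH_0$ and $\cH_0^\perp$ components, apply the reproducing property to the former and orthogonality (using $k(x,\cdot)\in\cH_0$) to the latter. The only difference is that for the series formula $k(x,y)=\sum_i L_i(x)L_i(y)$ the paper simply cites \citet[Theorem 2.4]{paulsenIntroductionTheoryReproducing2016b}, whereas you supply the standard Parseval-plus-continuity-of-evaluation argument in full; this is a valid and welcome expansion rather than a deviation.
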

 \begin{proof}
  The second claim is essentially equivalent to
  \citet[Theorem 2.4]{paulsenIntroductionTheoryReproducing2016b}. To prove the first claim, let us notice that the reproducing property: $\forallt v\in\cH_0$, $\forallt x\in\Omega$
  \begin{equation}
   v(x)=\braket{k(x,\cdot)}{v}_\cH,
  \end{equation}
  exactly means that the identity on $\cH_0$ is an integral operator whose kernel is the reproducing kernel $k$ of $\cH_0$.
  Since the restriction of ${\Pi_{\cH_0}}$ to $\cH_0$ is the identity on $\cH_0$, it is sufficient to conclude the proof to show that: $\forallt f\in\cH_0^\bot$, $\forallt x\in\Omega$
  \begin{equation}
   \braket{k(x, \cdot)}{f}_\cH=0.
  \end{equation}
  But this is an immediate consequence of the second claim.
 \end{proof}
 \begin{Rk}
  \cref{Thm:projection-kernel} encapsulates finite dimensional case, since one may take $L_i=0$ for $i$ greater than $D\in\NN$, yielding $\dim(\cH_0)\leq D$, which implies in particular that $\cH_0$ is an RKHS by \cref{Cor:finite-dimensional-is-RKHS}.
 \end{Rk}
 \begin{Rk}
  The assumption $\cH_0$ is an RKHS is essential, since there is no guaranty that such a space (finite dimension aside) is indeed a RKHS. One may think for instance to the case where the $(L_i)$ are the Fourier's polynomials defined in \cref{subsec:partial-fourier-series}. In this case the associated kernel is the Dirichlet kernel, which is well known to be non convergent, neither pointwise, nor in $\dL^2([0,2\pi])$.
 \end{Rk}
  \cref{Thm:projection-kernel} prompts the question of how to construct such an orthogonal basis $(L_i)$. Assume that we already have a basis for $\cH_0$, i.e., $\cH_0=\overline{\Span(u_p : p\in\NN)}\subset\cH$.
  While a Gram-Schmidt procedure could be used, there is another approach that, in a certain sense, is far more optimal. For the sake of simplicity, let us use suppose that $\cH_0$ is finite dimensional\footnote{The infinite-dimensional case is more technical, as we have to be careful with the continuity of linear applications. As we are only considering a finitely-parameterized model, this is beyond the scope of our present work.}. Then:
 \begin{restatable}{LM}{generalNNTK}\label{LM:projection-on-span-spaces}
  Let us be $\cH_0:=\Span(u_p : 1\leq p\leq P)\subset\cH$ and consider the Gram matrix $G_{pq} := \braket{u_p}{u_q}_{\cH}$ of $(u_p)$ and its eigen-decomposition $G=U\Delta^2U^t$. Then:
  \begin{literaleq}{eqn:SVD-left-basis}
   L_p:=\sum_{1\leq q\leq P}u_qU_{q,p}\Delta^\dagger_p,
  \end{literaleq}
  is an orthonormal basis of $\cH_0$.
  In particular, $\Pi_{\cH_0}$ is an integral operator whose kernel is:
  \begin{equation}
   k(x,y) = \sum_{1\leq p,q\leq P} u_p(x) G^\dagger_{p,q} u_q(y).
  \end{equation}
  Furthermore $L_p$ are the left-singular vector of the so-called \textbf{synthesis} operator\footnote{Name and notation are taken from \citet{adcockFramesNumericalApproximation2019}.}:
  \begin{equation}
   \cT:\left\{\begin{array}{lll}
	 \RR^P &\to& \cH_0\\
	 \alpha & \mapsto & \sum\limits_{1\leq p\leq P}\alpha_p u_p
	 \end{array}\right..
  \end{equation}
 \end{restatable}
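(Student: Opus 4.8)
The plan is to route everything through the \emph{synthesis} operator $\cT$ and its adjoint, so that the eigendecomposition $G=U\Delta^2U^t$ becomes the singular value decomposition of $\cT$. First I would identify the adjoint $\cT^\ast:\cH_0\to\RR^P$ as the analysis operator $\cT^\ast v = \big(\braket{u_p}{v}_\cH\big)_{1\le p\le P}$, and record the key identity $\cT^\ast\cT = G$, which follows from $(\cT^\ast\cT\alpha)_p = \braket{u_p}{\sum_q\alpha_q u_q}_\cH = \sum_q G_{pq}\alpha_q$. Substituting $G=U\Delta^2U^t$ then motivates the definition, for each index $p$, of $L_p := \Delta_p^\dagger\,\cT(U_{\cdot,p}) = \sum_{q} u_q U_{q,p}\Delta_p^\dagger$, which is precisely \cref{eqn:SVD-left-basis}; note that $L_p=0$ exactly when $\Delta_p=0$.

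Next I would verify orthonormality by a direct computation: using $\cT^\ast\cT = G$ and $GU_{\cdot,p'}=\Delta_{p'}^2U_{\cdot,p'}$, for indices with $\Delta_p,\Delta_{p'}>0$,
\[
\braket{L_p}{L_{p'}}_\cH = \Delta_p^\dagger\Delta_{p'}^\dagger\,U_{\cdot,p}^t\,(\cT^\ast\cT)\,U_{\cdot,p'} = \Delta_p^\dagger\Delta_{p'}^\dagger\,\Delta_{p'}^2\,\delta_{pp'} = \delta_{pp'}.
\]
For spanning, the same computation gives $\|\cT U_{\cdot,p}\|_\cH^2 = U_{\cdot,p}^t\,G\,U_{\cdot,p} = \Delta_p^2$, so $\cT U_{\cdot,p}=0$ precisely when $\Delta_p=0$; since the columns $U_{\cdot,p}$ form an orthonormal basis of $\RR^P$, we get $\cH_0=\Ima\cT = \Span\big(\cT U_{\cdot,p}:1\le p\le P\big) = \Span\big(L_p:\Delta_p>0\big)$. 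An orthonormal family spanning a finite-dimensional space is a basis, so the nonzero $L_p$ form an orthonormal basis of $\cH_0$; by construction they are also the left-singular vectors of $\cT$ (with singular values $\Delta_p$), which is the lemma's final assertion.

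For the kernel formula, I would invoke \cref{Cor:finite-dimensional-is-RKHS} so that $\cH_0$ is an RKHS, and then \cref{Thm:projection-kernel} applied to the orthonormal basis just produced: $\Pi_{\cH_0}$ is the integral operator with kernel $k(x,y)=\sum_p L_p(x)L_p(y)$, the vanishing $L_p$ contributing nothing. Expanding the definition of $L_p$ and collecting terms,
\[
k(x,y) = \sum_{q,r} u_q(x)\Big(\sum_p U_{q,p}(\Delta_p^\dagger)^2 U_{r,p}\Big)u_r(y) = \sum_{q,r} u_q(x)\,\big(U(\Delta^2)^\dagger U^t\big)_{q,r}\,u_r(y),
\]
and $U(\Delta^2)^\dagger U^t = \big(U\Delta^2U^t\big)^\dagger = G^\dagger$ since $G$ is symmetric positive semidefinite (here I use $(\Delta^2)^\dagger=(\Delta^\dagger)^2$ for diagonal $\Delta$). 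This yields the claimed expression, the symmetry of $G^\dagger$ reconciling the index order.

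The only genuinely delicate point — what I would call the main obstacle — is the rank-deficient case, i.e. keeping the pseudo-inverses $\Delta_p^\dagger$ and $G^\dagger$ mutually consistent when the $u_p$ are linearly dependent: one must check that the indices with $\Delta_p=0$ are exactly the ones dropped, that $\cT U_{\cdot,p}=0$ there so that the count of strictly positive singular values equals $\dim\cH_0$, and that $U$ can be chosen orthogonal even in the presence of repeated eigenvalues. None of this is deep, but it is where a careless argument would slip; the finite-dimensionality hypothesis on $\cH_0$ is precisely what removes the continuity subtleties that would otherwise complicate the statement and proof.
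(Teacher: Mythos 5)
Your proposal is correct and follows essentially the same route as the paper: verify orthonormality of the $L_p$ via the eigendecomposition of the Gram matrix, invoke \cref{Thm:projection-kernel} to obtain the projection kernel, and identify the $L_p$ with the left-singular vectors of the synthesis operator. The only notable difference is that you organize the computation through the identity $\cT^\ast\cT=G$ and treat the rank-deficient case explicitly (which indices are dropped, and why the surviving $L_p$ still span $\cH_0$), whereas the paper's computation of $\braket{L_p}{L_q}$ tacitly produces $\Pi_r$ rather than the identity and leaves the spanning claim implicit --- so your version is, if anything, slightly more careful.
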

 \begin{proof}
  Since $\cH_0$ is generated by the finite frame $(u_p)_{1\leq p\leq P}$, it is an RKHS by \cref{Cor:finite-dimensional-is-RKHS} and there exist (for instance by the Gram-Schmidt procedure), an orthonormal basis $(V_p)_{1\leq p\leq P}$ of $\cH_0$. Then by \cref{Thm:projection-kernel}, the operator $\Pi$ defined by: $\forallt f\in\cH$
  \begin{equation}
   \Pi(f):=\sum_{1\leq p\leq P} V_p \braket{f}{V_p}
  \end{equation}
  is the orthogonal projection on $\cH_0$. But the basis $(L_p)_{\leq p\leq P}$ of \cref{eqn:SVD-left-basis} is precisely orthonormal. %
  Indeed: %
  \begin{align*}
   \braket{L_p}{L_q}&=\braket{\sum_{1\leq k\leq P} u_k U_{k,p}\Delta_p^\dagger}{\sum_{1\leq l\leq P} u_l U_{l,q}\Delta_q^\dagger}\\
   &=\sum_{1\leq k\leq P}\sum_{1\leq l\leq P}\Delta_p^\dagger U_{p,k}\braket{u_k}{u_l} U_{l,q} \Delta_q^\dagger
   = \mbox{$\ve^{(p)}$}^t\Delta^\dagger U^tGU\Delta^\dagger\mbox{$\ve^{(q)}$}\\
   &= \mbox{$\ve^{(p)}$}^t\Delta^\dagger U^tU\Delta^2U^tU\Delta^\dagger\mbox{$\ve^{(q)}$} = \delta_{pq},%
  \end{align*}
  where $\delta_{pq}$ is the Kronecker symbol such that $\delta_{pq}=1$ if and only if $p=q$.
  Now building $\Pi$ upon this basis yields: $\forallt f\in\cH$
  \begin{align*}
   \Pi(f)&=\sum_{1\leq p\leq P} L_p \braket{L_p}{f} = \sum_{1\leq p\leq P}\sum_{1\leq k\leq P}\sum_{1\leq l\leq P} u_kU_{k,p}\Delta^\dagger_p \braket{u_lU_{l,p}\Delta^\dagger_p}{f}\\
   &=\sum_{1\leq k\leq P}\sum_{1\leq l\leq P}u_k \left(\sum_{1\leq p\leq P} U_{k,p}{\Delta^2_p}^\dagger U_{p, l}\right) \braket{u_l}{f}
   =\sum_{1\leq k,l\leq P} u_k G_{k,l}^\dagger \braket{u_l}{f}
  \end{align*}
  Thus, the kernel of the projection $\Pi_{\cH_0}$ onto $\cH_0$ is precisely:
  \begin{equation*}
   k(x,y) = \sum_{1\leq k,l\leq P} u_k(x) G^\dagger_{k,l} u_l(y).
  \end{equation*}
  Finally, let us write the SVD of the \textit{synthesis} operator: $\forall \alpha\in\RR^P$
  \begin{equation}\label{eqn:synthesis-SVD}
   \cT(\alpha)=\sum_{1\leq p\leq P} v_p \Lambda_p W^t_p \alpha\in\cH_0.
  \end{equation}
  Then in particular, we have: $\forallt 1\leq p\leq P$
  \begin{equation*}
   v_p=\cT(W_p \Lambda_p^\dagger),
  \end{equation*}
  and:  $\forallt 1\leq p\leq P$
  \begin{equation*}
   u_p=\cT(\ve^{(p)}).
  \end{equation*}
  This implies that: $\forallt 1\leq p, q\leq P$
  \begin{align*}
   G_{p,q} &=\braket{u_p}{u_q} = \braket{\cT(\ve^{(p)})}{\cT(\ve^{(q)})}\\
   &\stackrel{(\ref{eqn:synthesis-SVD})}{=}\sum_{1\leq k,l\leq P}{\ve^{(p)}}^tW_k \Lambda_k \underbracket{\braket{v_k}{v_l}}_{=\delta_{kl}}\Lambda_l W_l^t\ve^{(q)}
   = {\ve^{(p)}}^t W \Lambda^2 W^t\ve^{(q)}.
  \end{align*}
  This means that $(W_p)$ and $(\Lambda^2_p)$ are respectively the eigenvectors and eigenvalues of $G$. The result follows by unicity of eigen-decomposition and respective identification of $(W_p)$ to $(U_p)$ and $(\Lambda_p)$ to $(\Delta_p)$ in \cref{eqn:SVD-left-basis}.
 \end{proof}
  This observation will enable us to establish the main result of this section, linking RKHS theory, NTK and natural gradient, in the following corollary.
 \begin{Cor}\label{Cor:NNTK-as-projection}
  The ${NNTK}_\vtheta$ defined in \cref{eqn:NNTK-def-gram} is the kernel of the projection $\Pi_{T_\vtheta\cM}:\cH\to \cH$ onto $T_\vtheta\cM$.
 \end{Cor}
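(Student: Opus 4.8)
The plan is to recognize this corollary as a direct instance of \cref{LM:projection-on-span-spaces}, applied to the spanning family formed by the partial derivatives of the parametric model. First I would recall from \cref{eqn:image-set-differential-parametric-model} that $T_\vtheta\cM=\Span(\partial_p u_{|\vtheta}:1\le p\le P)$ is a finite-dimensional subspace of $\cH$; by \cref{Cor:finite-dimensional-is-RKHS} it is therefore an RKHS, so the orthogonal projection $\Pi_{T_\vtheta\cM}$ is well defined and, by \cref{Thm:projection-kernel}, is the integral operator whose kernel is the (unique) reproducing kernel of $T_\vtheta\cM$.

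The core step is then to invoke \cref{LM:projection-on-span-spaces} with the substitution $u_p:=\partial_p u_{|\vtheta}$ and $\cH_0:=T_\vtheta\cM$. The Gram matrix of this family is by definition ${G_\vtheta}_{p,q}=\braket{\partial_p u_{|\vtheta}}{\partial_q u_{|\vtheta}}_\cH$, which is exactly the matrix appearing in \cref{eqn:NNTK-def-gram}; note that the lemma is stated for an arbitrary finite generating family rather than a basis, so it already absorbs the possible degeneracy of $G_\vtheta$ through the pseudo-inverse $G_\vtheta^\dagger$. The lemma then yields directly that $\Pi_{T_\vtheta\cM}$ is the integral operator with kernel
\[
 k(x,y)=\sum_{1\le p,q\le P}\partial_p u_{|\vtheta}(x)\,{G_\vtheta^\dagger}_{p,q}\,\partial_q u_{|\vtheta}(y),
\]
which is precisely ${NNTK}_\vtheta(x,y)$ as defined in \cref{eqn:NNTK-def-gram}. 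Combined with the first paragraph, this identifies ${NNTK}_\vtheta$ with the reproducing kernel of $T_\vtheta\cM$, hence with the kernel of $\Pi_{T_\vtheta\cM}$.

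The one point that requires care, and which I expect to be the only real obstacle, is that \cref{LM:projection-on-span-spaces} and the surrounding RKHS machinery were written for scalar-valued functions, whereas here $\cH$ consists of $\KK^m$-valued functions and ${NNTK}_\vtheta$ is correspondingly matrix-valued (hence the transpose in \cref{eqn:NNTK-def-gram}, to be read as the adjoint on $\KK^m$). I would dispatch this either by rerunning the orthogonalization argument of \cref{LM:projection-on-span-spaces} unchanged — the inner product $\braket{\cdot}{\cdot}_\cH$ already integrates over $\Omega$ and sums over output coordinates, so the eigendecomposition $G_\vtheta=U\Delta^2U^t$ still produces an $\cH$-orthonormal family $(L_p)$ spanning $T_\vtheta\cM$, and the identity $\sum_r L_r(x)L_r(y)^t=\sum_{p,q}\partial_p u_{|\vtheta}(x){G_\vtheta^\dagger}_{p,q}\partial_q u_{|\vtheta}(y)^t$ is the same algebraic manipulation as in the scalar case — or, more briefly, by appealing to the standard operator-valued generalization of \cref{Th:equivalences-of-RKHS-definitions} and \cref{Thm:projection-kernel}, for which no new argument is needed. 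Everything else is a verbatim transcription of the lemma.
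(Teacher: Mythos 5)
Your proof is correct and matches the paper's own argument, which likewise disposes of the corollary as a direct consequence of \cref{LM:projection-on-span-spaces} applied to the generating family $(\partial_p u_{|\vtheta})_{1\leq p\leq P}$ of $T_\vtheta\cM$. Your additional remark on the vector-valued case is a legitimate point of care that the paper's one-line proof silently glosses over, but it does not change the route.
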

 \begin{proof}
  This is a direct consequence of \cref{LM:projection-on-span-spaces}, since $T_\vtheta\cM = \Span(\partial_p u_{|\vtheta}\,:1\leq p\leq P)$.
 \end{proof}
 In the following, we will derive some consequences from NNTK theory, leading to the concept of the empirical Natural Gradient (eNG).
\subsection{empirical Natural Gradient (eNG)}\label{subsec:consequences of NNTK-theory}

 To begin, we need to make a key observation:
  \begin{itemize}
   \item \cref{eqn:NTK-dynamics}, namely:
   \begin{equation*}
    \frac{\dd u_{|\vtheta_t}}{\dd t}(x) = -\sum_{i=1}^S {NTK}_{\vtheta_t}(x, x_i)\left(u_{|\vtheta_t}(x_i)-y_i\right),
   \end{equation*}
   shows that the empirical dynamics under gradient descent happens in the space:
   \begin{equation}\label{eqn:empirical-tangent-space-NTK}
    \widehat{T}^{NTK}_{\vtheta,(x_i)}\cM:=\Span({NTK}_{\vtheta}(\cdot, x_i)\,:\, (x_i)_{1\leq i\leq N})\subset T_{\vtheta}\cM.
   \end{equation}
   \item Likewise %
   \begin{equation*}
    \literalref{eqn:NNTK-dynamics},\tag{\ref{eqn:NNTK-dynamics}}
   \end{equation*}
   shows that the empirical dynamics under natural gradient descent happens in the space: %
   \begin{equation*}
    \literalref{eqn:empirical-tangent-space-NNTK}.\tag{\ref{eqn:empirical-tangent-space-NNTK}}
   \end{equation*}
  \end{itemize}
 Both spaces $\widehat{T}^{NTK}_{\vtheta,(x_i)}\cM$ and $\widehat{T}^{NNTK}_{\vtheta,(x_i)}\cM$ are subspaces of the tangent space $T_\vtheta\cM := \Ima \dd u_{|\vtheta}$. Therefore, it remains true that the empirical functional dynamics occurs within $T_\vtheta\cM$. However, $\widehat{T}^{NTK}_{\vtheta,(x_i)}\cM$ and $\widehat{T}^{NNTK}_{\vtheta,(x_i)}\cM$ are the smallest subspaces in which the empirical functional dynamics take place, respectively for classical and natural gradient descent. We encapsulate it in a defintion:
 \begin{Def}[empirical tangent space]\label{Def:empirical-tangent-space}
  Given a parametric model $u:\RR^P\to\cH$, and a batch of points $(x_i)_{1\leq i\leq S}$, the \textbf{empirical tangent space relative to the points $(x_i)_{1\leq i\leq S}$}, is the space:
  \begin{equation*}
   \literalref{eqn:empirical-tangent-space-NNTK}.\tag{\ref{eqn:empirical-tangent-space-NNTK}}
  \end{equation*}
  When the context is clear, its name will be abbreviated \textbf{empirical tangent space} and it will be denoted:
  \begin{equation}\label{eqn:empirical-tangent-space-definition}
   \widehat{T}_{\vtheta}\cM:=\Span\left({NNTK}_{\vtheta}(\cdot, x_i)\,:\, (x_i)_{1\leq i\leq S}\right)
  \end{equation}
  \end{Def}
  The second key observation is the following :
   since natural gradient descent, in the limit $S\to\infty$ (population limit), is given by the update (\cf end of \cref{subsec:NG-def}):
   \begin{equation}
    \literalref{eqn:population-limit-natural-gradient-update},\tag{\ref{eqn:population-limit-natural-gradient-update}}
   \end{equation}
   one may also define a similar update in the empirical tangent space $\widehat{T}_{\vtheta}\cM$.
  This observation motivates the following definition, already introduced in \cref{eqn:empirical-natural-gradient}:
  \begin{Def}[empirical Natural Gradient (eNG)]\label{Def:empirical-natural-gradient}
   The \textbf{empirical Natural Gradient (eNG)} is the update given by the projection of the functional gradient on the empirical tangent space $\widehat{T}_\vtheta\cM$, \ie:
   \begin{equation}\label{eqn:empirical-natural-gradient-simplified}
    \vtheta_{t+1}\gets \vtheta_t - \eta\,\dd u_{|\vtheta_t}^\dagger\left(\Pi^\bot_{\widehat{T}_{\vtheta_t}\cM}\nabla\cL_{|u_{|\vtheta_t}}\right).
   \end{equation}
  \end{Def}
  The problem now is to find a tractable procedure to compute the update of \cref{eqn:empirical-natural-gradient-simplified}. This is the aim of \cref{Th:anagram-main-thm} stated in \cref{sec:ANaGRAM-quadratic-context}, that we now recall and prove: %
  \anagramMainTh*
  \paragraph{\cref{Th:anagram-main-thm} specifications} The corrections terms $E_\vtheta^\textrm{metric}$ and $E^\bot_\vtheta$ are given by:
   \begin{equation}\label{eqn:E_metric_correction}
    E_\vtheta^\textrm{metric}=\widehat{V}_\vtheta\left(\mI_P-\Pi_r\right)\widehat{V}_\vtheta^tG_\vtheta^\dagger\widehat{V}_\vtheta\,\Pi_r\,\left(\Pi_r\widehat{V}_\vtheta^tG_\vtheta^\dagger\widehat{V}_\vtheta\,\Pi_r\right)^\dagger \widehat{\Delta}_\vtheta^\dagger \widehat{U}_\vtheta^t,
   \end{equation}
   with:
   \begin{itemize}
    \item $\Pi_r = \sum_{p=1}^r \mbox{$\ve^{(p)}$} \mbox{$\ve^{(p)}$}^t$, the projection onto the $r$ first coordinates of $\RR^P$.
    \item $\mI_P$, the identity of $\RR^P$
    \item $\widehat{U}_\vtheta \widehat{\Delta}_\vtheta \widehat{V}_\vtheta^t=SVD(\widehat{\phi}_\vtheta)$
    \item $\forallt 1\leq p,q\leq P,\, {G_{\vtheta}}_{p,q}=\braket{\partial_p u_{|\vtheta}}{\partial_q u_{|\vtheta}}_\cH$
   \end{itemize}
   \begin{equation}\label{eqn:E_bot_correction}
    E^\bot_\vtheta
    =\Big(\braket{{NNTK_\vtheta}(x_i, \cdot)}{\nabla\cL}_{\cH}-\nabla\cL(x_i)\Big)_{1\leq i\leq S}
    =\left(-\left(\Pi^\bot_{T^\bot_\vtheta\cM}\nabla\cL\right)(x_i)\right)_{1\leq i\leq S}
   \end{equation}
  \begin{proof}
   First of all, following \cref{LM:projection-on-span-spaces}, we see that the projection kernel into $\widehat{T}_\vtheta\cM$ is given by: $\forallt x,y\in\Omega$
  \begin{equation}
   \hat{k}(x,y)=\sum_{1\leq i,j\leq S}{NNTK_\vtheta}(x_i, x) \mbox{$\widehat{G}^\dagger_{\vtheta}$}_{i,j}{NNTK_\vtheta}(x_j, y),
  \end{equation}
  with: $\forallt 1\leq i,j\leq S$, $\forallt\vtheta\in\RR^P$
  \begin{literaleq}{eqn:gram-matrix-of-NNTK}
   \mbox{$\widehat{G}_\vtheta$}_{i,j} = \braket{{NNTK_\vtheta}(x_i, \cdot)}{{NNTK_\vtheta}(x_j, \cdot)}_\cH = {NNTK_\vtheta}(x_i, x_j),
  \end{literaleq}
  where last equality comes from the fact that $NNTK_\vtheta$ is the reproducing kernel of $T_\vtheta\cM$.
  We can then simplify \cref{eqn:empirical-natural-gradient-simplified}:
  \begin{align}
   \dd u_{|\vtheta_t}^\dagger\left(\Pi^\bot_{\widehat{T}_{\vtheta_t}\cM}\nabla\cL_{|u_{|\vtheta_t}}\right)
   =\,&\dd u_{|\vtheta_t}^\dagger\left(x\in\Omega\mapsto\braket{\hat{k}(x, \cdot)}{\nabla\cL_{|u_{|\vtheta_t}}}_{\cH}\right)\\
   = \sum_{1\leq i,j\leq S} &\dd u_{|\vtheta_t}^\dagger\left({NNTK_{\vtheta_t}}(x_i, \cdot)\right)\mbox{$\widehat{G}_{\vtheta_t}$}^\dagger_{i,j}\braket{{NNTK_{\vtheta_t}}(x_j, \cdot)}{\nabla\cL_{|u_{|\vtheta_t}}}_{\cH}\\
   = \sum_{\substack{1\leq p,q\leq P\\1\leq i,j\leq S}}
   &\dd u_{|\vtheta_t}^\dagger\left(\partial_p u_{|\vtheta_t}{G_{\vtheta_t}^\dagger}_{p,q}\right) \partial_q u_{|\vtheta_t}(x_i)\mbox{$\widehat{G}_{\vtheta_t}$}^\dagger_{i,j}\braket{{NNTK_{\vtheta_t}}(x_j, \cdot)}{\nabla\cL_{|u_{|\vtheta_t}}}_{\cH}\\
   = \sum_{\substack{1\leq p,q\leq P\\1\leq i,j\leq S}}
   &{G_{\vtheta_t}}^\dagger_{p,q} \partial_q u_{|\vtheta_t}(x_i)\,\mbox{$\widehat{G}_{\vtheta_t}$}^\dagger_{i,j}\braket{{NNTK_{\vtheta_t}}(x_j, \cdot)}{\nabla\cL_{|u_{|\vtheta_t}}}_{\cH}\label{eqn:matrix-computation-of-eND-sum},
  \end{align}
  Using the empirical jacobian matrix introduced in the statement of \cref{Th:anagram-main-thm}, namely: $\forallt\vtheta\in\RR^P$ and $\forallt 1\leq i\leq S, \forallt 1\leq p\leq P$
   \begin{literaleq}{eqn:empirical-features-matrix}
    \mbox{$\widehat{\phi}_\vtheta$}_{i,p} := \partial_p u_{|\vtheta}(x_i).
   \end{literaleq}
   \cref{eqn:gram-matrix-of-NNTK} rewrites: $\forallt\vtheta\in\RR^P$
   \begin{literaleq}{eqn:hatG-matrix}
    \widehat{G}_\vtheta = \widehat{\phi}_\vtheta G_\vtheta^\dagger \widehat{\phi}_\vtheta^t.
   \end{literaleq}
   Introducing also: $\forallt \vtheta\in\RR^P$ and $\forallt 1\leq i\leq S$
   \begin{literaleq}{eqn:empirical-evaluated-gradient}
    \mbox{$\widehat{\nabla\cL}^\parallel_\vtheta$}_i := \braket{{NNTK_\vtheta}(x_i, \cdot)}{\nabla\cL_{|u_{|\vtheta}}}_{\cH}.
   \end{literaleq}
   \cref{eqn:matrix-computation-of-eND-sum} then rewrites: %
   \begin{equation}\label{eqn:matrix-computation-of-eND-prod}
    \dd u_{|\vtheta_t}^\dagger\left(\Pi^\bot_{\widehat{T}_{\vtheta_t}\cM}\nabla\cL_{|u_{|\vtheta_t}}\right)
    = G_{\vtheta_t}^\dagger \widehat{\phi}_{\vtheta_t}^t\left(\widehat{\phi}_{\vtheta_t} G_{\vtheta_t}^\dagger \widehat{\phi}_{\vtheta_t}^t\right)^\dagger\widehat{\nabla\cL}^\parallel_{\vtheta_t}.
   \end{equation}
   Using now the SVD of $\widehat{\phi}_\vtheta$, also introduced in the statement of \cref{Th:anagram-main-thm}, namely: $\forallt\vtheta\in\RR^P$
   \begin{literaleq}{eqn:SVD-of-empirical-differential}
     \widehat{\phi}_\vtheta=\widehat{U}_\vtheta \widehat{\Delta}_\vtheta \widehat{V}_\vtheta^t,
   \end{literaleq}
   we may express the pseudo-inverse of $\widehat{\phi}_\vtheta$ as:
   \begin{equation}\label{eqn:empirical-differential-pseudo-inv-with-SVD}
    \widehat{\phi}_\vtheta^\dagger = \widehat{U}_\vtheta \widehat{\Delta}_\vtheta^\dagger \widehat{V}_\vtheta^t,
   \end{equation}
   and rewrite \cref{eqn:hatG-matrix} as:
   \begin{equation}
    \widehat{G}_\vtheta = \widehat{U}_\vtheta \widehat{\Delta}_\vtheta \widehat{V}_\vtheta^t G_\vtheta^\dagger \widehat{V}_\vtheta \widehat{\Delta}_\vtheta \widehat{U}_\vtheta^t.
   \end{equation}
   Let use denote $r\leq S$ the rank of $\widehat{\phi}_\vtheta$, and introduce $\Pi_r := \sum_{p=1}^r \mbox{$\ve^{(p)}$} \mbox{$\ve^{(p)}$}^t$ the projection onto the first $r$ coordinates of the canonical basis $\left(\ve^{(p)}\right)_{p=1}^P$ of $\RR^P$. Observe that, in particular, $\widehat{\Delta}_{\vtheta}\widehat{\Delta}_{\vtheta}^\dagger=\Pi_r$. Then, noting that $\widehat{U}_\vtheta$ is orthogonal, and $\widehat{\Delta}_\vtheta$ diagonal:
   \begin{equation}\label{eqn:Ghat-rewriting-with-SVD}
    \widehat{G}_\vtheta^\dagger = \widehat{U}_\vtheta \widehat{\Delta}_\vtheta^\dagger \left(\Pi_r\widehat{V}_\vtheta^t G_\vtheta^\dagger \widehat{V}_\vtheta\Pi_r\right)^\dagger \widehat{\Delta}_\vtheta^\dagger \widehat{U}_\vtheta^t
    = \widehat{U}_\vtheta \widehat{\Delta}_\vtheta^\dagger \Sigma_\vtheta^\dagger \widehat{\Delta}_\vtheta^\dagger \widehat{U}_\vtheta^t,
   \end{equation}
   where
   \begin{equation}\label{eqn:approximator-matrix}
    \Sigma_\vtheta:=\Pi_r\widehat{V}_\vtheta^t G_\vtheta^\dagger \widehat{V}_\vtheta\Pi_r.
   \end{equation}
   Inserting \cref{eqn:Ghat-rewriting-with-SVD} and \cref{eqn:SVD-of-empirical-differential} in \cref{eqn:matrix-computation-of-eND-prod}, we get:
   \begin{align}
    \dd u_{|\vtheta_t}^\dagger\left(\Pi^\bot_{\widehat{T}_{\vtheta_t}\cM}\nabla\cL_{|u_{|\vtheta_t}}\right)
    =&
    G_\vtheta^\dagger
    \widehat{V}_\vtheta \underbracket{\widehat{\Delta}_\vtheta \widehat{U}_\vtheta^t
    \widehat{U}_\vtheta \widehat{\Delta}_\vtheta^\dagger}_{=\Pi_r} \Sigma_\vtheta^\dagger \widehat{\Delta}_\vtheta^\dagger \widehat{U}_\vtheta^t
    \widehat{\nabla\cL}_\vtheta^\parallel\\
    =&
    \widehat{V}_\vtheta\widehat{V}_\vtheta^t
    G_\vtheta^\dagger
    \widehat{V}_\vtheta\,
    \Pi_r\,
    \Sigma_\vtheta^\dagger \widehat{\Delta}_\vtheta^\dagger \widehat{U}_\vtheta^t
    \widehat{\nabla\cL}_\vtheta^\parallel\\
    =&
    \widehat{V}_\vtheta\big(\left(\mI_P-\Pi_r\right)+\Pi_r\big)\widehat{V}_\vtheta^t
    G_\vtheta^\dagger
    \widehat{V}_\vtheta\,
    \Pi_r\,
    \Sigma_\vtheta^\dagger \widehat{\Delta}_\vtheta^\dagger \widehat{U}_\vtheta^t
    \widehat{\nabla\cL}_\vtheta^\parallel\\
    =&
    \Big(\widehat{V}_\vtheta\Sigma_\vtheta
    \Sigma_\vtheta^\dagger \widehat{\Delta}_\vtheta^\dagger \widehat{U}_\vtheta^t\\
    &+
    \underbracket{
     \widehat{V}_\vtheta\left(\mI_P-\Pi_r\right)\widehat{V}_\vtheta^t
     G_\vtheta^\dagger
     \widehat{V}_\vtheta\,
     \Pi_r\,
     \Sigma_\vtheta^\dagger \widehat{\Delta}_\vtheta^\dagger \widehat{U}_\vtheta^t
    }_{E_\vtheta^\textrm{metric}}\Big)
    \widehat{\nabla\cL}_\vtheta^\parallel\label{eqn:error-term-anagram}\\
    =&\left(
    \widehat{V}_\vtheta\widehat{\Delta}_\vtheta^\dagger \widehat{U}_\vtheta^t
    + E_\vtheta^\textrm{metric}
    \right) \widehat{\nabla\cL}_\vtheta^\parallel
    =\left(\widehat{\phi}_\vtheta^\dagger+ E_\vtheta^\textrm{metric}\right)
     \widehat{\nabla\cL}_\vtheta^\parallel.
  \end{align}
  Finally, by decomposing $\nabla\cL_{|u_{|\vtheta}}$ into its collinear and orthogonal components to $T_\vtheta\cM$, \ie: $\forallt\vtheta\in\RR^P$
  \begin{equation}
   \nabla\cL_{|u_{|\vtheta}} = \Pi^\bot_{T_\vtheta\cM}\left(\nabla\cL_{|u_{|\vtheta}}\right) + \Pi^\bot_{T^\bot_\vtheta\cM}\left(\nabla\cL_{|u_{|\vtheta}}\right),
  \end{equation}
  and using the notation $\widehat{\nabla\cL}_{|u_{|\vtheta}}$ introduced in \cref{Th:anagram-main-thm} statement, we have: $\forallt\vtheta\in\RR^P$, $\forallt 1\leq i\leq S$
  \begin{align}
   \mbox{$\widehat{\nabla\cL}_{|u_{|\vtheta}}$}_i
   =\nabla\cL_{|u_{|\vtheta}}(x_i)
   &= \Pi^\bot_{T_\vtheta\cM}\left(\nabla\cL_{|u_{|\vtheta}}\right)(x_i) + \Pi^\bot_{T^\bot_\vtheta\cM}\left(\nabla\cL_{|u_{|\vtheta}}\right)(x_i)\\
   &= \braket{{NNTK_\vtheta}(x_i, \cdot)}{\nabla\cL_{|u_{|\vtheta}}}_{\cH} + \Pi^\bot_{T^\bot_\vtheta\cM}\left(\nabla\cL_{|u_{|\vtheta}}\right)(x_i)\label{eqn:projection-evaluation-rewriting}\\
   &= \mbox{$\widehat{\nabla\cL}^\parallel_\vtheta$}_i - \mbox{$E_\vtheta^\bot$}_i\label{eqn:nabla-parallel-rewriting},
  \end{align}
  where \cref{eqn:projection-evaluation-rewriting} comes from the fact that $NNTK_\vtheta$ is the kernel defining $\Pi^\bot_{T_\vtheta\cM}$, and \cref{eqn:nabla-parallel-rewriting} uses the defintion given by \cref{eqn:empirical-evaluated-gradient} and the notation $E_\vtheta^\bot$ introduced in \cref{Th:anagram-main-thm} statement, specified in \cref{eqn:E_bot_correction}. Thus $\widehat{\nabla\cL}^\parallel_\vtheta=\widehat{\nabla\cL}_{|u_{|\vtheta}}+E_\vtheta^\bot$, which concludes.

  \end{proof}
  \begin{Rk}
   Note that the implicit inversion made in order to obtain \cref{eqn:matrix-computation-of-eND-sum} is now exact, in the sense that we do not need to assume that $\dd u_{|\vtheta_t}$ is inversible anymore as in \cref{eqn:implicit-inversion-NTK}, since the eigenvectors and eigenvalues of $G_{\vtheta_t}$ respectively match singular vectors and singular values of $\dd u_{|\vtheta_t}$, as stated in \cref{LM:projection-on-span-spaces}. We call this the \textbf{exact implicit inversion trick}.
  \end{Rk}

   For the sake of understanding, let us suppose, that $\widehat{\phi}_\vtheta$ is of rank $P$. %
   Then in particular $S\geq P$ and \cref{eqn:Ghat-rewriting-with-SVD} rewrites:
   \begin{equation}
    \left(\widehat{\phi}_\vtheta G_\vtheta^\dagger \widehat{\phi}_\vtheta^t\right)^\dagger = \left(\widehat{\phi}_\vtheta^t\right)^\dagger G_\vtheta \widehat{\phi}_\vtheta^\dagger.
   \end{equation}
   Since $S\geq P$, we also have $\widehat{\phi}_\vtheta^t\left(\widehat{\phi}_\vtheta^t\right)^\dagger=I_P$ and thus, \cref{eqn:matrix-computation-of-eND-prod} simplifies to:
   \begin{align}
    \dd u_{|\vtheta_t}^\dagger\left(\Pi^\bot_{\widehat{T}_{\vtheta_t}\cM}\nabla\cL_{|u_{|\vtheta_t}}\right)
    &= G_\vtheta^\dagger \widehat{\phi}_\vtheta^t\left(\widehat{\phi}_\vtheta^t\right)^\dagger G_\vtheta \widehat{\phi}_\vtheta^\dagger \widehat{\nabla\cL}_\vtheta^\parallel
    = G_\vtheta^\dagger G_\vtheta \widehat{\phi}_\vtheta^\dagger \widehat{\nabla\cL}_\vtheta
    = \widehat{\phi}_\vtheta^\dagger \widehat{\nabla\cL}_\vtheta^\parallel,
   \end{align}
   where last equality comes from the fact that $\widehat{\nabla\cL}_\vtheta^\parallel\in\Ima G_\vtheta$ by its own definition and the one of $NNTK_\vtheta$.
   This means that under those conditions, the term $E_\vtheta^\textrm{metric}$ of \cref{Th:anagram-main-thm} vanishes.
   Unexpectedly, the assumption $\widehat{\phi}_\vtheta$ has rank $P$ can be satisfied for a specific subset of points : those guaranteed by \cref{Cor:empirical-population-equivalence}, restated below, which we will now prove:
  \empiricalPopulationEquivalence*
  \begin{proof}
 Let $d:=\dim(T_\vtheta\cM)\leq P$. By definition of ${NNTK}_\vtheta$ (\cf \cref{eqn:NNTK-def-gram}), we have $\forallt x\in\Omega$:
 \begin{equation}\label{eqn:NNTK-rewritten}
  {NNTK}_\vtheta(\cdot, x)=\sum_{p=1}^P \alpha_p \partial_p u_{|\vtheta}\in T_\vtheta\cM,
 \end{equation}
 with $\forallt 1\leq p\leq P$, $\alpha_p=\sum_{q=1}^P\left(G^\dagger_\vtheta\right)_{p,q}\partial_q u_{|\vtheta}(x)\in\RR$. Therefore $\widehat{T}_\vtheta\cM\subset T_\vtheta\cM$. We will start by showing that $\overline{\Span({NNTK}_\vtheta(\cdot,x)\,:\,x\in\Omega)}=T_\vtheta\cM$.
 $\overline{\Span({NNTK}_\vtheta(\cdot,x)\,:\,x\in\Omega)}\subset T_\vtheta\cM$ is clear from \cref{eqn:NNTK-rewritten}. Let us now be $u\in T_\vtheta\cM\bigcap{\overline{\Span({NNTK}_\vtheta(\cdot,x)\,:\,x\in\Omega)}}^\bot$.
 Since $u\in{\overline{\Span({NNTK}_\vtheta(\cdot,x)\,:\,x\in\Omega)}}^\bot$, we have: $\forallt x\in\Omega$
 \begin{equation*}
  0=\braket{{NNTK}_\vtheta(\cdot,x)}{u}=u(x),
 \end{equation*}
 where last equality comes from the fact that ${NNTK}_\vtheta$ is the reproducing kernel of $T_\vtheta\cM$ (\cf \cref{Thm:projection-kernel}). Therefore $u=0$ and thus:
 \begin{equation*}
  T_\vtheta\cM\bigcap{\overline{\Span({NNTK}_\vtheta(\cdot,x)\,:\,x\in\Omega)}}^\bot=\{0\},
 \end{equation*}
 \ie $T_\vtheta\cM\subset {{\overline{\Span({NNTK}_\vtheta(\cdot,x)\,:\,x\in\Omega)}}^\bot}^\bot=\overline{\Span({NNTK}_\vtheta(\cdot,x)\,:\,x\in\Omega)}$, which conludes. Now, since $T_\vtheta\cM$ is of finite dimension $d\leq P$, so is $\overline{\Span({NNTK}_\vtheta(\cdot,x)\,:\,x\in\Omega)}$, and since $\left({NNTK}_\vtheta(\cdot,x)\right)_{x\in\Omega}$ is a generating family, one may extract a free subfamily out of it, which will be of cardinal $d\leq P$, \ie there exist $d\leq P$ points $(\hat{x}_i)_{1\leq i\leq d}$ such that $\widehat{T}_\vtheta\cM=\Span\left({NNTK}_\vtheta(\cdot,\hat{x}_i)\,:\,1\leq i\leq d\right)=T_\vtheta\cM$ and thus:
 \begin{equation*}
  \Pi^\bot_{\widehat{T}_\vtheta\cM}\nabla\cL = \Pi^\bot_{T_\vtheta\cM}\nabla\cL.
 \end{equation*}
 If $d < P$, the sequence $(\hat{x}_i)_{1\leq i\leq d}$ can be extended with additional $P-d$ arbitrary points.
\end{proof}
  Finally, in some cases, we have $\Pi^\bot_{T^\bot_\vtheta\cM}\left(\nabla\cL_{|u_{|\vtheta}}\right)=0$ and thus $\widehat{\nabla\cL}_\vtheta^\bot=0$, \ie $\widehat{\nabla\cL}_\vtheta^\parallel=\widehat{\nabla\cL}_\vtheta$. This is the focus of \cref{Prop:projection-not-needed-in-linear-case}, stated and proved hereafter: %
   \begin{restatable}{Prop}{linearDontNeedProjection}\label{Prop:projection-not-needed-in-linear-case}
    If $u$ can be factorized as $u_{|\vtheta}=L_{|\vtheta_1}\left[C_{|\vtheta_2}\right]$, with $\vtheta=(\vtheta_1,\vtheta_2)\in\RR^{P_1+P_2}$, $C:\RR^{P_2}\to\cH_1$, $L:\RR^{P_1}\to\cF(\cH_1\to\cH)$ linear in $\vtheta_1$, and $f=0$, then $E^\bot_\vtheta=0$.
  \end{restatable}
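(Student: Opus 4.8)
The plan is to read the conclusion off the explicit form of $E^\bot_\vtheta$ recorded in \cref{eqn:E_bot_correction}, namely
\[
 E^\bot_\vtheta=\left(-\left(\Pi^\bot_{T^\bot_\vtheta\cM}\nabla\cL_{|u_{|\vtheta}}\right)(x_i)\right)_{1\leq i\leq S}.
\]
Each coordinate of $E^\bot_\vtheta$ is the evaluation at a sample point of the component of $\nabla\cL_{|u_{|\vtheta}}$ orthogonal to $T_\vtheta\cM$, so it suffices to prove that $\nabla\cL_{|u_{|\vtheta}}$ lies entirely inside $T_\vtheta\cM$, in which case $\Pi^\bot_{T^\bot_\vtheta\cM}\nabla\cL_{|u_{|\vtheta}}$ is identically zero and every coordinate of $E^\bot_\vtheta$ vanishes. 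Since the target is $f=0$, the functional gradient is just $\nabla\cL_{|u_{|\vtheta}}=u_{|\vtheta}-f=u_{|\vtheta}$, so everything reduces to the inclusion $u_{|\vtheta}\in T_\vtheta\cM=\Span\big(\partial_p u_{|\vtheta}:1\leq p\leq P\big)$ from \cref{eqn:image-set-differential-parametric-model}.

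First I would observe that, with $\vtheta_2$ held fixed, the map $\vtheta_1\mapsto L_{|\vtheta_1}[C_{|\vtheta_2}]$ is linear from $\RR^{P_1}$ to $\cH$: it is the composition of the linear map $\vtheta_1\mapsto L_{|\vtheta_1}$ (linearity being precisely the hypothesis on $L$) with the evaluation $F\mapsto F[C_{|\vtheta_2}]$, which is itself a linear operation on the pointwise function space $\cF(\cH_1\to\cH)$. Ordering the coordinates so that $\vtheta_1$ occupies the first $P_1$ slots of $\vtheta$, a linear map $T$ satisfies both $T(\vtheta_1)=\sum_{p=1}^{P_1}(\vtheta_1)_p\,T(\ve^{(p)})$ and $\partial_p\big(T(\vtheta_1)\big)=T(\ve^{(p)})$, whence
\[
 u_{|\vtheta}=L_{|\vtheta_1}[C_{|\vtheta_2}]=\sum_{p=1}^{P_1}(\vtheta_1)_p\,\partial_p u_{|\vtheta}.
\]

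What remains is bookkeeping: each $\partial_p u_{|\vtheta}$ on the right-hand side is one of the $P$ partial derivatives that span $T_\vtheta\cM$, so this identity displays $u_{|\vtheta}$ as an explicit linear combination of a spanning family of $T_\vtheta\cM$; hence $u_{|\vtheta}\in T_\vtheta\cM$, and therefore $\nabla\cL_{|u_{|\vtheta}}=u_{|\vtheta}\in T_\vtheta\cM$, $\Pi^\bot_{T^\bot_\vtheta\cM}\nabla\cL_{|u_{|\vtheta}}=0$, and $E^\bot_\vtheta=0$. There is no genuine obstacle; the only point needing a word of care is the transfer of linearity, i.e. that plugging a fixed argument $C_{|\vtheta_2}$ into a linear family of maps is again linear in the parameter. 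As a consistency check, specialising to an MLP (\cref{Ex:MLP-def}) with a linear output layer and linear coordinate map $\varphi$ — taking $\vtheta_1$ to be the last-layer weights and biases, $C_{|\vtheta_2}$ the penultimate feature map, and, when solving $D[u]=0$ with $D$ linear, composing with $D$ — recovers, through the Remark following \cref{eqn:parametric-model-mlp}, the situation flagged in the Remark preceding \cref{Cor:empirical-population-equivalence}.
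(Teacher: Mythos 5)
Your proposal is correct and follows essentially the same route as the paper: reduce to showing $u_{|\vtheta}\in T_\vtheta\cM$ using $\nabla\cL_{|u_{|\vtheta}}=u_{|\vtheta}$ when $f=0$, then exploit linearity of $L$ in $\vtheta_1$ to exhibit $u_{|\vtheta}$ as a combination of the partial derivatives with respect to the $\vtheta_1$-coordinates (your coordinate identity $u_{|\vtheta}=\sum_{p=1}^{P_1}(\vtheta_1)_p\,\partial_p u_{|\vtheta}$ is exactly the paper's $\dd L_{|\vtheta_1}(\vtheta_1)[C_{|\vtheta_2}]=L_{|\vtheta_1}[C_{|\vtheta_2}]$ written out). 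No gaps.
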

  \begin{proof}
   From the discussion of \cref{subsec:NG-def}, more precisely the identification of the Fréchet derivative of the functional loss in \cref{eqn:fonctionnal-loss}, we have that the functional gradient for quadratic regression is:
   \begin{equation}
    \nabla\cL_{u_{\vtheta}}=u_{|\vtheta} - f.
   \end{equation}
   Assuming that $f=0$, this reduces to:
   \begin{equation}
    \nabla\cL_{u_{\vtheta}}=u_{|\vtheta}.
   \end{equation}
   Now using the assumption and notations of \cref{Prop:projection-not-needed-in-linear-case}, we see that:
   \begin{equation}
    \partial_{\vtheta_1}u_{|\vtheta}\left(\vtheta_1\right)
    =\dd L_{|\vtheta_1}\left(\vtheta_1\right)\left[C_{|\vtheta_2}\right]
    =L_{|\vtheta_1}\left[C_{|\vtheta_2}\right]
    = u_{|(\vtheta_1,\vtheta_2)}=u_{|\vtheta},
   \end{equation}
   where the second equality comes from the linearity of $L$ with respect to $\vtheta_1$. In particular this implies that $u_{|\vtheta}\in T_\vtheta\cM$, which conludes.
  \end{proof}
  \begin{Rk}\label{Rk:linear-operator-no-projection}
   The question arises as to whether \cref{Prop:projection-not-needed-in-linear-case} has any concrete application, \ie whether this situation occurs in real applications.
   As it happens, the hypothesis of \cref{Prop:projection-not-needed-in-linear-case} is verified in particular when solving the functional equation:
   \begin{equation}\label{eqn:dummy-linear-functional-equation}
    D[u]=0,
   \end{equation}
   using an MLP as parametric model $u$, with $D$ linear. Indeed, refering to the definition of an MLP in \cref{Ex:MLP-def}, and specifically to the definition of the last layer in \cref{eqn:MLP-last-layer}, we see that MLP can be decomposed into $u_{|(\vtheta_1,\vtheta_2)}=L_{|\vtheta_1}\left[C_{|\vtheta_2}\right]$, with $\vtheta_1$ being the parameters encoding the last layer. Now, forming the coumpound model defined in \cref{eqn:coumpound-model-definition} of \cref{subsec:pinns-as-a-least-square-regression} with the operator $D$ yields:
   \begin{equation}
    D\circ u
    = D\circ L_{|\vtheta_1}\left[C_{|\vtheta_2}\right]
    =\underbracket{D\circ L}_{=:L^D}\,_{|\vtheta_1}\left[C_{|\vtheta_2}\right]
    =L^D_{|\vtheta_1}\left[C_{|\vtheta_2}\right],
   \end{equation}
   and thus the coumpound model is still verifying the assumption of \cref{Prop:projection-not-needed-in-linear-case}. $f$ being null according to \cref{eqn:dummy-linear-functional-equation}, we have that all the hypotheses of the proposition are verified. In real-life applications, boundary conditions also need to be taken into account, as mentionned in \cref{eqn:coumpound-model-definition}. However, these are a simple $\dL^2$ regression problem when the boundary conditions are Dirichlet, and therefore do not present the same conditioning difficulties as for regression with respect to the differential operator. This last fact, combined with \cref{Prop:projection-not-needed-in-linear-case}, in our view partly explains the strong discrepancy between results for linear and non-linear problems.
  \end{Rk}

  In future work, we plan to carry out an in-depth analysis of the estimation of the $E_\vtheta^\textrm{metric}$ and $E_\vtheta^\bot$ terms,  and their impact on both the overall theoretical framework and the training dynamics.We also aim to develop a more accurate method for approximating them.
  \subsection{Adapting natural gradient definitions to PINNs}\label{subsec:NG-PINNs}
  In \cref{subsec:pinns-as-a-least-square-regression}, we highlighted the fact that PINNs can be understood as a classical quadratic regression problem, simply by substituting the model $u$ with the compound model $(D,B)\circ u$ introduced in:
  \begin{equation*}
   \literalref{eqn:coumpound-model-definition}\tag{\ref{eqn:coumpound-model-definition}}
  \end{equation*}
  We then pointed out that this simple remark made it possible to extend ANaGRAM, and more generally natural gradient, to the case of PINNs without further difficulty.

  To make this adaptation more explicit, we compare in \cref{Tab:comparing-classical-PINNs-regression-definitions} below the objects introduced in \cref{sec:problem-position} to define natural gradient and empirical natural gradient in classical quadradic regression context, with their equivalents for the PINNs case.

  Natural neural tangent kernel and empirical tangent space pose certain technical difficulties and are therefore dealt with separately in \cref{subsec:NNTK-eNG-PINNs} below.

 \begin{table}[H]
  \caption{Comparison of objects introduced in \cref{sec:problem-position} for classical and PINNs regression.}
  \label{Tab:comparing-classical-PINNs-regression-definitions}
  \begin{center}
  \begin{tabular}{|p{.1\textwidth}|p{.24\textwidth}|p{.49\textwidth}|}
  \hline
  & Classical regression & PINNs regression \\
  \hline
  \hline
  Image set of the model &
  {\begin{align*}
   \cM:=&\Ima u\\
   =&\left\{u_{|\vtheta}:\vtheta\in\RR^P\right\}\\
   \subset&\, \dL^2\left(\Omega\right)
  \end{align*}}&
  {\begin{align*}
   \Gamma:=&\Ima \big((D,B)\circ u\big)\\
   =&\left\{\left(D[u_{|\vtheta}], B[u_{|\vtheta}]\right):\vtheta\in\RR^P\right\}\\
   \subset&\, \dL^2\left(\Omega,\partial\Omega\right) = \dL^2\left(\Omega\right)\times \dL^2\left(\partial\Omega\right)
  \end{align*}}\\
  \hline
  Model differential &
  {\begin{align*}
  \dd u_{|\vtheta}\in\cL in&\left(\RR^P\to\dL^2\left(\Omega\right)\right)\\
  \textrm{such }&\textrm{that:}\\
  u_{|\vtheta + h} =\quad &u_{|\vtheta}\\
  +\,& \dd u_{|\vtheta}(h)\\
  +\,& o(\norm[h]_2)
  \end{align*}}
  &
  {\begin{align*}
  \dd \left((D,B)\circ u\right)_{|\vtheta}\in\cL in&\left(\RR^P\to\dL^2\left(\Omega,\partial\Omega\right)\right)\\
  \textrm{such }&\textrm{that:}\\
  \left((D,B)\circ u\right)_{|\vtheta + h} =\quad &\left((D,B)\circ u\right)_{|\vtheta}\\
  +\,& \dd \left((D,B)\circ u\right)_{|\vtheta}(h)\\ %
  +\,& o(\norm[h]_2)
  \end{align*}}
  \\
  \hline
  Partial derivatives
  &
  {\begin{align*}
   \partial_p u_{|\vtheta} &= \dd u_{|\vtheta}(\ve^{(p)})\\
   &= \lim\limits_{\varepsilon\to 0} \frac{u_{|\vtheta + \varepsilon\ve^{(p)}} -  u_{|\vtheta}}{\varepsilon}
  \end{align*}}
  &
  {\begin{align*}
   \partial_p \left((D,B)\circ u\right)_{|\vtheta} =& \dd \left((D,B)\circ u\right)_{|\vtheta}(\ve^{(p)})\\
   =& \left(\partial_p D[u_{|\vtheta}], \partial_p B[u_{|\vtheta}]\right)\\
   =& \Big(\dd D_{|u_{|\vtheta}}\left(\partial_p u_{|\vtheta}\right),\\
   &\hspace{.22cm}\dd B_{|u_{|\vtheta}}\left(\partial_p u_{|\vtheta}\right)\Big)
  \end{align*}}
  \\
  \hline
  Tangent space
  &
  {\begin{align*}
   T_\vtheta\cM:=&\Ima \dd u_{|\vtheta}\\
   =&\Span\left(\partial_p u_{|\vtheta}\right)\\
   =&\Big\{\sum\limits_{p=1}^P h_p \partial_p u_{|\vtheta}:\\
   &\hspace{.25cm}(h_p)\in\RR^P\Big\}
  \end{align*}}
  &
  {\begin{align*}
   T_\vtheta\Gamma:=&\Ima \dd\big((D,B)\circ u\big)_{|\vtheta}\\
   =&\Span\left(\partial_p D[u_{|\vtheta}], \partial_p B[u_{|\vtheta}]\right)\\
   =&\Big\{\sum\limits_{p=1}^P h_p \left(\partial_p D[u_{|\vtheta}], \partial_p B[u_{|\vtheta}]\right):\\
   &\hspace{.25cm}(h_p)\in\RR^P\Big\}
  \end{align*}}
  \\
  \hline
  Functional loss
  &
  {\begin{align*}
   \arraycolsep=.04cm
   \cL:
   \left\{\begin{array}{lll}
   \dL^2(\Omega)&\to&\RR^+\\
   v & \mapsto & \frac{1}{2}\norm[v-f]^2_{\dL^2(\Omega)}
  \end{array}\right.
  \end{align*}}
  &
  {\begin{align*}
   \arraycolsep=.04cm
   \cL:
   \left\{\begin{array}{lll}
   \dL^2(\Omega,\partial\Omega)&\to&\RR^+\\
   v & \mapsto & \frac{1}{2}\norm[v-(f,g)]^2_{\dL^2(\Omega,\partial\Omega)}
  \end{array}\right.
  \end{align*}}
  \\
  \hline
  Functional gradient
  &
  {\begin{align*}
   \mathbf{\nabla\cL_\vtheta}=&\nabla\cL_{|u_{|\vtheta}}\\
   =&\underbracket{u_{|\vtheta}-f}_{\in\dL^2\left(\Omega\right)}
  \end{align*}}
  &
  {\begin{align*}
   \mathbf{\nabla\cL_\vtheta}=&\nabla\cL_{|((D,B)\circ u)_{|\vtheta}}\\
   =&\underbracket{\big((D,B)\circ u\big)_{|\vtheta}-(f,g)}_{\in\dL^2\left(\Omega,\partial\Omega\right)}
  \end{align*}}
  \\
  \hline
  Natural gradient
  &
  {\begin{align*}
   \dd u_{|\vtheta}^\dagger\left(\Pi^\bot_{T_{\vtheta}\cM}\left(\nabla\cL_{\vtheta}\right)\right)
  \end{align*}}
  &
  {\begin{align*}
   \dd\big((D,B)\circ u\big)_{|\vtheta}^\dagger\left(\Pi^\bot_{T_{\vtheta}\Gamma}\left(\nabla\cL_{\vtheta}\right)\right)
  \end{align*}}
  \\
  \hline
  \end{tabular}
  \end{center}
 \end{table}

  \subsection{Natural Neural Tangent Kernel and empirical Tangent Space of PINNs}\label{subsec:NNTK-eNG-PINNs}

  Seeing PINNs as a quadratic regression problem with respect to the coumpound model of \cref{eqn:coumpound-model-definition}, as established in \cref{subsec:pinns-as-a-least-square-regression}, %
  we see that the ``natural'' defintion of NNTK that arises from \cref{LM:projection-on-span-spaces} is: $\forallt \vtheta\in\RR^P$, $\forallt x,y\in (\Omega\times\partial\Omega)$ %
  \begin{align*}
   NNTK_\vtheta(x, y)
   = \sum_{1\leq p,q\leq P}\partial_p \left((D,B)\circ u\right)_{|\vtheta}(x) {G_\vtheta}^\dagger_{p,q}\partial_q \left((D,B)\circ u\right)_{|\vtheta}(y)^t.
  \end{align*}
  with:
  \begin{equation}\label{eqn:Gram-coumpound-model}
   G_\vtheta:=\braket{\partial_p \left((D,B)\circ u\right)_{|\vtheta}}{\partial_q \left((D,B)\circ u\right)_{|\vtheta}}_{\dL^2(\Omega, \partial\Omega)}
  \end{equation}
  The problem lies in the fact, that in order to define the empirical Tangent Space, we would like to be able to separate $\Omega$ and $\partial\Omega$ contributions. To do this, we have to remark that the coumpound model defined in \cref{eqn:coumpound-model-definition} outputs functions that have a two-dimensional output, \ie the function is vector-valuated and not scalar-valuated anymore. More precisely, we have that $\forallt f\in \Ima((D,B)\circ u)=\Gamma\subset\dL^2(\Omega, \partial\Omega)=\dL^2(\Omega\to\RR)\times\dL^2(\partial\Omega\to\RR)$, there exist $f_\Omega\in\dL^2(\Omega\to\RR)$ and $f_{\partial\Omega}\in\dL^2(\partial\Omega\to\RR)$ such that $f=(f_\Omega, f_{\partial\Omega})$. Thus
  $\forallt x=(x_\Omega,x_{\partial\Omega})\in\Omega\times\partial\Omega$
  \begin{equation}\label{eqn:compound-model-output-factorization}
   f(x) = \left(f_\Omega(x_\Omega), f_{\partial\Omega}(x_{\partial\Omega})\right)\in\RR^2.
  \end{equation}
  Hence, the associated reproducing kernel should be a bit revisited. In particular the reproducing property rewrites \citep[Section 3.2]{alvarezKernelsVectorValuedFunctions2012}: $\forallt f=(f_\Omega, f_{\partial\Omega})\in T_\vtheta\Gamma\subset\dL^2(\Omega, \partial\Omega)$, $\forallt x=(x_\Omega,x_{\partial\Omega})\in\Omega\times\partial\Omega$ and $\forallt c\in\RR^2$,
  \begin{equation}
   \braket{f}{NNTK_\vtheta(\cdot, x)c}=f(x)^Tc = f_\Omega(x_\Omega) c_1 + f_{\partial\Omega}(x_{\partial\Omega})c_2.
  \end{equation}
  In particular, we have: $f_\Omega(x_\Omega)=\braket{f}{NNTK_{|\vtheta}(\cdot, x)\ve^{(1)}}$ ; $f_{\partial\Omega}(x_{\partial\Omega})=\braket{f}{NNTK_{|\vtheta}(\cdot, x)\ve^{(2)}}$. This means that the contributions coming from $\Omega$ and $\partial\Omega$ are linearly independent and can therefore be separated. More precisely:
  defining the partial $NNTKs$:
  \begin{itemize}
   \item $\forallt y\in\Omega\times\partial\Omega$, $\forallt x_\Omega\in\Omega$, $\forallt x_{\partial\Omega}\in\partial\Omega$:
   \begin{align}
    NNTK^\Omega_{|\vtheta}(y, x_\Omega)
    &:=NNTK_{|\vtheta}(y, (x_\Omega,x_{\partial\Omega}))\ve^{(1)}\\
    &=\sum\limits_{1\leq p,q\leq P}\partial_p ((D,B)\circ u)_{|\vtheta}(y) {G_\vtheta}^\dagger_{p,q} \partial_q(D\circ u)_{|\vtheta}(x_\Omega),
   \end{align}
   \item $\forallt y\in\Omega\times\partial\Omega$, $\forallt x_{\partial\Omega}\in\partial\Omega$, $\forallt x_\Omega\in\Omega$:
   \begin{align}
    NNTK^{\partial\Omega}_{|\vtheta}(y, x_{\partial\Omega})
    &:=NNTK_{|\vtheta}(y, (x_\Omega,x_{\partial\Omega}))\ve^{(2)}\\
    &=\sum\limits_{1\leq p,q\leq P}\partial_p ((D,B)\circ u)_{|\vtheta}(y){G_\vtheta}^\dagger_{p,q} \partial_q (B\circ u)_{|\vtheta}(x_{\partial\Omega}),
   \end{align}
  \end{itemize}
  Then in particular: $\forallt x=(x_\Omega, x_{\partial\Omega})\in \Omega\times\partial\Omega$, $\forallt f\in T_\vtheta\Gamma$
  \begin{itemize}
   \item $f_\Omega(x_\Omega)=\braket{f}{NNTK^\Omega_{|\vtheta}(\cdot, x_\Omega)}=\braket{f}{NNTK_{|\vtheta}(\cdot,x)\ve^{(1)}}$.
   \item $f_{\partial\Omega}(x_{\partial\Omega})=\braket{f}{NNTK^{\partial\Omega}_{|\vtheta}(\cdot, x_{\partial\Omega})}=\braket{f}{NNTK_{|\vtheta}(\cdot, x)\ve^{(2)}}$.
  \end{itemize}
  This allows us to define an empirical tangent space for PINNs in the same way as in \cref{eqn:empirical-tangent-space-NNTK}, namely: Given two batches $(x_i^\Omega)\in\Omega^{S_\Omega}$ and $(x_i^{\partial\Omega})\in\partial\Omega^{S_{\partial\Omega}}$, we define the associated empirical tangent space:
  \begin{equation}
   \widehat{T}_{\vtheta, (x_i^\Omega), (x_j^{\partial\Omega})}^{NNTK}
   :=\Span\left(NNTK^{\Omega}_{|\vtheta}(\cdot, x^{\Omega}_i), NNTK^{\partial\Omega}_{|\vtheta}(\cdot, x^{\partial\Omega}_j)\,:\, 1\leq i\leq S_\Omega, 1\leq j\leq S_{\partial\Omega}\right)
  \end{equation}
  Empirical Natural Gradient and associated ANaGRAM derivation poses then no particular difficulty and are derived in a similar way to \cref{subsec:consequences of NNTK-theory}.

  To make this more concrete, we plot below some NTK and NNTK for the heat equation and the 2D Laplace equation. What we observe is that the NNTK yields a much more specialized kernel that NTK, in the sense that it is much more localized
  and also perfectly centered on the reference points. This localization property is expected to be more pronounced as the complexity of the model is increased. The drastic discrepancy observed between  NTK and NNTK explain why PINNs fail to train under classical descent while empirical natural gradient solves this issue. The excellent locality of the NNTK kernel leads indeed to a much better optimization schema because the residues are indeed arbitrarily shrunk in small regions around each sample batch point by independent modifications of the function which is obviously impossible with the NTK. When increasing the complexity of the model the spatial range of the NNTK is expected to decrease accordingly and then more points will be needed to ``percolate'' the optimization over the domain. In such case if the batch size increases too much various principled strategies can be considered to control the complexity of the empirical natural gradient, precisely because the NNTK defines a natural distance between points which can be leverage to define an approximate block Gram matrix. In our point of view this is one of the great advantages of the empirical tangent space over the ``parameters'' tangent space, because no such good metric is given for free in parameters space.

  \subsubsection{NTK and NNTK plots of Laplace 2\,D equation}
   \begin{figure}[H]
     \centering
     \includegraphics[height=.4\textheight]{./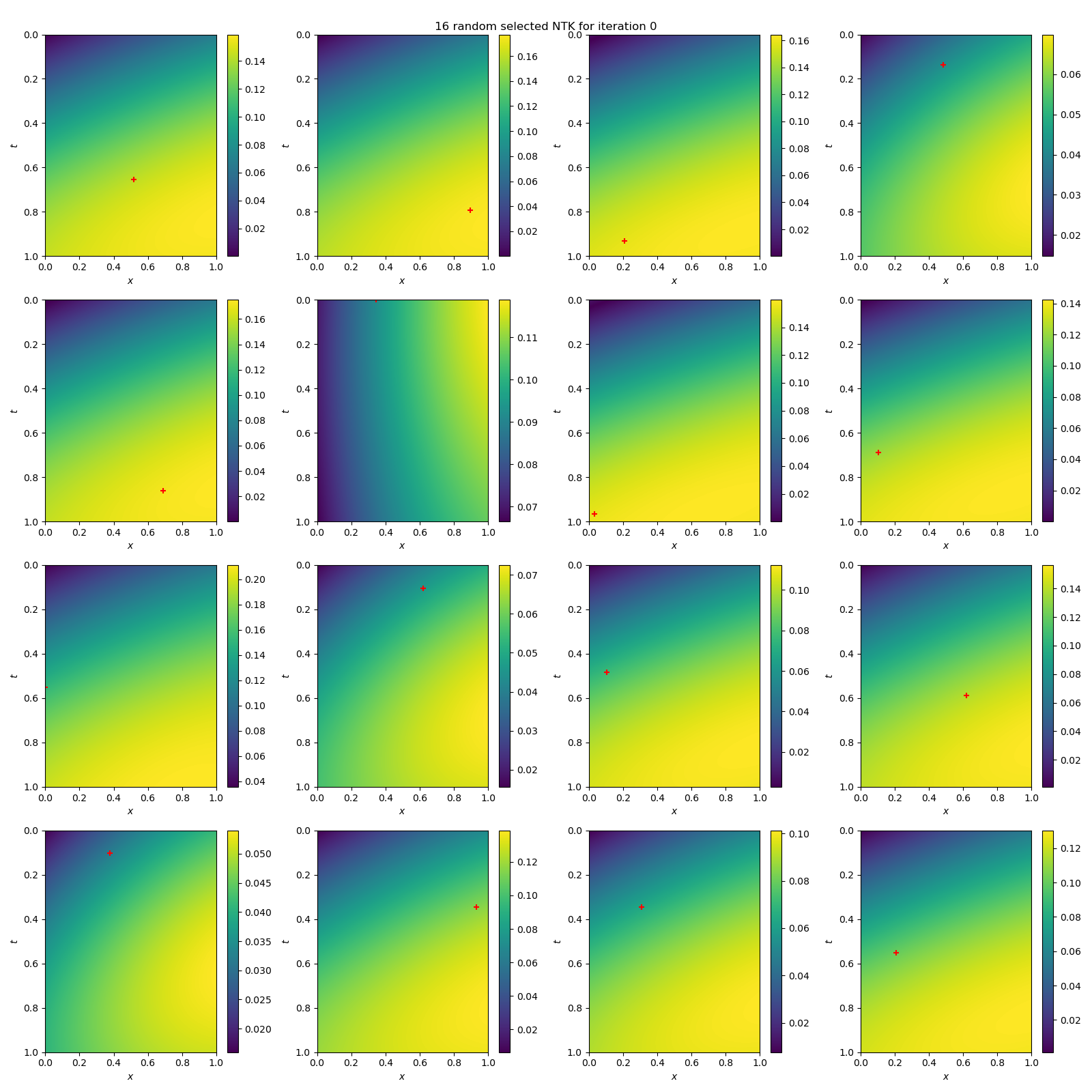}
     \captionsetup{singlelinecheck=off}
     \caption{NTK at initialization for Laplace equation in 2\,D. Reading: the red cross on each subfigure representing a point $x_i$, the plot represents the function $NTK_{\vtheta_0}(\cdot, x_i)$}
    \end{figure}
    \begin{figure}[H]
     \centering
     \includegraphics[height=.4\textheight]{./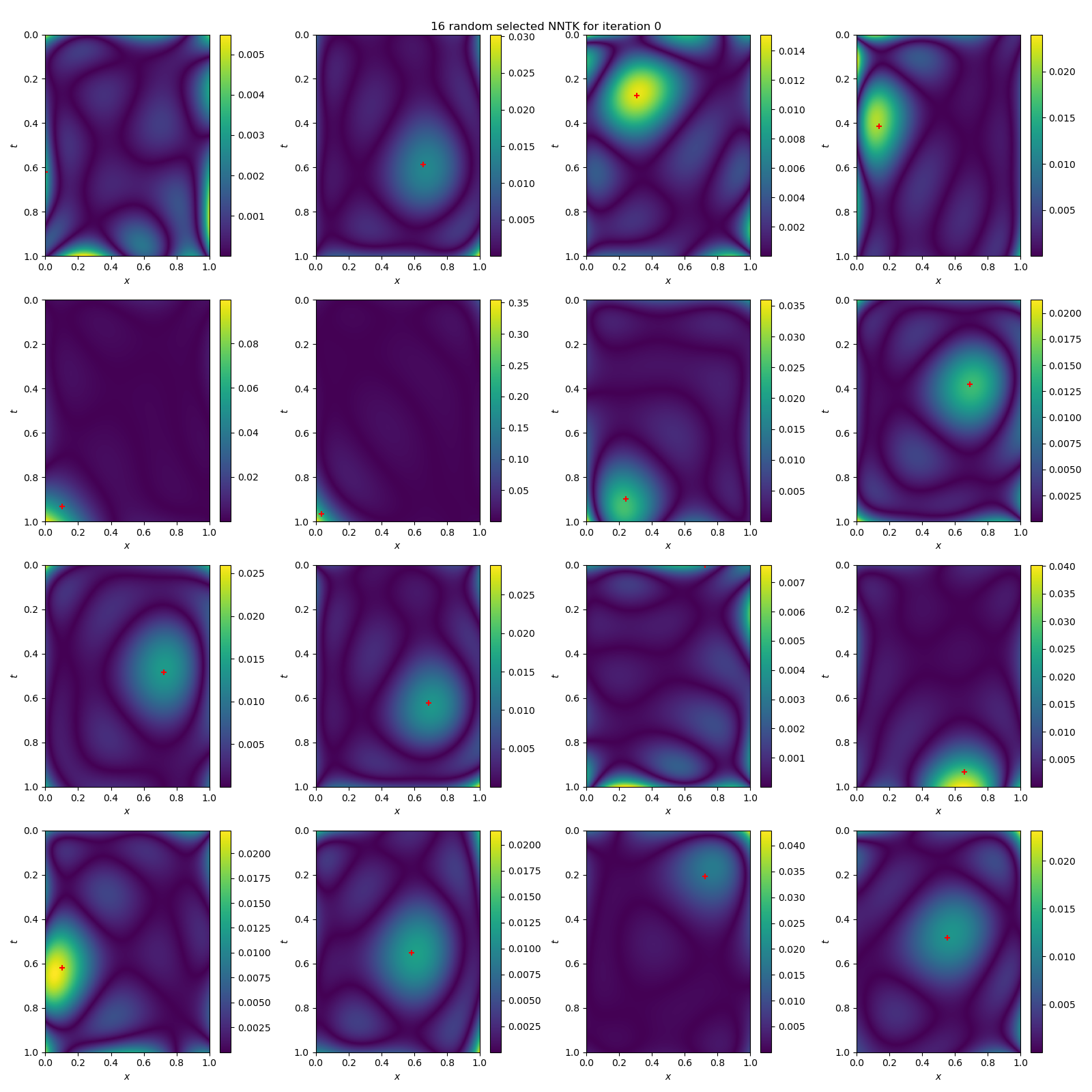}
     \captionsetup{singlelinecheck=off}
     \caption{NNTK at initialization for Laplace equation in 2\,D. Reading: the red cross on each subfigure representing a point $x_i$, the plot represents the function $NNTK_{\vtheta_0}(\cdot, x_i)$}
    \end{figure}

    \begin{figure}[H]
     \centering
     \includegraphics[height=.4\textheight]{./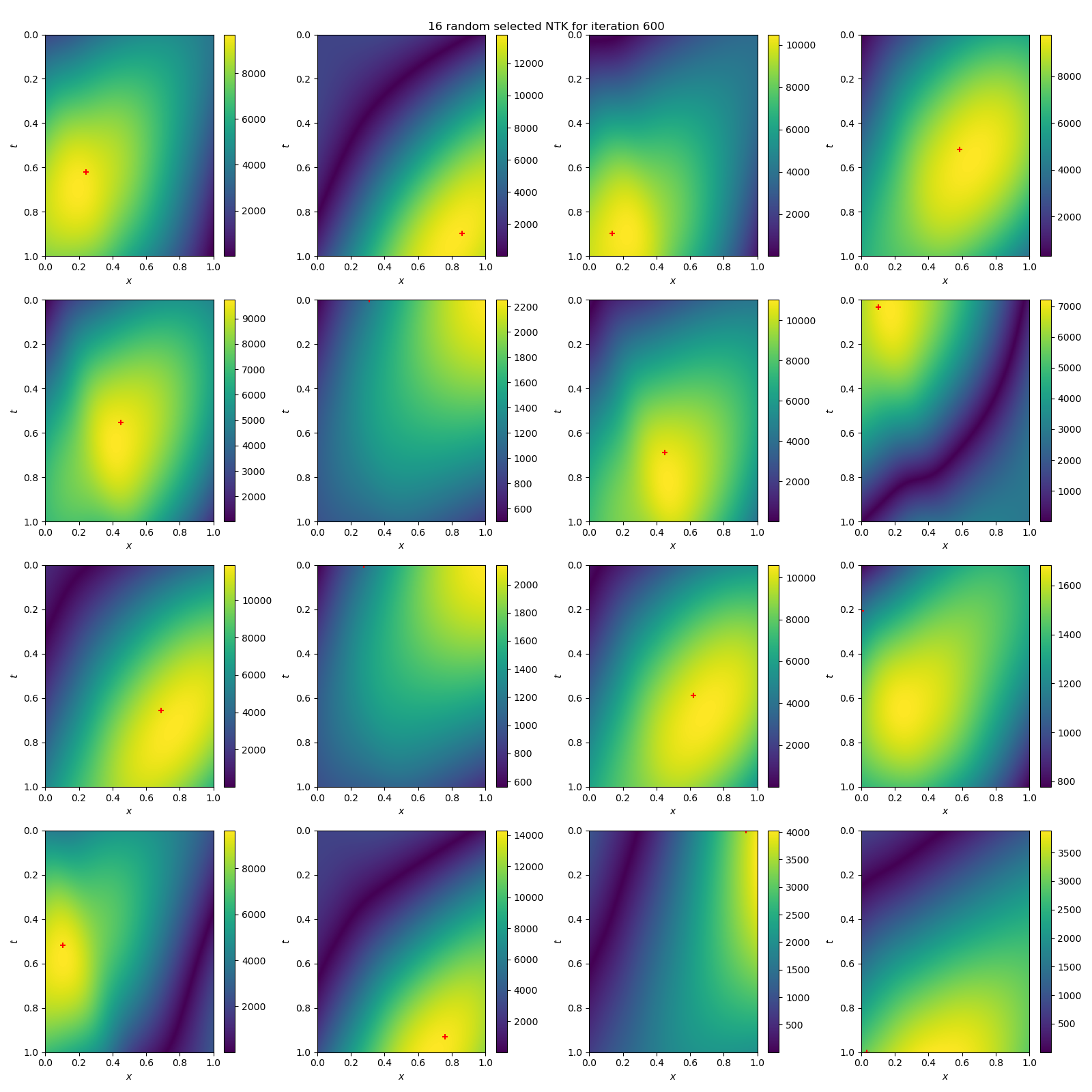}
     \captionsetup{singlelinecheck=off}
     \caption{NTK at the end of optimization for Laplace equation in 2\,D. Reading: the red cross on each subfigure representing a point $x_i$, the plot represents the function $NTK_{\vtheta_{\rm end}}(\cdot, x_i)$}
    \end{figure}
    \begin{figure}[H]
     \centering
     \includegraphics[height=.4\textheight]{./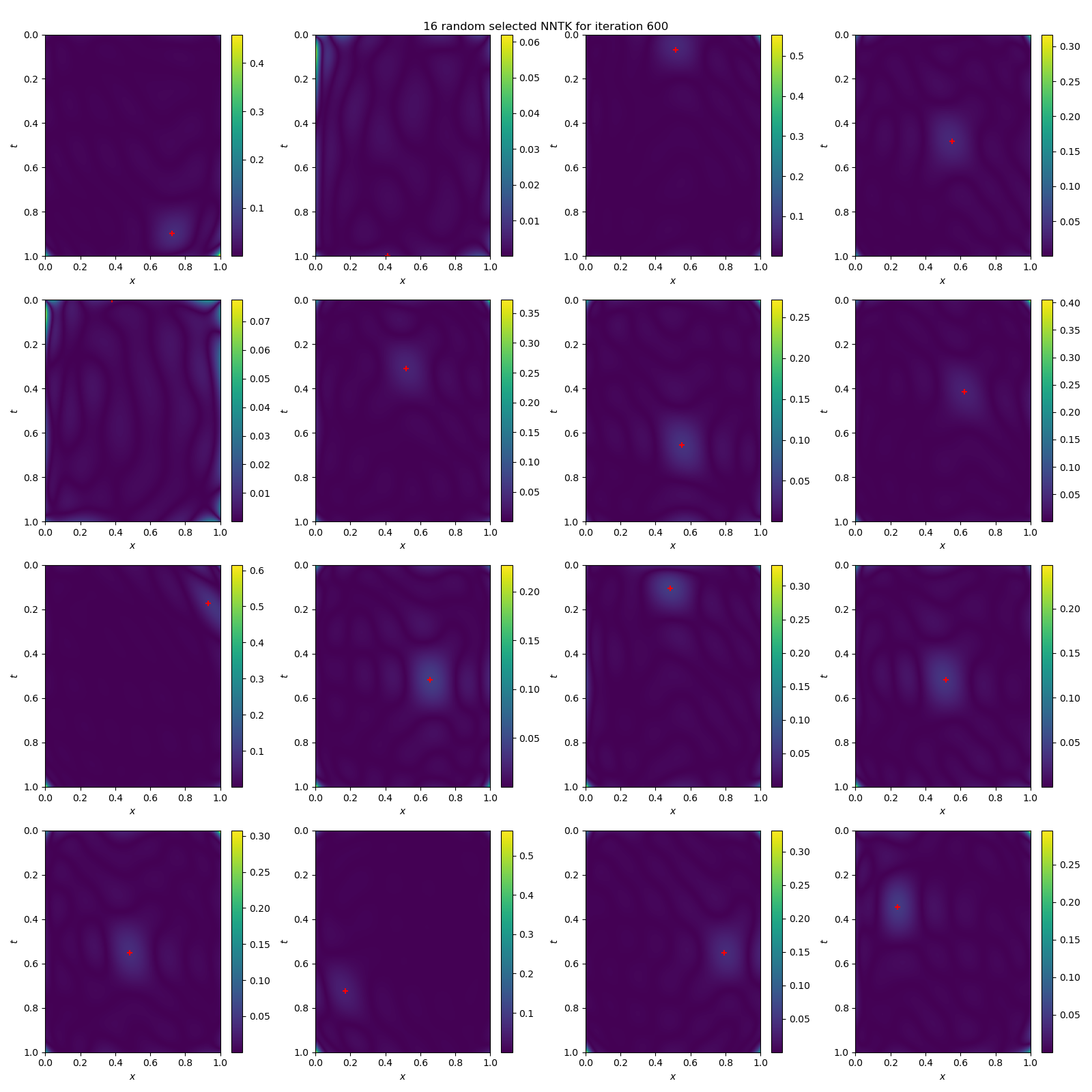}
     \captionsetup{singlelinecheck=off}
     \caption{NNTK at the end of optimization for Laplace equation in 2\,D. Reading: the red cross on each subfigure representing a point $x_i$, the plot represents the function $NNTK_{\vtheta_{\rm end}}(\cdot, x_i)$}
    \end{figure}

  \subsubsection{NTK and NNTK plots of 1+1\,D Heat equation}

    \begin{figure}[H]
     \centering
     \includegraphics[height=.4\textheight]{./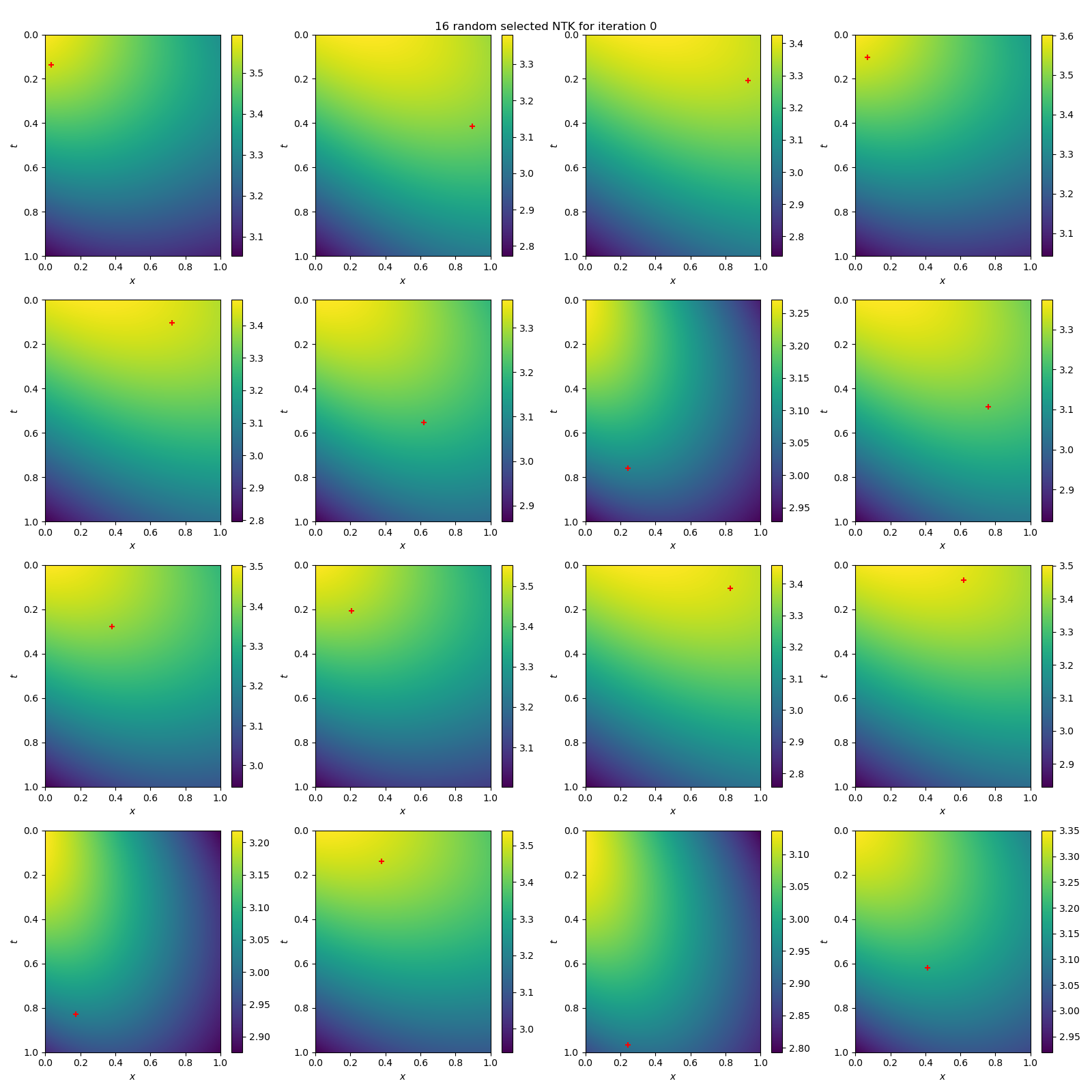}
     \captionsetup{singlelinecheck=off}
     \caption{NTK at initialization for Heat equation in 1+1\,D. Reading: the red cross on each subfigure representing a point $x_i$, the plot represents the function $NTK_{\vtheta_0}(\cdot, x_i)$}
    \end{figure}
    \begin{figure}[H]
     \centering
     \includegraphics[height=.4\textheight]{./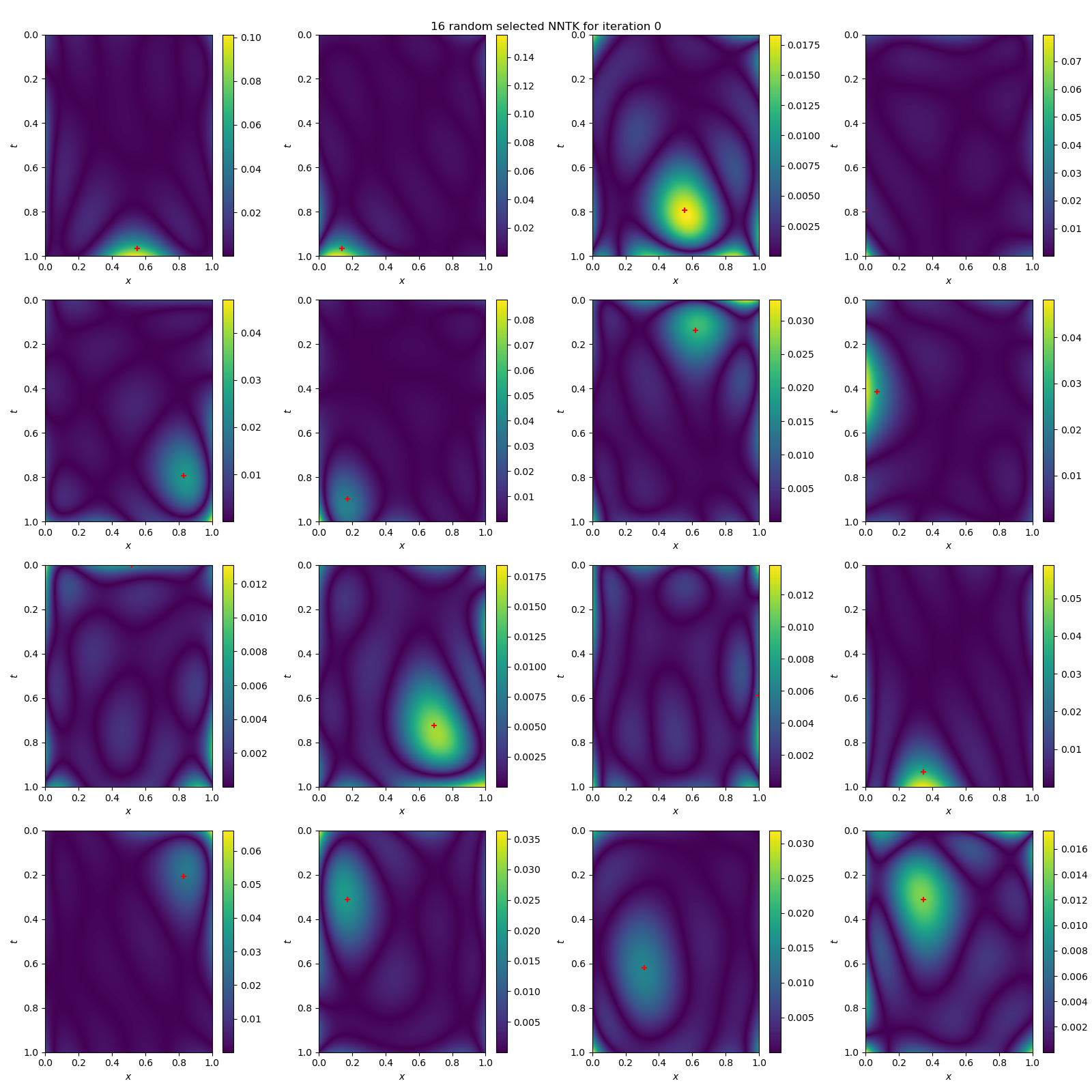}
     \captionsetup{singlelinecheck=off}
     \caption{NNTK at initialization for Heat equation in 1+1\,D. Reading: the red cross on each subfigure representing a point $x_i$, the plot represents the function $NNTK_{\vtheta_0}(\cdot, x_i)$}
    \end{figure}

    \begin{figure}[H]
     \centering
     \includegraphics[height=.4\textheight]{./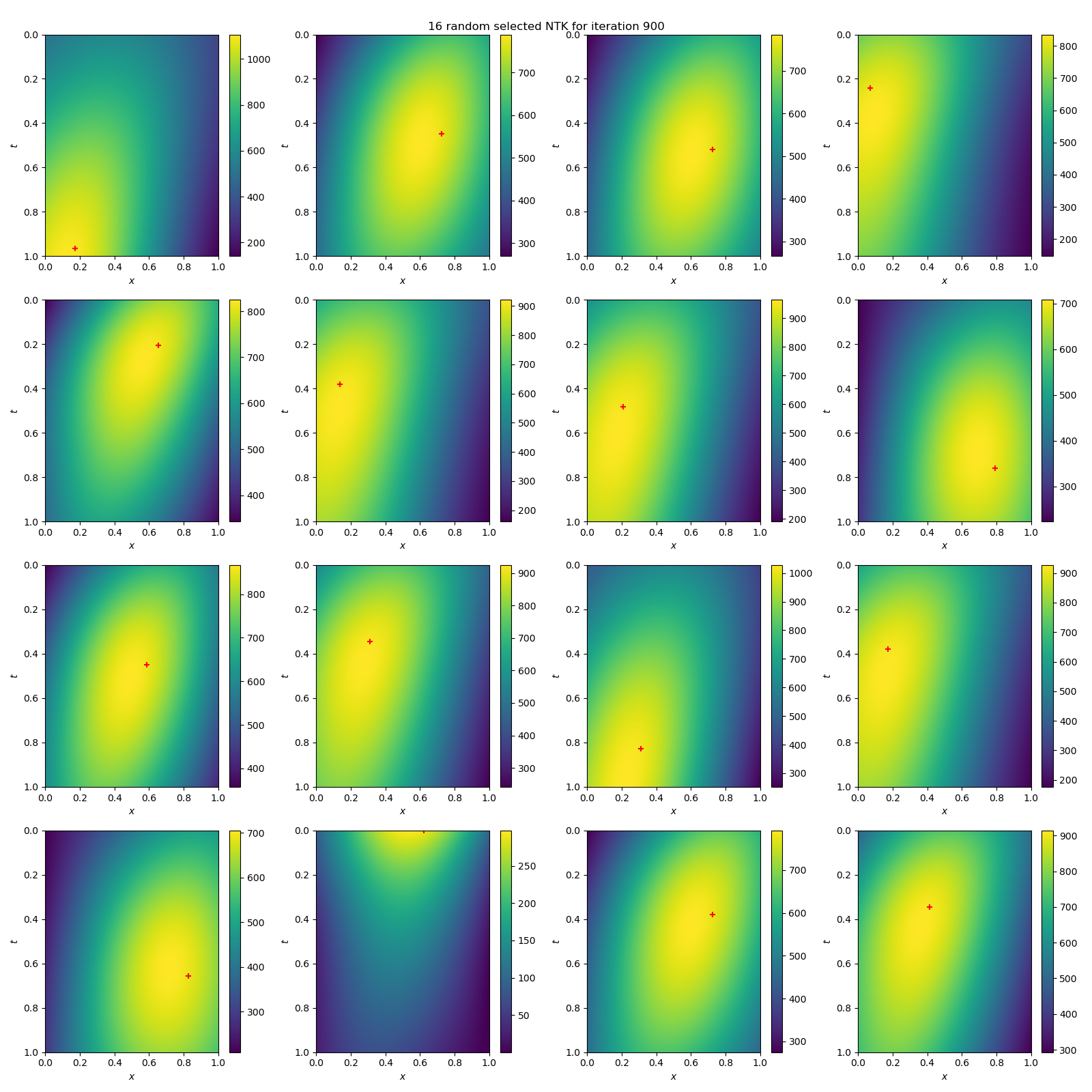}
     \captionsetup{singlelinecheck=off}
     \caption{NTK at the end of optimization for Heat equation in 1+1\,D. Reading: the red cross on each subfigure representing a point $x_i$, the plot represents the function $NTK_{\vtheta_{\rm end}}(\cdot, x_i)$}
    \end{figure}
    \begin{figure}[H]
     \centering
     \includegraphics[height=.4\textheight]{./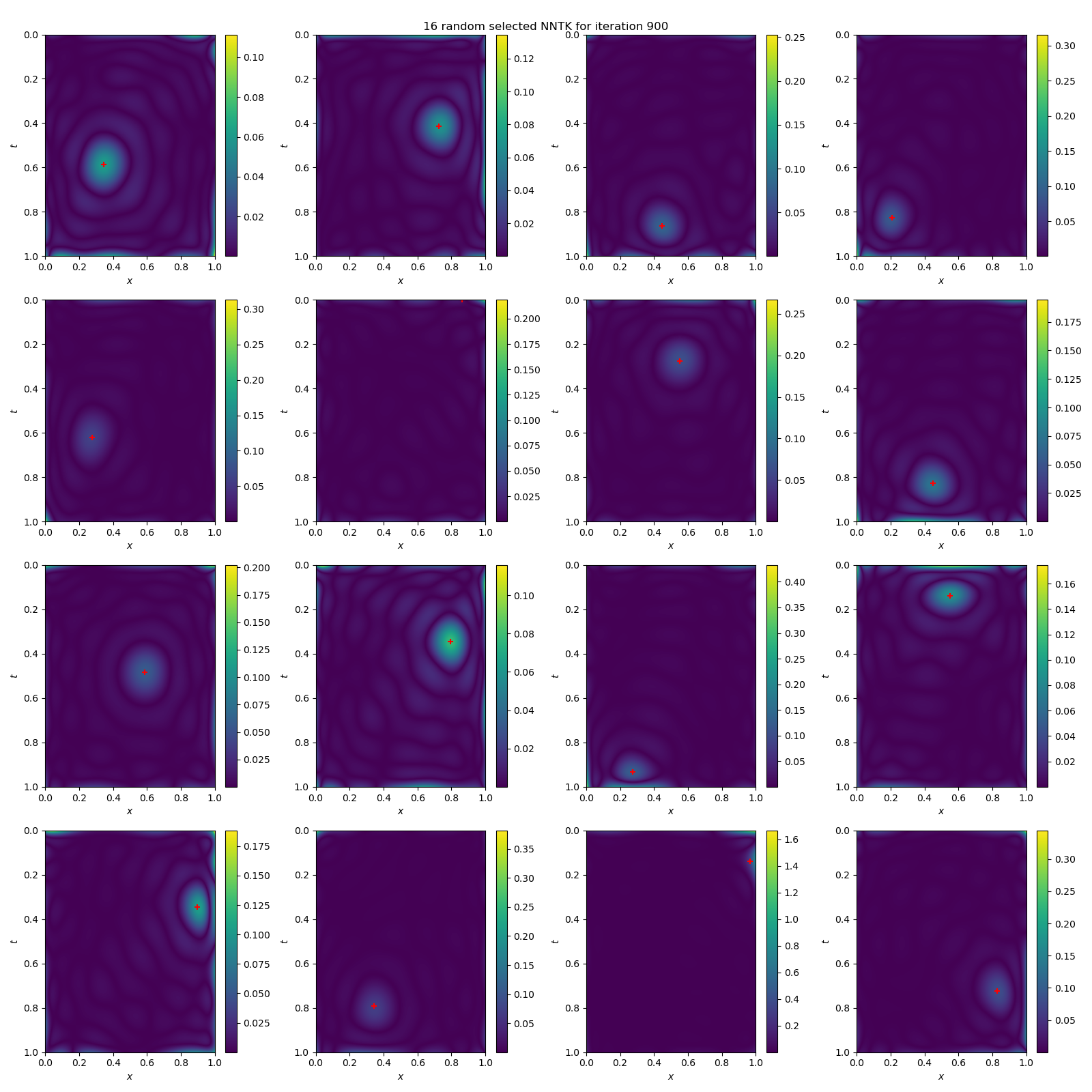}
     \captionsetup{singlelinecheck=off}
     \caption{NNTK at the end of optimization for Heat equation in 1+1\,D. Reading: the red cross on each subfigure representing a point $x_i$, the plot represents the function $NNTK_{\vtheta_{\rm end}}(\cdot, x_i)$}
    \end{figure}

  \section{Connection of Natural Gradient of PINNs to Green's function}\label{sec-app:Greens-theory}
   In this section, we will establish the relationship between Natural Gradient and Green's function. To set the stage, let us first introduce some definitions.

   \subsection{Preliminary definitions}

   \begin{Def}[Green's function]\label{Def:greens-function}
    Let $D:\cH\to\dL^2(\Omega\to\RR,\mu)$ be a linear differential operator. A Green's function of $D$ is then any kernel function $g:\Omega\times\Omega\to\RR$ such that the operator:
    \begin{equation}\label{}
	 R:\left\{\begin{array}{lll}
	 D[\cH] &\to& \cH\\
	 f & \mapsto & \Big(x\in\Omega\mapsto\int_\Omega g(x,s)f(s)\mu(\dd s)\Big)
	 \end{array}\right.,
    \end{equation}
  is a right-inverse to $D$, \ie such that $D\circ R={I_\cH}$.
 \end{Def}
 \begin{Rk}\label{Rk:Green-as-Dirac}
  We can rephrase \cref{Def:greens-function}, by saying that $g$ is a Green's function if: $\forallt x,s\in\Omega$
  \begin{equation*}
   D[g(\cdot,s)](x)=\delta_x(s),
  \end{equation*}
  where $\delta_x$ is the Dirac's distribution centered in $x$.
 \end{Rk}
 \begin{Rk}\label{Rk:Green-as-solution}
  \cref{Def:greens-function} implies that, $D[u]=f\in D[\cH]$ is solved by $u(x):=\int_\Omega g(x,s)f(s)\mu(\dd s)$.
 \end{Rk}
 In order to obtain more meaningful results, we will need the following generalizations:
 \begin{Def}[Solution in the least-squares sense]\label{Def:least-square-solution}
  Let $D:\cH\to\dL^2(\Omega\to\RR,\mu)$ be a linear differential operator, $\cH_0\subset\cH$ a subspace isometrically embedded in $\cH$ and $f\in\dL^2(\Omega\to\RR,\mu)$. We call $u_0\in\cH_0$ a solution of the equation $D[u]=f$ in the least-squares sense if $u_0$ verifies:
  \begin{equation}
   \norm[D{[}u_0{]}-f]^2_{\dL^2(\Omega\to\RR,\mu)}=\inf_{u\in\cH_0}\norm[D{[}u{]}-f]^2_{\dL^2(\Omega\to\RR,\mu)}
  \end{equation}
 \end{Def}
 \begin{Rk}
  If $f\in D[\cH_0]$, then $\inf\limits_{u\in\cH_0}\norm[D{[}u{]}-f]^2_{\dL^2(\Omega\to\RR,\mu)}=0$ and thus $u_0$ is a classical solution.
 \end{Rk}
 \begin{Def}[generalized Green's function]
  \label{Def:generalized-greens-function}
  Let $D, \cH_0$ and $f$ be as in \cref{Def:least-square-solution}. A generalized Green's function of $D$ on $\cH_0$ is then any kernel function $g:\Omega\times\Omega\to\RR$ such that the operator:
  \begin{equation*}
	R_{\cH_0}:\left\{\begin{array}{lll}
    \dL^2(\Omega\to\RR,\mu) &\to& \cH\\
    f & \mapsto & \Big(x\in\Omega\mapsto\int_\Omega g(x,s)f(s)\mu(\dd s)\Big)
	\end{array}\right.,
  \end{equation*}
  verifies the equation:
  \begin{equation}\label{eqn:generalized-green-operator-identity}
   D\circ R_{\cH_0} = \Pi^\bot_{D[\cH_0]}
  \end{equation}
 \end{Def}
 \begin{Rk}
  Due to the identity ${\Pi^\bot_{D[\cH_0]}}_{|D[\cH_0]}=I_{\cH_0}$,
  \cref{Def:generalized-greens-function} is indeed a generalization of \cref{Def:greens-function}.
 \end{Rk}
 We will now state and prove a result required to prove \cref{Th:Green-function-in-tangent-space}.

 \subsection{Proof of \cref{Th:Green-function-in-tangent-space}}\label{subsec:proof-of-green-connection}

 \begin{Prop}
 \label{Prop:Green-function-in-span-as-projection}
  Let $D:\cH\to\dL^2(\Omega\to\RR,\mu)$ be a linear differential operator, and $\cH_0:=\Span(u_i : 1\leq i\leq P)\subset\cH$ a subspace isometrically embedded in $\cH$. Then the generalized Green's function of $D$ on $\cH_0$ is given by: $\forallt x,y\in\Omega$
  \begin{equation}
   g_{\cH_0}(x,y):=\sum_{1\leq i,j\leq P} u_i(x)\, G^\dagger_{i,j} D[u_j](y),
  \end{equation}
  with: $\forallt 1\leq i,j\leq P$,
  \begin{equation}
   G_{ij} := \braket{D[u_i]}{D[u_j]}_{\dL^2(\Omega\to\RR,\mu)}.
  \end{equation}
 \end{Prop}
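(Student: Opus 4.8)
The plan is to verify directly that the integral operator $R_{\cH_0}$ built from the kernel $g_{\cH_0}$ satisfies the defining identity \eqref{eqn:generalized-green-operator-identity} of a generalized Green's function, namely $D\circ R_{\cH_0}=\Pi^\bot_{D[\cH_0]}$. The whole argument then reduces to \cref{LM:projection-on-span-spaces}, applied not to the $u_i$'s in $\cH$ but to the family $\big(D[u_i]\big)_{1\le i\le P}$ inside $\dL^2(\Omega\to\RR,\mu)$.

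First I would compute $R_{\cH_0}(f)$ explicitly. For $f\in\dL^2(\Omega\to\RR,\mu)$, inserting the definition of $g_{\cH_0}$ and pulling the finite ($P^2$-term) sum out of the integral gives
\begin{equation*}
 R_{\cH_0}(f)=\sum_{1\le i,j\le P} u_i\, G^\dagger_{i,j}\,\braket{D[u_j]}{f}_{\dL^2(\Omega\to\RR,\mu)}\ \in\ \cH_0\subset\cH ,
\end{equation*}
so $R_{\cH_0}$ is a well-defined map $\dL^2(\Omega\to\RR,\mu)\to\cH$ with image in $\cH_0$. Since $D$ is linear it commutes with this finite sum, hence
\begin{equation*}
 D\big[R_{\cH_0}(f)\big]=\sum_{1\le i,j\le P} D[u_i]\, G^\dagger_{i,j}\,\braket{D[u_j]}{f}_{\dL^2(\Omega\to\RR,\mu)} .
\end{equation*}

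Next I would identify the right-hand side with $\Pi^\bot_{D[\cH_0]}(f)$. Because $D$ is linear, $D[\cH_0]=\Span\big(D[u_i]:1\le i\le P\big)$ is finite dimensional, hence an RKHS by \cref{Cor:finite-dimensional-is-RKHS}. Applying \cref{LM:projection-on-span-spaces} to the generating family $\big(D[u_i]\big)$ of $D[\cH_0]$ in the Hilbert space $\dL^2(\Omega\to\RR,\mu)$ shows that the orthogonal projection onto $D[\cH_0]$ is the integral operator with kernel $k(x,y)=\sum_{i,j} D[u_i](x)\,G^\dagger_{i,j}\,D[u_j](y)$, where $G_{ij}=\braket{D[u_i]}{D[u_j]}_{\dL^2(\Omega\to\RR,\mu)}$ is exactly the Gram matrix of the statement; equivalently $\Pi^\bot_{D[\cH_0]}(f)(x)=\braket{k(x,\cdot)}{f}_{\dL^2(\Omega\to\RR,\mu)}=\sum_{i,j}D[u_i](x)\,G^\dagger_{i,j}\,\braket{D[u_j]}{f}_{\dL^2(\Omega\to\RR,\mu)}$. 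This is precisely the expression obtained above for $D[R_{\cH_0}(f)](x)$, so $D\circ R_{\cH_0}=\Pi^\bot_{D[\cH_0]}$, i.e. \eqref{eqn:generalized-green-operator-identity} holds and $g_{\cH_0}$ is a generalized Green's function of $D$ on $\cH_0$ in the sense of \cref{Def:generalized-greens-function}.

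I do not expect a genuine obstacle. The only points needing a little care are that the sum defining $g_{\cH_0}$ is finite, so interchanging it with the integral and with $D$ is immediate, and that the symbol $G^\dagger$ in the statement must be read as the Moore--Penrose inverse of the $\dL^2$-Gram matrix of the $D[u_i]$'s — which is precisely the matrix produced by \cref{LM:projection-on-span-spaces}, so no mismatch occurs even when the $D[u_i]$ are linearly dependent. The isometric-embedding hypothesis on $\cH_0$ is not used beyond ensuring that $\cH_0$ is a genuine subspace of $\cH$, so that $R_{\cH_0}$ indeed lands in $\cH$.
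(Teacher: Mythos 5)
Your proof is correct and follows essentially the same route as the paper's: both arguments reduce the claim to \cref{LM:projection-on-span-spaces} applied to the family $\big(D[u_i]\big)_{1\le i\le P}$ in $\dL^2(\Omega\to\RR,\mu)$, and both use linearity of $D$ together with the finiteness of the sum to identify $D\circ R_{\cH_0}$ with $\Pi^\bot_{D[\cH_0]}$; the only difference is that you start from $R_{\cH_0}$ and apply $D$, whereas the paper starts from the projection and factors $D$ out, which is the same computation read in the opposite direction.
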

 \begin{proof}
 By definition:
 \begin{equation*}
  D[\cH_0]=\{D[u]\,:\, u\in\cH_0\}=\Span\left(D[u_i]\,:\, 1\leq i\leq P\right),
 \end{equation*}
 Thus \cref{LM:projection-on-span-spaces} applies and yields that:  $\forallt x,y\in\Omega$
 \begin{equation}
   k(x,y):=\sum_{i,j\in\NN} D[u_i](x)\, G^\dagger_{i,j} D[u_j](y),
  \end{equation}
  with: $\forallt 1\leq i,j\leq P$,
  \begin{equation}
   G_{ij} := \braket{D[u_i]}{D[u_j]}_{\cH_2},
  \end{equation}
  is the kernel of the projection $\Pi^\bot_{D[\cH_0]}:\dL^2(\Omega\to\RR,\mu)\to\dL^2(\Omega\to\RR,\mu)$ into the RKHS $D[\cH_0]$. This means that: $\forall f\in\dL^2(\Omega\to\RR,\mu)$
  \begin{align}
   \Pi^\bot_{D[\cH_0]}(f)(x)
   &=\int_\Omega k(x,y)f(y)\,\mu(\dd y)
   =\int_\Omega\sum_{1\leq i,j\leq P} D[u_i](x)\, G^\dagger_{i,j} D[u_j](y)f(y)\,\mu(\dd y)\\
   &=\sum_{1\leq i,j\leq P} D[u_i](x)\, G^\dagger_{i,j} \int_\Omega D[u_j](y)f(y)\,\mu(\dd y)\label{eqn:inversing-sum-integral}\\
   &=D\left[\sum_{1\leq i,j\leq P} u_i(\cdot)\, G^\dagger_{i,j} \int_\Omega D[u_j](y)f(y)\,\mu(\dd y)\right](x),\label{eqn:D-factorization}
  \end{align}
  where
  \cref{eqn:D-factorization} comes from the linearity of $D$.
  Refering to \cref{Def:generalized-greens-function}, this exactly means that the kernel defined by: $\forallt x,y\in\Omega$
  \begin{equation*}
   g_{\cH_0}(x,y):=\sum_{i,j\in\NN} u_i(x)\, G^\dagger_{i,j} D[u_j](y)
  \end{equation*}
  is the generalized Green's function of the operator $D$ on $\cH_0$.
\end{proof}
 We are now in a position to present the proof of:
 \greenFunctionInTangentSpace*
 \begin{proof}
  This a simple application of \cref{Prop:Green-function-in-span-as-projection} to the space $\cH_0=\Span\left(\partial_p u_{|\vtheta}\,:\,1\leq p\leq P\right)=\Ima \dd u_{|\vtheta}=T_\vtheta\cM$.
  To conclude with the proof of \cref{eqn:natural-gradient-as-green}, let us note that \cref{eqn:generalized-green-operator-identity} can be rewritten as:
  \begin{equation}
   R_{\cH_0} = D_{|\cH_0}^\dagger\circ \Pi^\bot_{D[\cH_0]}.
  \end{equation}
  Specifically, we have:
  \begin{equation}
   \dd\big(D\circ u\big)_{|\vtheta}^\dagger\circ \Pi^\bot_{D[\cH_0]}
   =\dd u_{|\vtheta}^\dagger\circ D^\dagger\circ \Pi^\bot_{D[\cH_0]}
   =\dd u_{|\vtheta}^\dagger\circ R_{\cH_0}
  \end{equation}
  Since, by definition, $R_{\cH_0}$ is the operator associated with the Green's function $g_{\cH_0}$, this directly implies that the natural gradient of PINNs:
  \begin{equation*}
   \vtheta_{t+1}\gets \vtheta_t - \eta\,\dd\big((D,B)\circ u\big)_{|\vtheta_t}^\dagger\left(\Pi^\bot_{T_{\vtheta_t}\Gamma}\left(\nabla\cL_{\vtheta_t}\right)\right),
  \end{equation*}
  can be expressed as:
  \begin{equation*}
   \literalref{eqn:natural-gradient-as-green}
  \end{equation*}
  which concludes.
 \end{proof}

 \subsection{A practical example : Derivation of generalized Green's function for Laplacian operator, based on PINN's natural gradient formulation on a Fourier's basis}
 We will illustrate \cref{Th:Green-function-in-tangent-space} on a parametric model given by partial Fourier's series (\cf \cref{subsec:partial-fourier-series}) for the Laplace operator on $[0,1]^d$:
 \begin{equation*} %
  \Delta:\left\{\begin{array}{lll}
                 \dH^2([0,1]^d\to\CC)&\to&\dL^2([0,1]^d\to\CC)\\
                 v&\mapsto&\sum_{l=1}^d\partial_{ll} v
                \end{array}\right.
 \end{equation*}
 For this purpose, let us then fix $N\in\NN$ and consider the associated partial Fourier's Serie $S_N$ as defined in \cref{eqn:Fouriers-Serie-def}:
 \begin{equation*}
  \literalref{eqn:Fouriers-Serie-def}
 \end{equation*}
 We will then derive according to \cref{Th:Green-function-in-tangent-space} the generalized Green's function of $\Delta$ on the tangent space of $S_N$ defined in \cref{eqn:Fourier-Serie-Tangent-Space}, namely:
 \begin{equation*}
  \literalref{eqn:Fourier-Serie-Tangent-Space}
 \end{equation*}

 To this end, let define: $\forallt k\in [\![-N,N]\!]^d$
 \begin{equation}
  e_k:\left\{\begin{array}{lll}
                 [0,1]^d&\to&\CC\\
                 x&\mapsto&e^{2i\pi \left(\sum_{l=1}^dk_l x_l\right)}
                \end{array}\right.
 \end{equation}
 and compute: $\forallt k\in [\![-N,N]\!]^d$, $\forallt 1\leq m\leq d$ %
 \begin{align*}
  \frac{\partial^2}{\partial x_m^2}~ e_k = -(2\pi)^2 k_m^2 e_k,
 \end{align*}
 then: $\forallt k\in [\![-N,N]\!]^d$
 \begin{align*}
  \Delta~ e_k = -(2\pi)^2 \left(\sum_{m=1}^d k_m^2\right) e_k,
 \end{align*}
 and thus: $\forallt k_1, k_2\in [\![-N,N]\!]^d$
 \begin{align*}
  G_{k_1,k_2}:=\braket{\Delta~ e_{k_1}}{\Delta~ e_{k_2}} = (2\pi)^4 \left(\sum_{m=1}^d {k_1}_m^2\right)^2 \delta_{k_1,k_2},
 \end{align*}
 where $\delta_{k_1,k_2}$ is the Kronecker symbol such that $\delta_{k_1,k_2}=1$ if and only if $k_1=k_2$.
 This implies that:
 \begin{align*}
  G_{k_1,k_2}^\dagger = (2\pi)^{-4} \left(\sum_{m=1}^d {k_1}_m^2\right)^{-2} \left(1-\delta_{k_1,0}\right)\delta_{k_1,k_2},
 \end{align*}
 yielding:
 \begin{align*}
  g_{T_\vtheta\cM}(x,y)&=\sum_{k_1,k_2\in [\![-N,N]\!]^d} e_{k_1}(x) G_{k_1,k_2}^\dagger \overline{\Delta\, e_{k_2}}(y)\\
  &=\sum_{k_1,k_2\in [\![-N,N]\!]^d}e_{k_1}(x) (2\pi)^{-4} \left(\sum_{m=1}^d {k_1}_m^2\right)^{-2} \left(1-\delta_{k_1,0}\right)\delta_{k_1,k_2} \left(-(2\pi)^2 \left(\sum_{m=1}^d {k_2}_m^2\right) \overline{e_{k_2}}(y)\right)\\
  &=\frac{-1}{(2\pi)^{2}}\sum_{k\in [\![-N,N]\!]^d\backslash\{0\}}\frac{e_k(x)\,\overline{e_k}(y)}{\sum_{m=1}^d k_m^2}=\frac{-1}{(2\pi)^{2}}\sum_{k\in [\![-N,N]\!]^d\backslash\{0\}}\frac{\exp\left(2i\pi \left(\sum_{l=1}^dk_l (x_l-y_l)\right)\right)}{\sum_{l=1}^d k_l^2}
 \end{align*}

 Finally, in order to add some consistency to our illustration, let us show for $d=1$ that in the limit $N\to\infty$, we indeed find the classical Green's function of the Laplacian operator. In this case let us first remark, that we have:
 \begin{align*}
  g_{T_\vtheta\cM}(x,y)&=\frac{-1}{(2\pi)^{2}}\sum_{k\in [\![-N,N]\!]\backslash\{0\}}\frac{\exp\left(2i\pi k (x-y)\right)}{k^2}
  =\frac{-1}{2\pi^{2}}\sum_{k=1}^N \frac{\cos(2\pi k(x-y))}{k^2}.
 \end{align*}
 Since $|\cos(2\pi k(x-y))|\leq 1$, the serie is absolutly convergent and thus in the limit $N\to\infty$, we have:
 \begin{align*}
  g_{\infty}(x,y):=\lim\limits_{N\to\infty}g_{T_\vtheta\cM}(x,y)&=\frac{-1}{2\pi^{2}} C_2(2\pi (x-y)), \textrm{ with } C_2(z)=\sum_{k=1}^{+\infty} \frac{\cos(kz)}{k^2}
 \end{align*}
 From \citet[page 1005]{abramowitzHandbookMathematicalFunctions1968}, we know that $\forallt z\in (0,2\pi)$:
 \begin{align*}
  C_2(z) = \frac{\pi^2}{6}-\frac{\pi z}{2}+\frac{z^2}{4},
 \end{align*}
 Thus, thanks to the parity of $C_2$ we have $\forallt (x-y)\in (0,1)$:
 \begin{align*}
  g_{\infty}(x,y)&=\frac{-1}{2\pi^{2}}\left(\frac{\pi^2}{6}-\frac{\pi 2\pi \vert x-y\vert}{2}+\frac{\left(2\pi (x-y)\right)^2}{4}\right)
  =-\frac{1}{12}+\frac{\vert x-y\vert}{2}-\frac{(x-y)^2}{2}. %
 \end{align*}
which is indeed a Green function for the Laplacian in $1$d.

 \subsection{Illustration of estimated Green's function for PINNs}
 \subsubsection{Green's function plots of Laplace equation in 2\,D}
    \begin{figure}[H]
     \centering
     \includegraphics[height=.38\textheight]{"./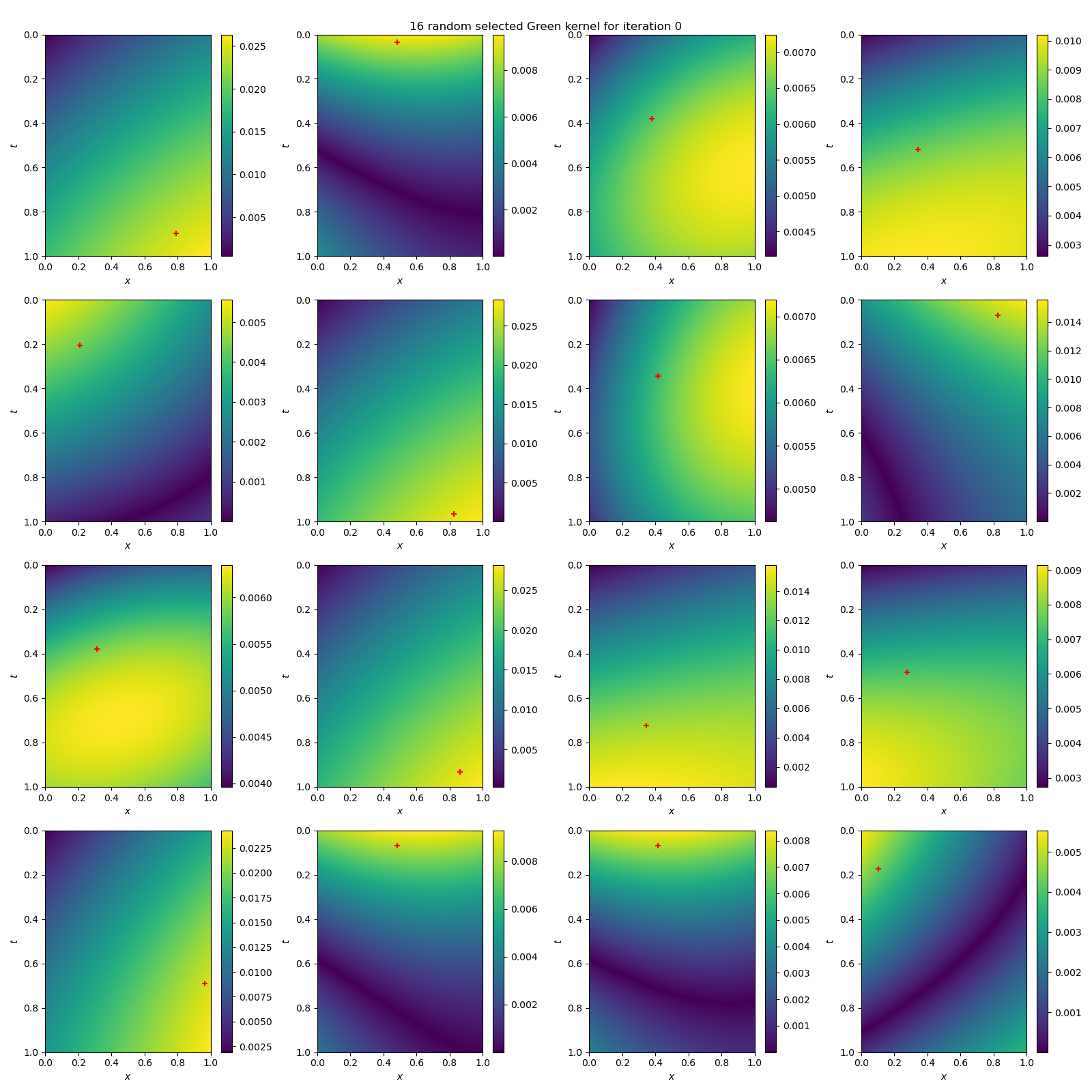"}
     \captionsetup{singlelinecheck=off}
     \caption{Green's function of the operator on the tangent space at initialization for Laplace equation in 2\,D. Reading: the red cross on each subfigure representing a point $x_i$, the plot represents the function $g_{T_{\vtheta_0}\cM}(\cdot, x_i)$}
    \end{figure}
    \begin{figure}[H]
     \centering
     \includegraphics[height=.38\textheight]{"./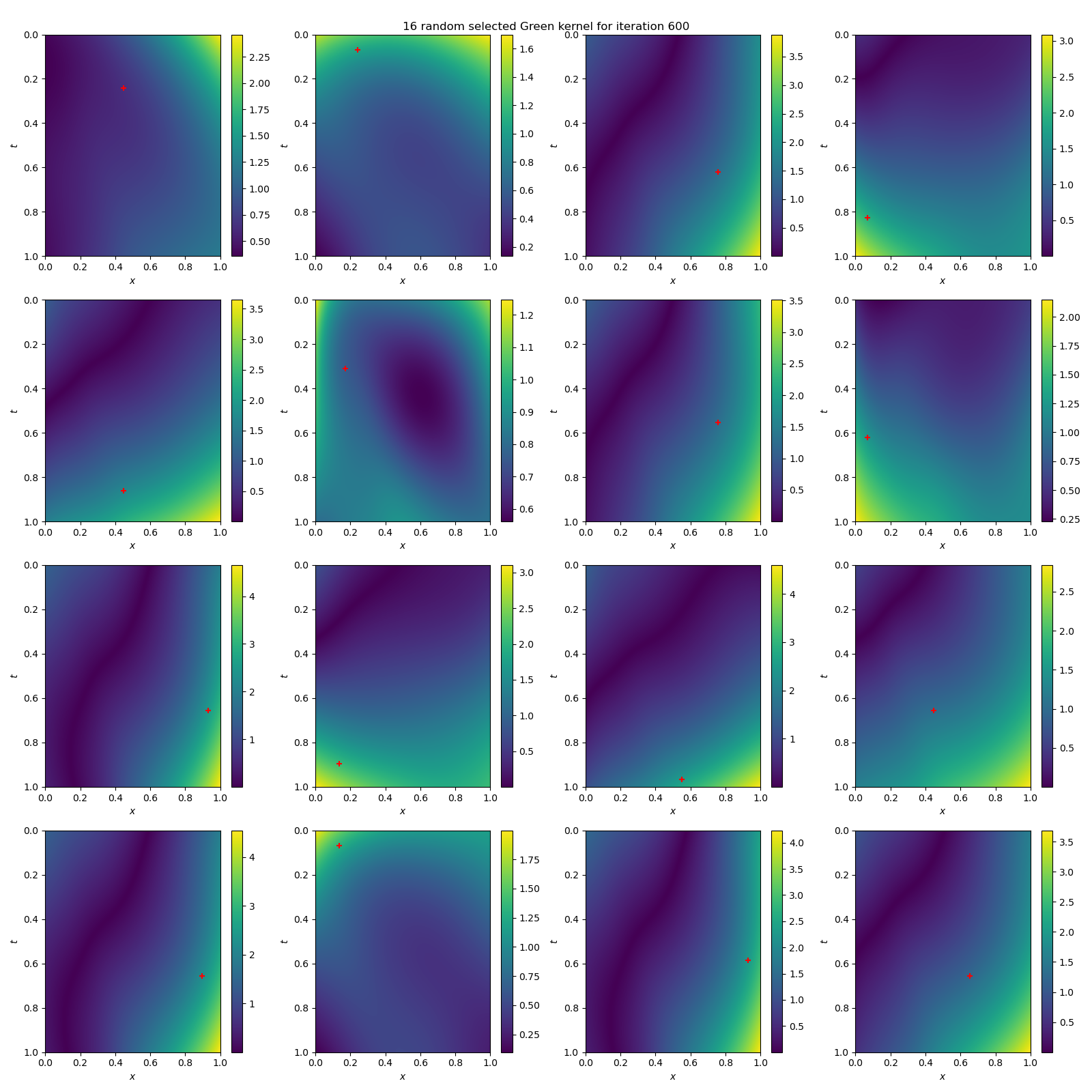"}
     \captionsetup{singlelinecheck=off}
     \caption{Green's function of the operator on the tangent space at the end of optimization for Laplace equation in 2\,D. Reading: the red cross on each subfigure representing a point $x_i$, the plot represents the function $g_{T_{\vtheta_{\rm end}}\cM}(\cdot, x_i)$}
    \end{figure}

\subsubsection{Green's function plots of Heat equation in 1+1\,D}
    \begin{figure}[H]
     \centering
     \includegraphics[height=.38\textheight]{"./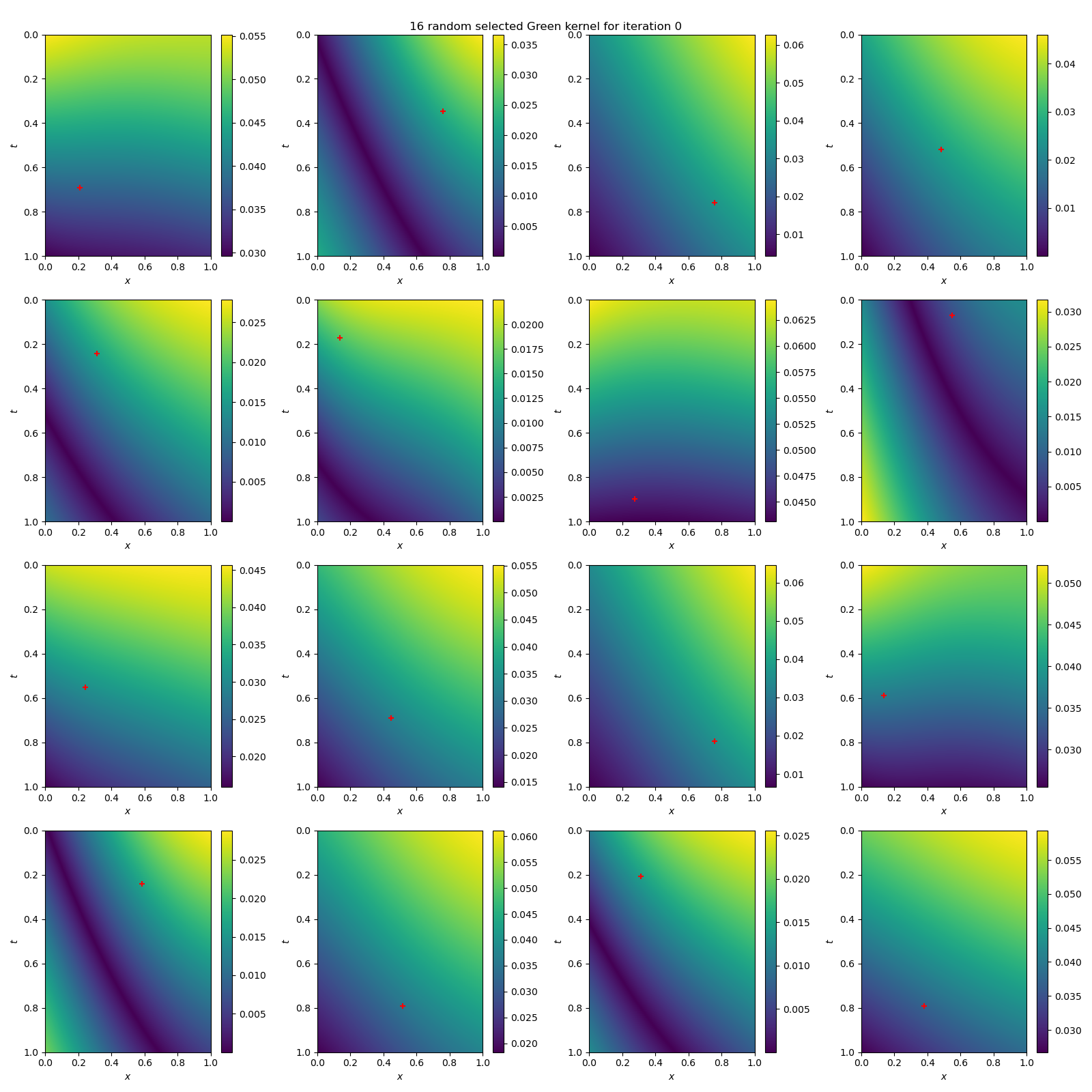"}
     \captionsetup{singlelinecheck=off}
     \caption{Green's function of the operator on the tangent space at initialization for Heat equation in 1+1\,D. Reading: the red cross on each subfigure representing a point $x_i$, the plot represents the function $g_{T_{\vtheta_0}\cM}(\cdot, x_i)$}
    \end{figure}
    \begin{figure}[H]
     \centering
     \includegraphics[height=.38\textheight]{"./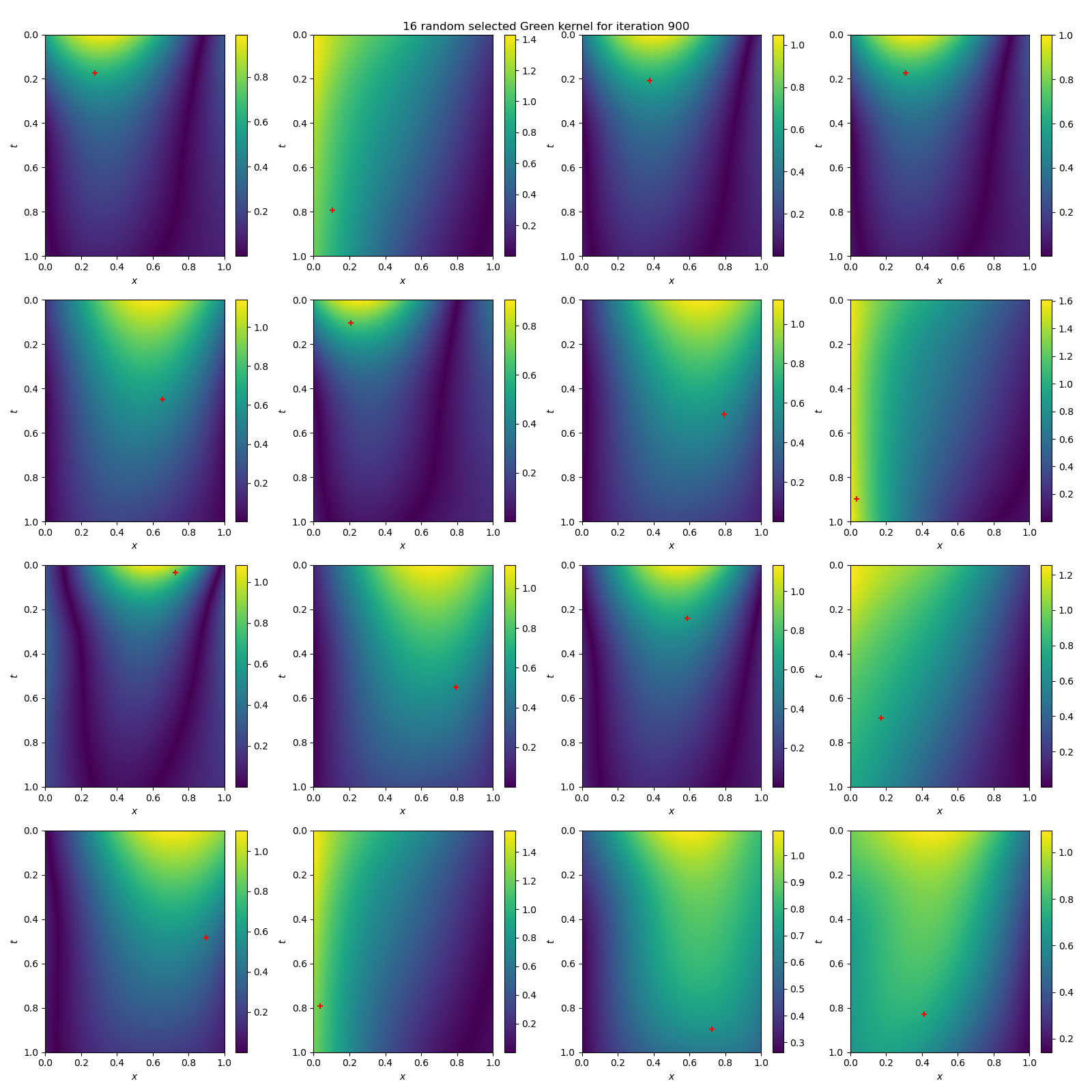"}
     \captionsetup{singlelinecheck=off}
     \caption{Green's function of the operator on the tangent space at the end of optimization for Heat equation in 1+1\,D. Reading: the red cross on each subfigure representing a point $x_i$, the plot represents the function $g_{T_{\vtheta_{\rm end}}\cM}(\cdot, x_i)$}
    \end{figure}

 \section{Mathematical equivalence of ANaGRAM and Energy Natural Gradient implementation \citep{mullerAchievingHighAccuracy2023} for linear operators}\label{sec-app:equivalence-zeinhofer}

 Let us begin by briefly recalling the definition of Energy Natural Gradient, as introduced in \citet{mullerAchievingHighAccuracy2023}. Given an Energy (Equation 2 in \citet{mullerAchievingHighAccuracy2023}):
 \begin{equation}\label{eqn:zeinhofer-energy}
    E(u) =  \int_\Omega (\mathcal D[u] - f)^2
    \mathrm{d}x + \tau \int_{\partial\Omega} (B[u]-g)^2\mathrm ds,
 \end{equation}
 associated to operators $D:\cH\to\dL^2(\Omega\to\RR,\mu)$, $B:\cH\to\dL^2(\partial\Omega\to\RR,\sigma)$, and a parametric model $u:\RR^P\to\cH$, we have the following definition (Definition 1 in \citet{mullerAchievingHighAccuracy2023}):
 \begin{Def}[Energy Natural Gradient]\label{def:ENG}
  Consider the problem $\min_{\vtheta\in\mathbb R^P} L(\vtheta)$, where
  \begin{equation}\label{eqn:energy-loss}
   L(\vtheta):=E(u_{|\vtheta}).
  \end{equation}
  Denote the Euclidean gradient by $\nabla L(\vtheta)$. Then we call
  \begin{equation}\label{eqn:ENG-definition}
   \nabla^E L(\vtheta) \coloneqq G_E^\dagger(\vtheta)\nabla L(\vtheta),
  \end{equation}
  the \emph{energy natural gradient (E-NG)}, where $G_E$ is a Gram matrix defined by: $\forallt 1\leq p,q\leq P$
  \begin{equation}\label{eq:gramEnergy}
   G_E(\vtheta)_{pq} \coloneqq D^2E(u_\vtheta)(\partial_{\vtheta_p} u_\vtheta, \partial_{\vtheta_q} u_\vtheta),
  \end{equation}
  with $D^2E$ being the second derivative of $E$ in the Fréchet sense.
\end{Def}
As noted in Equation (9) of \citet{mullerAchievingHighAccuracy2023}, in the case where $D$ and $B$ are linear, \cref{eq:gramEnergy} reduces to:$\forallt 1\leq p,q\leq P$
\begin{align}
    \begin{split}
        G_E(\vtheta)_{pq}
        =
        \int_\Omega D[\partial_{\vtheta_p}u_\vtheta](x) D[\partial_{\vtheta_q}u_\vtheta](x)\mathrm dx
        +
        \tau \int_{\partial\Omega} B[\partial_{\vtheta_p}u_\vtheta](s) B[\partial_{\vtheta_q}u_\vtheta](s)\mathrm ds,
    \end{split}\label{eqn:linear-energy-gram}
\end{align}
Note that in this case, by setting $\tau=1$ and taking $\mu$ and $\sigma$ uniform, this corresponds exactly to the Gram matrix:$\forallt 1\leq p,q\leq P$
\begin{align}\label{eqn:linear-anagram-gram}
 {G_\vtheta}_{pq}:=\braket{\partial_p \big((D,B)\circ u\big)_{|\vtheta}}{\partial_q \big((D,B)\circ u\big)_{|\vtheta}}_{\dL^2(\Omega,\partial\Omega)},
\end{align}
where the compound model $(D,B)\circ u$ and the space $\dL^2(\Omega,\partial\Omega)$ are those introduced in \cref{eqn:coumpound-model-definition}.
Now the key element lies in the sentence quoted from page 6, section 4.1 of \citet{mullerAchievingHighAccuracy2023} (which is confirmed in practice by the code):
\begin{quote}
 ``The integrals in (15) are computed using the same collocation points as in the definition of the PINN loss function $L$ in (14).''
\end{quote}
This means that the same set of collocation points is used to evaluate the integrals defining $L(\vtheta)$ in \cref{eqn:energy-loss} and the integrals defining $G_E(\vtheta)$ in \cref{eqn:linear-energy-gram}.

Let us fix some collocations points $(x^D_i)_{i=1}^{S_D}$ in $\Omega$, and $(x^B_i)_{i=1}^{S_B}$ in $\partial\Omega$. Then \cref{eqn:energy-loss} discretization exactly corresponds up to factor $\frac{1}{2}$, to the scalar loss of PINNs defined in \cref{eqn:empirical-loss-of-pinns}:
\begin{equation*}
 \literalref{eqn:empirical-loss-of-pinns}.
\end{equation*}
The Euclidean gradient $\nabla L(\vtheta)$ is then approximated by the Euclidean gradient $\nabla \ell(\vtheta)$ given by: $\forallt 1\leq p\leq P$
\begin{align}
 \begin{split}
 \left(\nabla \ell(\vtheta)\right)_p
 =&\frac{1}{S_D}\sum_{i=1}^{S_D} \partial_p D[u_{|\vtheta}](x^D_i)\left(D[u_{|\vtheta}](x^D_i)-f(x^D_i)\right)\\
 &+ \frac{1}{S_B}\sum_{i=1}^{S_B} \partial_pB[u_{|\vtheta}](x^B_i)\left(B[u_{|\vtheta}](x^B_i)-g(x^B_i)\right)
 \end{split}\label{eqn:ell-gradient}
\end{align}
Let us define:
\begin{itemize}
 \item $\forallt 1\leq p\leq P$, $\forallt 1\leq i\leq S_D$, and $\forallt 1\leq j\leq S_B$:
 \begin{align}
  \mbox{$\widehat{\phi}^D_\vtheta$}_{p, i}&:=\partial_p D[u_{|\vtheta}](x^D_i)
  ;&
  \mbox{$\widehat{\phi}^B_\vtheta$}_{p, j}&:=\partial_p B[u_{|\vtheta}](x^B_j),
 \end{align}
 \item $\forallt 1\leq i\leq S_D$ and $\forallt 1\leq j\leq S_B$:
 \begin{align}
  \mbox{$\widehat{\nabla\cL}^D_\vtheta$}_{i}&:=D[u_{|\vtheta}](x^D_i)-f(x^D_i);&
  \mbox{$\widehat{\nabla\cL}^B_\vtheta$}_{j}&:=B[u_{|\vtheta}](x^B_j)-g(x^B_j),
 \end{align}
\end{itemize}
and finally:
\begin{align}
  \widehat{\phi}_\vtheta&:=\begin{pmatrix}
     \widehat{\phi}^D_\vtheta, &
     \widehat{\phi}^B_\vtheta
    \end{pmatrix};&
    \widehat{\nabla\cL}_\vtheta&:=\begin{pmatrix}
     \widehat{\nabla\cL}^D_\vtheta\\
     \widehat{\nabla\cL}^B_\vtheta
    \end{pmatrix};&
    \widehat{\Lambda}&:=
    \begin{pmatrix}
     \frac{1}{S_D}\mI_{S_D}& 0\\
     0&\frac{1}{S_B}\mI_{S_B}
    \end{pmatrix}.
 \end{align}
 Thus \cref{eqn:ell-gradient} can be rewritten as:
 \begin{equation}
  \nabla \ell(\vtheta)=\widehat{\phi}_\vtheta
  \widehat{\Lambda}
  \widehat{\nabla\cL}_\vtheta.
 \end{equation}
 But since the same set of collocation points is used to evaluate the integrals defining $G_E(\vtheta)$, we also have that \cref{eq:gramEnergy} is discretized by:
 \begin{equation}
  G_E(\vtheta)\simeq \widetilde{G_E(\vtheta)}:=\widehat{\phi}_\vtheta
  \widehat{\Lambda}
  \widehat{\phi}_\vtheta^t.
 \end{equation}
 This implies that \cref{eqn:ENG-definition} is approximated by:
 \begin{equation}
  \nabla^E L(\vtheta)
  \simeq \widetilde{\nabla^E L(\vtheta)}\coloneqq \widetilde{G_E(\vtheta)}^\dagger \nabla \ell(\vtheta)
  =\widehat{\phi}_\vtheta^\dagger\widehat{\nabla\cL}_\vtheta,
 \end{equation}
 which corresponds indeed to the update direction in line~\ref{algl:anagrma-pinns-update-direction} in ANaGRAM algorithm~\ref{alg:anagram-pinns}.

 To conclude, it should be noted that when $D$ or $B$ are not linear, then the equivalence between \cref{eqn:linear-energy-gram} and \cref{eqn:linear-anagram-gram} not longer holds, with the result that E-NGD and ANaGRAM are no longer equivalent either.
\end{document}